
\documentclass[journal,a4paper,10pt]{IEEEtran}
\usepackage{amsmath,amsthm,amssymb,amsfonts,upref,cite,epsf,color}
\usepackage{pst-all,pstricks,pst-node,pst-text,pst-3d}
\usepackage{hyperref}
\usepackage{pstricks-add}
\usepackage{multido}
\usepackage{bm}
\usepackage{graphicx,epsfig,rotating} 
\usepackage{psfrag,texdraw,bm}
\usepackage{nomencl}
\usepackage{amsmath}
\usepackage{amssymb}
\usepackage{subfigure}
\usepackage{xspace}
\usepackage{dsfont}
\usepackage{pifont}
\usepackage{listings,color}   
\lstset{language=Matlab}
\usepackage[T1]{fontenc}
\usepackage{textcomp}
\usepackage{framed}


\newcommand{\transp}[1]{{#1}^T} 


\newcommand\coefflen{p}
\newcommand{\measlen}{m} 
\newcommand{\samplesize}{N}
\newcommand{\sparsity}{s}

\newcommand{\obsidx}{k}
\newcommand{\dicidx}{l}

\newcommand\defeq{\triangleq}


\newcommand{\noisevec}{\mathbf{n}}
\newcommand{\noisemtx}{\mathbf{N}}
\newcommand{\coeffvec}{\mathbf{x}}
\newcommand{\coeffmtx}{\mathbf{X}}

\newcommand\vect[1]{\mathbf #1}

\newcommand{\vd}{\vect{d}}

\newcommand{\vx}{\vect{x}}  
\newcommand{\vy}{\vect{y}}  
\newcommand{\vz}{\vect{z}}

\newcommand{\mD}{\vect{D}}

\newcommand{\mN}{\vect{N}}

\newcommand{\mX}{\vect{X}}
\newcommand{\mY}{\vect{Y}}

\newcommand{\minimaxrisk}{\varepsilon^{*}}

\usepackage[applemac]{inputenc}


\topmargin      -18.0mm
\oddsidemargin     -8mm
\evensidemargin    -8mm
\textheight     245mm
\textwidth      175.0mm
\columnsep        4.1mm
\parindent        1.6em
\headsep          6.3mm
\headheight        12pt
\lineskip           1pt
\normallineskip     1pt
\def\baselinestretch{1}
\renewcommand{\baselinestretch}{1.6}\small\normalsize
\def \expect {{\rm E} }
\def \prob {{\rm P} }
\def \twiddle[#1] {e^{-j \frac{2 \pi}{N}  #1 }}
\def \twiddleneg[#1] {e^{j \frac{2 \pi}{N}  #1 }}





\DeclareMathOperator{\supp}{supp}

\DeclareMathOperator*{\rank}{rank}

\DeclareMathOperator*{\argmin}{argmin}

\DeclareMathOperator*{\trace}{Tr}

\newtheorem{theorem}{Theorem}[section]

\newtheorem{lemma}[theorem]{Lemma}

\newtheorem{corollary}[theorem]{Corollary}
\newtheorem{algorithm}{Algorithm}

\def\ML_est{\hat{\mathbf{x}}_{\text{ML}}}
\newcommand{\CRBfull}{{C}ram\'{e}r--{R}ao bound\xspace}

\newcommand{\be}{\begin{equation}}
\newcommand{\ee}{\end{equation}}

\newcommand{\rmv}{\hspace*{-.2mm}}

\newcommand{\condmiyandlgivenx}{I(\mathbf{Y};\dicidx| \mathbf{T}(\mathbf{X}))}


\linespread{1}%

\allowdisplaybreaks

\makeatother

\usepackage[onehalfspacing]{setspace}

\parindent 1.0em

\onecolumn
\begin{document}

\title{On the Minimax Risk of Dictionary Learning}
%
\author{\emph{Alexander Jung$\rmv^{a}\rmv$, Yonina C. Eldar$^{b}\rmv\rmv$, Norbert G\"{o}rtz$^{a}\rmv\rmv$
} 

{\normalsize $^a$ Institute of Telecommunications, Vienna University of Technology, Austria; ajung@nt.tuwien.ac.at\\[-0.5mm]}
{
\normalsize $^b$Technion---Israel Institute of Technology, Israel; e-mail: yonina@ee.technion.ac.il}
\thanks{Parts of this work were previously presented at the 22nd European Signal Processing Conference, Lisbon, PT, Sept.\ 2014.}

}

\maketitle




\begin{abstract}
We consider the problem of learning a dictionary matrix from a number of observed signals, which are assumed to be generated via a linear model with a common underlying dictionary. In particular, we derive lower bounds on the minimum achievable {worst case} mean squared error (MSE), regardless of computational complexity of the dictionary learning (DL) schemes. By casting DL as a classical (or frequentist) estimation problem, the lower bounds on the {worst case} MSE are derived by following an established information-theoretic {approach to minimax estimation}. 
The main conceptual contribution of this paper is the adaption of the information-theoretic approach to minimax estimation for the DL problem {in order to derive lower bounds on the worst case MSE of any DL scheme. We derive three different lower bounds applying to different generative models for the observed signals. The first bound applies to a wide range of models, it only requires the 
existence of a covariance matrix of the (unknown) underlying coefficient vector. By specializing this bound to the case of sparse coefficient distributions, and assuming the true dictionary 
satisfies the restricted isometry property, we obtain a lower bound on the worst case MSE of DL schemes in terms of a signal to noise ratio (SNR). The third bound applies to a more restrictive subclass of coefficient distributions by requiring the non-zero coefficients to be Gaussian. While, compared with the previous two bounds, the applicability of this final bound is the most limited it is the tightest of the three bounds in the low SNR regime}. A particular use of our lower bounds is the derivation of necessary conditions on the required number of observations (sample size) such that DL is feasible, i.e., {accurate DL schemes might exist}. 
By comparing these necessary conditions with sufficient conditions on the sample size such that a particular DL scheme is successful, we are able to characterize the regimes where those algorithms 
are optimal (or {possibly} not) in terms of required sample size. 
\end{abstract}
\begin{keywords}Compressed Sensing, Dictionary Learning, Minimax Risk, Fano Inequality.

\end{keywords}

\section{Introduction}
\label{sec_intro} 

\vspace{-.2mm}

According to \cite{ciscozettabyte2014}, the worldwide internet traffic in $2016$ will exceed the Zettabyte threshold.\footnote{One Zettabyte equals $10^{21}$ bytes.} 
In view of the pervasive massive datasets generated at an ever increasing speed \cite{DataDelEconomist,DataEverywhere}, it is mandatory to be able to extract relevant information out of the observed data. 
A recent approach to this challenge, which has proven extremely useful for a wide range of applications, is \emph{sparsity} and the related theory of \emph{compressed sensing} (CS) \cite{Don06,Can06,EldarKutyniokCS}. 
In our context, sparsity means that the observed signals can be represented by a linear combination of a small number of prototype functions or atoms. In {many} applications the set of atoms is pre-specified and stored in a dictionary matrix. However, in some applications it might be necessary or beneficial to adaptively determine a dictionary based on the observations \cite{argemo13,protter09,Peyre2009}. The task of adaptively determining the underlying dictionary matrix is referred to as \emph{dictionary learning} (DL). 
DL {has been considered} for a wide range of applications{, such as image processing \cite{OlshausenField97,TosicFrossard2011,Turkan2011,MairalBach2009,MairalEladSapiro2008}, blind source separation \cite{ZibPearl2001}, sparse principal component analysis \cite{JenObozBachSSPCA}, and more}. 

In this paper, we consider observing $\samplesize$ signals $\mathbf{y}_{\obsidx} \in \mathbb{R}^{\measlen}$ generated via a {fixed (but unknown)} underlying dictionary 
$\mathbf{D} \! \in\! \mathbb{R}^{\measlen\times \coefflen}$ (which we would like to estimate). More precisely, the observations $\mathbf{y}_{\obsidx}$ are modeled as noisy linear combinations
\begin{equation} 
\label{equ_single_linear_model}
\mathbf{y}_{\obsidx} = \mathbf{D} \coeffvec_{\obsidx} + \noisevec_{\obsidx},
\end{equation} 
{where $\noisevec_{\obsidx}$ is assumed to be zero-mean with i.i.d. components of variance $\sigma^{2}$.} 
{To formalize the estimation problem underlying DL, 
we assume the coefficient vectors $\vx_{\obsidx}$ to be zero-mean random vectors with finite covariance matrix ${\bm \Sigma}_{x}$. We highlight that our first main result, i.e., Theorem \ref{thm_main_result} applies to a very wide class of coefficient distributions since it only requires a finite covariance matrix ${\bm \Sigma}_{x}$. In particular, Theorem \ref{thm_main_result} also applies to non-sparse random coefficient vectors. However, the main focus of our paper (in particular, for Corollary \ref{cor_main_result_sparse_vectors} and Theorem \ref{thm_main_result_sparse_coeff}) will be on distributions such that the coefficient vector $\vx_{\obsidx}$ is strictly $\sparsity$-sparse with probability one.} 
In this work, we analyze the difficulty inherent to the problem of estimating the true dictionary $\mathbf{D} \in \mathbb{R}^{\measlen \times \coefflen}$, which is deterministic but unknown, from the measurements  $\vy_{\obsidx}$, which {are generated according to the linear model \eqref{equ_single_linear_model}.}

{
If we stack the observations $\mathbf{y}_{\obsidx}$, for $\obsidx=1,\ldots,\samplesize$, column-wise into the data matrix $\mathbf{Y} \in \mathbb{R}^{\measlen \times \samplesize}$, one can cast DL as a matrix factorization problem \cite{BachMairalPonce2008}. 
Given the data matrix $\mathbf{Y}$, we aim to find a dictionary matrix $\mathbf{D} \in \mathbb{R}^{\measlen \times \coefflen}$ such that 
\begin{equation}
\label{equ_factorizing_expansion_noise}
\mathbf{Y} = \mathbf{D} \coeffmtx + \noisemtx
\end{equation} 
where the column sparse matrix $\coeffmtx \in \mathbb{R}^{\coefflen \times \samplesize}$ contains in its $\obsidx$th column the sparse expansion coefficients $\mathbf{x}_{\obsidx}$ 
of the signal $\mathbf{y}_{\obsidx}$. The noise matrix $\noisemtx=\big(\noisevec_{1},\ldots,\noisevec_{\samplesize}\big) \in \mathbb{R}^{\measlen \times \samplesize}$ accounts for small modeling and measurement errors.} 

\paragraph{{Prior Art}} 
A plethora of DL methods have been proposed and analyzed in the literature (e.g., \cite{Schnass2014,KSVD,Jenatton2012,GribonvalSchnass2010,YaghDicLearn2009,BilinAMPDer,BilinAMPApp,spwawr12,argemo13,aganjaneta13}). 
In a Bayesian setting, i.e., modeling the dictionary as random with a known prior distribution, {the authors of} \cite{BilinAMPDer,BilinAMPApp,Krzakala13} devise a variant of the \emph{approximate message passing} scheme \cite{AMP2009PNAS} to the DL problem. 
The authors of \cite{KSVD,Jenatton2012,GribonvalSchnass2010,YaghDicLearn2009,Mairal2009ICML} model the dictionary as non-random and 
estimate the dictionary by solving the (non-convex) optimization problem
\begin{equation}
\label{equ_minimization_ell_1_dic_learning}
\min_{{\mathbf{D} \in \mathcal{D},\coeffmtx \in \mathbb{R}^{\coefflen \times \samplesize}}}  \| \mathbf{Y} \!-\! \mathbf{D} \coeffmtx \|_{F}^{2} + \lambda \|\coeffmtx\|_{1},
\end{equation}
where $\|\coeffmtx\|_{1} \defeq \sum_{k,l} |\coeffmtx_{k,l}|$ {and $\mathcal{D} \subseteq \mathbb{R}^{\measlen \times \coefflen}$ denotes a constraint set, e.g., requiring}
{the columns of the learned dictionary to have unit norm.} The term $\lambda \|\coeffmtx\|_{1}$ (with sufficiently large $\lambda$) in the objective \eqref{equ_minimization_ell_1_dic_learning} 
{enforces the columns of the coefficient matrix $\coeffmtx$ to be (approximately) sparse}.

Assuming the true dictionary {$\mD\!\in\!\mathbb{R}^{\measlen \times \coefflen}$} deterministic but unknown (its size {$\coefflen$} however is known) and the observations $\mathbf{y}_{\obsidx}$ are i.i.d. according to the 
model {\eqref{equ_single_linear_model}}, the authors of \cite{KSVD,Jenatton2012,GribonvalSchnass2010} {provide} upper bounds on the distance {between} the generating 
dictionary {$\mD$} and the closest local minimum of \eqref{equ_minimization_ell_1_dic_learning}. For the square (i.e., $\coefflen = \measlen$) and noiseless ($\mathbf{N}=\mathbf{0}$) setting, \cite{GribonvalSchnass2010} showed that $\samplesize = \mathcal{O}( \coefflen \log( \coefflen))$ observations suffice to guarantee that the dictionary is a local minimum of \eqref{equ_minimization_ell_1_dic_learning}. Using the same setting (square dictionary and noiseless measurements), \cite{spwawr12} proved the scaling $\samplesize = \mathcal{O}(\coefflen \log( \coefflen))$, for arbitrary sparsity level, to be actually sufficient such that the dictionary {matrix} can be recovered {perfectly} from the {measurements $\vy_{\obsidx}$}.\footnote{With high probability and up to scaling and permutations of the {dictionary} columns.} {Our analysis, in contrast, takes measurement noise into account and yields lower bounds on the required sample size in terms of SNR.} 
{While the results on the square-dictionary and noiseless case are theoretically important, their practical relevance is limited.} 
Considering the practically more relevant case of an overcomplete ($\coefflen > \measlen$) dictionary $\mathbf{D}$ and noisy measurements ($\mathbf{N} \neq \mathbf{0}$), the authors of \cite{Jenatton2012} show that a sample size of $\samplesize = \mathcal{O}(\coefflen^{3} \measlen)$ i.i.d.\ {measurements} $\vy_{\obsidx}$ suffices for the existence of a local minimum of the cost function in \eqref{equ_minimization_ell_1_dic_learning} which is close to 
the true dictionary {$\mD$}. 
 
By contrast to methods based on solving \eqref{equ_minimization_ell_1_dic_learning}, a recent line of work \cite{spwawr12,argemo13,aganjaneta13} presents DL methods based on {(graph-)clustering} techniques. In particular, the set of observed samples $\vy_{\obsidx}$ is clustered such that the elements within each cluster share a single generating column $\mathbf{d}_{j}$ of the underlying dictionary. 
The authors of \cite{aganjaneta13} show that a sample size $\samplesize = \mathcal{O}(\coefflen^{2} \log \coefflen)$ suffices for their clustering-based method to accurately recover the true underlying dictionary. 
However, this result applies only for sufficiently incoherent dictionaries $\mathbf{D}$ and for the case of vanishing sparsity rate, i.e., {$\sparsity/\coefflen \rightarrow 0$}. {The 
scaling of the required sample size with the square of the number $\coefflen$ of dictionary columns (neglecting logarithmic terms) is also predicted by our bounds. What sets our 
work apart from \cite{aganjaneta13} is that we state our results in a non-asymptotic setting, i.e., our bounds can be evaluated for any given number $\coefflen$ of dictionary atoms, 
dimension $\measlen$ of observed signals and nominal sparsity level $\sparsity$.}


{Although numerous DL schemes have been proposed and analyzed, existing analyses typically yield sufficient conditions (e.g., on the sample size $\samplesize$) such that DL is feasible. In contrast,  necessary conditions which apply to any DL scheme (irrespective of computational complexity) are far more limited. We are only aware}
of a single fundamental result that applies to a Bernoulli-Gauss prior for the coefficient vectors $\vx_{k}$ in \eqref{equ_single_linear_model}: This result, also known as the ``coupon collector phenomenon'' \cite{spwawr12}, states that in order to have every column $\mathbf{d}_{j}$ of the dictionary contributing in at least one observed signal (i.e., the corresponding entry $x_{k,j}$ of the coefficient vector in \eqref{equ_single_linear_model} is non-zero) the sample size has to scale linearly with 
$(1/\theta) \log \coefflen$ where $\theta$ denotes the probability $\prob \{ x_{k,j} \neq 0 \}$. For the choice $\theta = \sparsity/\coefflen$, which yields {$\sparsity$-sparse} coefficient vectors with high probability, this requirement {effectively} becomes {$\samplesize \geq c_{1} (\coefflen/\sparsity) \log \coefflen$, with some absolute constant $c_{1}$}. 

\paragraph{{Contribution}} 
In this paper we contribute to the understanding of necessary conditions or fundamental recovery thresholds for DL, by deriving lower bounds on the minimax risk for the DL problem. We define the risk incurred by a DL scheme as the mean squared error (MSE) using the Frobenius norm of the deviation from the true underlying dictionary. Since the minimax risk is defined as the minimum achievable worst case MSE, our lower bounds apply to the worst case MSE of any algorithm, regardless of its computational complexity. This paper seems to contain the first analysis that targets directly the fundamental limits on the achievable MSE of any DL method. 

For the derivation of the lower bounds, we apply an established information-theoretic approach (cf.\ Section \ref{SecProblemFormulation}) to 
minimax estimation, which is based on reducing a specific multiple hypothesis problem to minimax estimation of the dictionary {matrix}. 
Although this {information-theoretic} approach has been successfully applied to several other (sparse) {minimax} estimation problems \cite{WangWain2010,Wain2009TIT,CandesDavenport2013,Santhanam2012,CaiZhouSparseCov}, the adaptation  of this method to the problem of DL seems to be new. The lower bounds on the minimax risk give insight into the dependencies of the achievable worst case MSE on the model parameters, i.e., the sparsity $\sparsity$, the dictionary size $\coefflen$, the dimension $\measlen$ of the observed signal {and the SNR.} 
Our lower bounds on the minimax risk have direct implications on the required sample size of accurate DL schemes. 
{
In particular our analysis reveals that, for a sufficiently incoherent underlying dictionary, the minimax risk of DL is lower bounded by $c_{1}  \coefflen^2/({\rm SNR} \samplesize)$, where $c_{1}$ is some 
absolute constant. Thus, for a vanishing minimax risk it is necessary for the sample size $\samplesize$ to scale linearly with the square of the number $\coefflen$ of dictionary columns and 
inversely with the SNR.} 
Finally, by comparing our lower bounds {(on minimax risk and sample size)} with the performance guarantees of existing learning schemes, we can test if these methods {perform close to optimal}.

{A recent work on the sample complexity of dictionary learning \cite{VainManBruck11} presented upper bounds on the sample size such that the (expected) performance of an ideal learning scheme is close to its empirical performance observed when applied to the observed samples. 
While the authors of \cite{VainManBruck11} measure the quality of the estimate $\widehat{\mathbf{D}}$ via the residual error obtained when sparsely approximating the observed vectors $\vy_{\obsidx}$, 
we use a different risk measure based on the squared Frobenius norm of the deviation from the true underlying dictionary. Clearly, these two risk measures 
are related. Indeed, if the Frobenius norm $\| \widehat{\mathbf{D}} \!-\! \mathbf{D}\|_{\text{F}}$ is small, we can also expect that any sparse linear combination $\mathbf{D} \mathbf{x}$ using the dictionary $\mathbf{D}$ can also be well represented by a sparse linear combination $\widehat{\mathbf{D}}\mathbf{x}'$ using $\widehat{\mathbf{D}}$. 
Our results are somewhat complementary to the upper bounds in \cite{VainManBruck11} in that they yield lower bounds on the required sample size such that there may exist accurate learning schemes (regardless of computational complexity).
}


The remainder of this paper is organized as follows: 
We introduce the minimax risk of DL and the information-theoretic method for lower bounding it in Section \ref{SecProblemFormulation}. Lower bounds on the minimax risk for DL are presented in Section \ref{sec_main_result}. We also put our bounds into perspective by comparing their 
implications to the available performance guarantees of some DL schemes. Detailed proofs of the main results are contained in Section \ref{proof_architecture_main_result}. 

Throughout the paper, we use the following notation: 
Given a natural number $k \in \mathbb{N}$, we define the set $[k] \triangleq \{1,\ldots,k\}$. For a matrix $\mathbf{A} \in \mathbb{R}^{\measlen \times \coefflen}$, we denote its Frobenius norm and its spectral norm by 
$\| \mathbf{A} \|_{\text{F}} \triangleq \sqrt{\trace \{\mathbf{A}\mathbf{A}^{T}\}}$ and $\| \mathbf{A} \|_{\text{2}}$, respectively. {The open (Frobenius-norm) ball of radius $r>0$ and center $\mathbf{D} \in \mathbb{R}^{\measlen \times \coefflen}$ is denoted $\mathcal{B}(\mathbf{D},r) \defeq \{ \mathbf{D}' \in \mathbb{R}^{\measlen \times \coefflen} : \| \mathbf{D} - \mathbf{D}' \|_{\rm F} < r \}$.}
For a square matrix $\mathbf{A}$, the vector containing the elements along the diagonal of $\mathbf{A}$ is denoted ${\rm{diag}}\{\mathbf{A}\}$. Analogously, given a vector $\mathbf{a}$, we denote by ${\rm{diag}} \{ \mathbf{a} \}$ 
the diagonal matrix whose diagonal is obtained from $\mathbf{a}$. 
The $k$th column of the identity matrix is denoted $\mathbf{e}_{k}$. 
For a matrix $\coeffmtx \in \mathbb{R}^{\coefflen \times \samplesize}$, we denote by $\supp(\coeffmtx)$ the $\samplesize$-tuple $\big(\supp(\coeffvec_{1}),\ldots,\supp(\coeffvec_{\samplesize})\big)$ of subsets given by 
concatenating the supports $\supp(\coeffvec_{\obsidx})$ of the columns $\coeffvec_{\obsidx}$ of the matrix $\coeffmtx$. 
The complementary Kronecker delta is denoted $\bar{\delta}_{l,l'}$, i.e., $\bar{\delta}_{l,l'} = 0$ if $l=l'$ and equal to one otherwise. 
We denote by $\mathbf{0}$ the vector or matrix with all entries equal to $0$. 
The determinant of a square matrix $\mathbf{C}$ is denoted $|\mathbf{C}|$. {The identity matrix is written as $\mathbf{I}$ or $\mathbf{I}_{d}$ when the dimension $d \times d$ is not clear from the context.} 
Given a positive semidefinite (psd) matrix $\mathbf{C}$, we write its smallest eigenvalue as $\lambda_{\text{min}}(\mathbf{C})$. The natural and binary logarithm of a number $b$ are denoted $\log(b)$ and $\log_{2}(b)$, respectively. For two sequences $g(\samplesize)$ and $f(\samplesize)$, indexed by the natural number $\samplesize$, we 
write $g = \mathcal{O}(f)$ and $g = \Theta(f)$ if, respectively, $g(\samplesize) \leq C' f(\samplesize)$ and $g(\samplesize) \geq C'' f(\samplesize)$ for some constants $C',C'' > 0$. {If $g(\samplesize)/f(\samplesize) \rightarrow 0$, we write $g=o(f)$. We denote by $\expect_{\mathbf{X}}{f(\mathbf{X})}$ the expectation of the function $f(\mX)$ of the random vector (or matrix) $\mX$.}

\vspace{-.7mm}

\section{Problem Formulation}
\label{SecProblemFormulation}

\subsection{Basic Setup}
For our analysis we assume the observations $\vy_{\obsidx}$ are i.i.d. realizations according to the random linear model 
\begin{equation}
\label{equ_linear_model}
\vy = \mathbf{D} \mathbf{x} + \noisevec. 
\end{equation} 
Thus, the vectors $\vy_{\obsidx}$, $\coeffvec_{\obsidx}$ and $\noisevec_{\obsidx}$, for $\obsidx=1,\ldots,\samplesize$, in \eqref{equ_single_linear_model} 
are i.i.d. realizations of the random vectors $\vy$, $\coeffvec$ and $\noisevec$ in \eqref{equ_linear_model}. 
Here, the matrix $\mathbf{D}\!\in\! \mathbb{R}^{\measlen \times \coefflen}$, with $\coefflen \geq \measlen$, represents the deterministic but unknown underlying dictionary, 
whose columns are the building blocks of the observed signals $\vy_{\obsidx}$. 
The vector $\coeffvec$ represents zero mean random expansion coefficients, whose distribution is assumed to be known. 
Our analysis applies to a wide class of distributions. In fact, we only 
require the existence of the coveriance matrix 
\begin{equation} 
\label{equ_corr_coeffs_exists}
\mathbf{\Sigma}_{x} \defeq \expect_{{\vx}} \big\{ \mathbf{x} \mathbf{x}^{T}  \}.
\end{equation} 
The effect of modeling and measurement errors are captured by the noise vector $\noisevec$, which is assumed independent of $\coeffvec$ and is white Gaussian noise (AWGN) with zero mean and known variance $\sigma^{2}$. When combined with a sparsity enhancing prior on $\coeffvec$, the linear model \eqref{equ_linear_model} reduces to the sparse linear model (SLM) \cite{RKHSSLGMIT2012}, which is the workhorse of CS \cite{CandesCSTutorial,BaraniukCSTutorial,EldarKutyniokCS}. 
However, while the works on the SLM typically assume the dictionary $\mathbf{D}$ in \eqref{equ_linear_model} perfectly known, we consider the situation where $\mathbf{D}$ is unknown. 

In what follows, we assume the columns of the dictionary $\mathbf{D}$ to be normalized, i.e., 
\begin{equation}
\label{equ_def_dictionary_set_normalized_cols}
\mathbf{D} \in \mathcal{D} \triangleq \{ \mathbf{B} \in \mathbb{R}^{\measlen \times \coefflen} | \mathbf{e}_{k}^{T} \transp{\mathbf{B}}\mathbf{B} \mathbf{e}_{k}= 1 \mbox{, for all } k\in [\coefflen] \}. 
\end{equation}
The set $\mathcal{D}$ is known as the \emph{oblique manifold} \cite{Jenatton2012,AbsilMahonySepulchre,Absil2006}. {For fixed problem dimensions $\coefflen$, $\measlen$ and $\sparsity$, requiring \eqref{equ_def_dictionary_set_normalized_cols} effectively amounts to identifying SNR with the quantity $\| \mathbf{\Sigma}_{x} \|_{2}/\sigma^{2}$.} 
Our analysis is local in the sense that we consider the true dictionary $\mathbf{D}$ to belong to a small neighborhood, i.e., 
\begin{equation} 
\label{eq_dic_belongs_to_small_ball}
\mathbf{D} \in {\mathcal{X}(\mathbf{D}_{0},r) \defeq\mathcal{B}(\mathbf{D}_{0},r) \cap \mathcal{D} = \{ \mathbf{D}' \in \mathcal{D}: \| \mathbf{D}' \!-\! \mathbf{D}_{0} \|_{\rm{F}} < r \}}
\end{equation} 
with a fixed and known ``reference dictionary'' $\mathbf{D}_{0} \in \mathcal{D}$ {and known radius\footnote{{Considering only values not exceeding $2\sqrt{\coefflen}$ for the radius $r$ in \eqref{eq_dic_belongs_to_small_ball} is reasonable since for any radius $r>2\sqrt{\coefflen}$ we would obtain $\mathcal{X}(\mathbf{D}_{0},r)=\mathcal{D}$ yielding the global DL problem}.} $r\leq 2\sqrt{\coefflen}$}. {This local analysis avoids ambiguity issues {(which we discuss below)} that are intrinsic to DL. However, the 
lower bounds on the minimax risk derived on the locality constraint \eqref{eq_dic_belongs_to_small_ball} trivially also apply to the global DL problem, i.e., where we only require \eqref{equ_def_dictionary_set_normalized_cols}.} 

\subsection{The minimax risk}

We will investigate the fundamental limits on the accuracy achievable by any DL scheme, irrespective of its computational complexity. By a DL scheme, we mean an estimator $\widehat{\mathbf{D}}(\cdot)$ which maps the observation $\mathbf{Y}\!=\!\big(\vy_{1},\ldots,\vy_{\samplesize}\big)$ to an estimate $\widehat{\mathbf{D}}(\mathbf{Y})$ of the true underlying dictionary $\mathbf{D}$.
The accuracy of a given learning method will be measured via the MSE $\expect_{{\mY}} \{ \| \widehat{\mathbf{D}}(\mY) - \mathbf{D} \|^{2}_{\text{F}} \}$, which 
is the expected squared distance of the estimate $\widehat{\mathbf{D}}(\mY)$  from the true dictionary, measured in Frobenius norm. 
Note that the MSE of a given learning scheme $\widehat{\mathbf{D}}(\mY)$ depends on the true underlying dictionary $\mathbf{D}$, {which is fixed but unknown.} 
Therefore, the MSE cannot be minimized uniformly for all $\mathbf{D}$ \cite{RethinkingBiasedEldar}. However, for a given estimator $\widehat{\mathbf{D}}(\cdot)$, a reasonable 
performance measure is the worst case MSE $\sup_{\mathbf{D} \in {\mathcal{X}(\mathbf{D}_{0},r)}} \expect_{{\mY}} \{ \| \widehat{\mathbf{D}}(\mY) \!-\! \mathbf{D} \|_{\text{F}}^{2} \}$ \cite{LC}. The optimum estimator under this criterion has smallest worst case MSE among all possible estimators. This smallest worst case MSE ({referred to as minimax risk}) is 
an intrinsic property of the estimation problem and does not depend on a specific estimator. {Let us highlight that the minimax risk is defined here for a fixed and known distribution of the 
coefficient vector $\vx_{\obsidx}$ in \eqref{equ_single_linear_model}. In what follows, we derive three different lower bounds on the minimax risk by considering different types of coefficient distributions.} 

Concretely, the minimax risk $\minimaxrisk$ for the problem of learning the dictionary $\mathbf{D}$ based on the observation of $\samplesize$ i.i.d. observations $\vy_{\obsidx}$, distributed according to the model \eqref{equ_linear_model}, is 
\begin{equation}
\label{equ_def_minimax_problem}
\minimaxrisk \triangleq \inf_{\widehat{\mathbf{D}}} \sup_{\mathbf{D} \in {\mathcal{X}}(\mathbf{D}_{0},r)}  \expect_{{\mY}}  \{ \| \widehat{\mathbf{D}}(\mathbf{Y}) \!-\! \mathbf{D} \|_{\text{F}}^{2} \}.
\end{equation} 
In general, the minimax risk $\minimaxrisk$ depends on the sample size $\samplesize$, the dimension $\measlen$ of the observed signals, the 
number $\coefflen$ of dictionary elements, the sparsity degree $\sparsity$ and the noise variance $\sigma^{2}$. 
For the sake of light notation, we will not make this dependence explicit. 

{Note that while, at first sight, the locality assumption \eqref{eq_dic_belongs_to_small_ball} may suggest that our analysis yields weaker results than for the case of not having this locality assumption, the opposite is actually true. Indeed, 
our lower bounds on the minimax risk predict that even under the additional a-priori knowledge that the true dictionary belongs to the (small) neighborhood of a known reference dictionary $\mD_{0}$, the 
minimax risk is lower bounded by a strictly positive number which, for a sufficiently large sample size, does not depend on the size of the neighborhood at all. Also, from the definition \eqref{equ_def_minimax_problem} it is obvious that any lower bound on the minimax risk $\minimaxrisk$ under 
the locality constraint \eqref{eq_dic_belongs_to_small_ball} is simultaneously a lower bound on the minimax risk for global DL, which is obtained from \eqref{equ_def_minimax_problem} by replacing the constraint 
$\mathbf{D} \in \mathcal{X}(\mathbf{D}_{0},r)$ in the inner maximization with the constraint $\mathbf{D} \in \mathcal{D}$.} 

The minimax problem \eqref{equ_def_minimax_problem} typically cannot be solved in closed-form. Instead of trying to exactly solve \eqref{equ_def_minimax_problem} and determine $\minimaxrisk$, we will derive 
lower bounds on $\minimaxrisk$ by adapting an established information-theoretic methodology (cf., e.g., \cite{BinYuFestschrift1997,WangWain2010,CandesDavenport2013}) to the DL problem.
Having a lower bound on the minimax risk $\minimaxrisk$ allows to asses the performance of a given DL scheme. In particular, 
if the worst case MSE of a given scheme is close to the lower bound, then there is no point in searching for alternative schemes with substantially better 
performance. {Let us highlight that our bounds apply to any DL scheme, regardless of its computational complexity. In particular, these 
bounds apply also to DL methods which do not exploit neither the knowledge of the sparse coefficient distribution nor of the noise variance.}

\subsection{Information-theoretic lower bounds on the minimax risk}
\label{sec_inf_th_lower_bounds_minimax_risk}

A principled approach \cite{BinYuFestschrift1997,WangWain2010,CandesDavenport2013} to lower bounding the minimax risk $\minimaxrisk$ of a general estimation problem is based on reducing a specific multiple hypothesis testing problem to minimax estimation of the dictionary $\mathbf{D}$. More precisely, if there exists an estimator with small worst case MSE, then this estimator can be used to solve a hypothesis testing problem. However, using Fano's inequality, there is a fundamental limit on the error probability for the hypothesis testing problem. This limit induces a lower bound on the worst case MSE of any estimator, i.e., on the minimax risk. Let us now outline the details of the method. 

First, within this approach one assumes that the true dictionary $\mathbf{D}$ in \eqref{equ_linear_model} is taken uniformly at random (u.a.r.) from a finite subset $\mathcal{D}_{0} \triangleq \{ \mathbf{D}_{\dicidx} \}_{\dicidx \in [L]} \subseteq {\mathcal{X}}(\mathbf{D}_{0},r)$ for some $L \in \mathbb{N}$ (cf.\ Fig. \ref{fig_info_theor_proof_sketch}). 
This subset $\mathcal{D}_{0}$ is constructed such that (i) any two distinct dictionaries $\mathbf{D}_{\dicidx},\mathbf{D}_{\dicidx'} \in \mathcal{D}_{0}$ are separated 
by at least $\sqrt{8 \varepsilon}$, i.e., $\| \mathbf{D}_{\dicidx} - \mathbf{D}_{\dicidx'} \|_{\text{F}} \geq \sqrt{8 \varepsilon}$ and (ii) it is hard to detect the true dictionary $\mathbf{D}$, drawn u.a.r. out of $\mathcal{D}_{0}$, based on observing $\mathbf{Y}$. 
The existence of such a set $\mathcal{D}_{0}$ yields a relation between the sample size $\samplesize$ and the remaining model parameters, i.e., $\measlen$, $\coefflen$, $\sparsity$, $\sigma$ which has to be satisfied such that {at least one} estimator with minimax-risk not exceeding $\varepsilon$ may exist.
\begin{figure}[t]
\vspace{-1mm}
\centering
\psfrag{SNR}[c][c][.9]{\uput{3.4mm}[270]{0}{\hspace{0mm}SNR [dB]}}
\psfrag{radius}[c][c][.9]{\hspace*{-1mm}$\sqrt{2 \varepsilon}$}
\psfrag{distance}[c][c][.9]{\hspace*{-1mm}$\sqrt{8 \varepsilon}$}
\psfrag{DicSet}[c][c][.9]{\hspace*{-1mm}$\mathcal{D}$}
\psfrag{D1}[c][c][.9]{\hspace*{2mm}$\mathbf{D}_{1}$}
\psfrag{D2}[c][c][.9]{\hspace*{2mm}$\mathbf{D}_{2}$}
\psfrag{D3}[c][c][.9]{\hspace*{2mm}$\mathbf{D}_{3}$}
\psfrag{D4}[c][c][.9]{\hspace*{2mm}$\mathbf{D}_{4}$}
\psfrag{hatD}[c][c][.9]{\hspace*{4mm}$\widehat{\mathbf{D}}(\mathbf{Y})$}
\psfrag{x_m20}[c][c][.9]{\uput{0.3mm}[270]{0}{$-20$}}
\psfrag{x_m_10}[c][c][.9]{\uput{0.3mm}[270]{0}{$-10$}}
\centering
\hspace*{-0mm}\includegraphics[height=7cm]{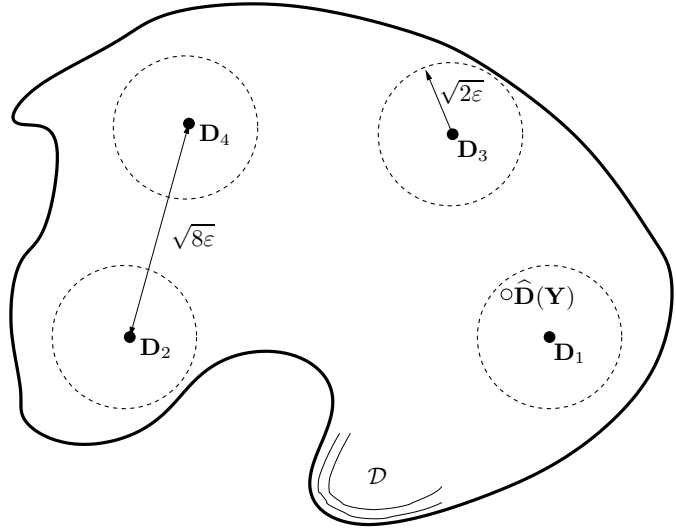}
\vspace*{-.5mm}
\renewcommand{\baselinestretch}{1.2}\small\normalsize
  \caption{{A finite ensemble $\mathcal{D}_{0} \!=\! \{ \mathbf{D}_{\dicidx} \}_{\dicidx \in [L]}$ containing $L=4$ dictionaries used for deriving a lower bound $\minimaxrisk \!\geq \varepsilon$ on the minimax risk $\minimaxrisk$ (cf.\ \eqref{equ_def_minimax_problem}). {For the true dictionary $\mathbf{D}=\mathbf{D}_{1}$, we also depicted a typical} realization of an estimator $\widehat{\mathbf{D}}$ achieving the minimax risk.}} 
\label{fig_info_theor_proof_sketch}
\vspace*{3.5mm}
\end{figure}

In order to find a lower bound {$\minimaxrisk\geq \varepsilon$} on the minimax risk $\minimaxrisk$ (cf.\ \eqref{equ_def_minimax_problem}), we hypothesize the existence of an estimator $\widehat{\mathbf{D}}(\mathbf{Y})$ achieving the minimax risk in \eqref{equ_def_minimax_problem}. Then, the 
minimum distance detector 
\begin{equation}
\argmin_{\mathbf{D}' \in \mathcal{D}_{0}} \| \widehat{\mathbf{D}}(\mathbf{Y})\!-\!\mathbf{D}' \|_{\text{F}} 
\end{equation} 
recovers the correct dictionary $\mathbf{D} \in \mathcal{D}_{0}$ if $\widehat{\mathbf{D}}(\mathbf{Y})$ belongs to the {open} ball $\mathcal{B}(\mathbf{D},\sqrt{2 \varepsilon})$ (indicated by the dashed circles in Fig. \ref{fig_info_theor_proof_sketch}) centered at $\mathbf{D}$ and with radius $\sqrt{2 \varepsilon}$. 
The information-theoretic method \cite{Wain2009TIT,WangWain2010,BinYuFestschrift1997} of lower bounding the minimax risk $\minimaxrisk$ consists then in relating, via Fano's inequality \cite[Ch.\ 2]{coverthomas}, the error probability $\prob \big\{ \widehat{\mathbf{D}}(\mathbf{Y}) \notin \mathcal{B}(\mathbf{D},\sqrt{2 \varepsilon}) \big\}$ to the mutual information (MI) between the observation $\mathbf{Y} \!=\! \big(\mathbf{y}_{1},\ldots,\mathbf{y}_{\samplesize}\big)$ and the dictionary $\mathbf{D}$ in \eqref{equ_linear_model}, which is assumed to be drawn u.a.r. out of $\mathcal{D}_{0}$. 

\begin{figure}[t]
\vspace{-1mm}
\centering
\psfrag{SNR}[c][c][.9]{\uput{3.4mm}[270]{0}{\hspace{0mm}SNR [dB]}}
\psfrag{uar}[c][c][.9]{\hspace*{-1mm}{u.a.r.}}
\psfrag{Source}[c][c][.9]{\hspace*{-1mm}{Source}}
\psfrag{Channel}[c][c][.9]{\hspace*{-1mm}{Channel}}
\psfrag{fy}[c][c][1.1]{\hspace*{-1mm}$\substack{\vy_{\obsidx} = \mathbf{D} \mathbf{x}_{\obsidx} + \noisevec_{\obsidx} \\[2mm] \mathbf{Y} = (\vy_{1},\ldots,\vy_{\samplesize})}$}
\psfrag{Decoder}[c][c][.9]{\hspace*{-1mm}{Decoder}}
\psfrag{min}[c][c][1]{\hspace*{3mm}$\min\limits_{l \in [L]}\|\widehat{\mathbf{D}}(\mathbf{Y})\!-\!\mathbf{D}_{l} \|_{\rm{F}}$}
\psfrag{hatl}[c][c][.9]{\hspace*{-1mm}{$\hat{l}$}}
\psfrag{DeqDl}[c][c][1.2]{\hspace*{-1mm}$\mathbf{D} \!=\! \mathbf{D}_{l}$}
\psfrag{D0}[c][c][.9]{\hspace*{2mm}$\mathcal{D}_{0}$}
\psfrag{D2}[c][c][.9]{\hspace*{2mm}$\mathbf{D}^{(2)}$}
\psfrag{D3}[c][c][.9]{\hspace*{2mm}$\mathbf{D}^{(3)}$}
\psfrag{D4}[c][c][.9]{\hspace*{2mm}$\mathbf{D}^{(4)}$}
\psfrag{hatD}[c][c][.9]{\hspace*{4mm}$\widehat{\mathbf{D}}(\mathbf{Y})$}
\psfrag{x_m20}[c][c][.9]{\uput{0.3mm}[270]{0}{$-20$}}
\psfrag{x_m_10}[c][c][.9]{\uput{0.3mm}[270]{0}{$-10$}}
\centering
\hspace*{-0mm}\includegraphics[height=2.7cm]{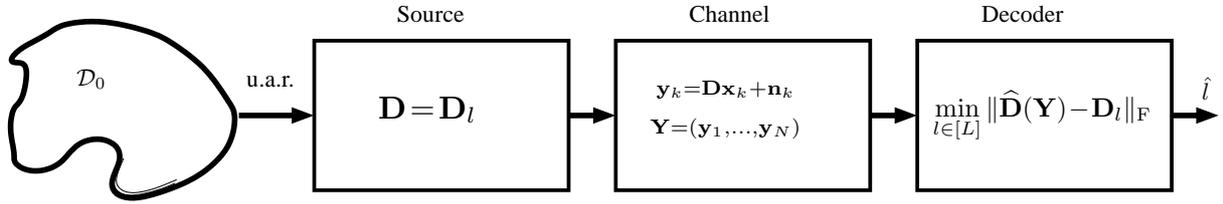}
\vspace*{-.5mm}
\renewcommand{\baselinestretch}{1.2}\small\normalsize
  \caption{Information-theoretic method for lower bounding the minimax risk.} 
\label{fig_comm_problem_inf_th_method}
\vspace*{3.5mm}
\end{figure}

Thus, within this approach, the estimation problem of DL is interpreted as a communication problem as illustrated in Fig.\ \ref{fig_comm_problem_inf_th_method}. The source selects the true dictionary $\mathbf{D}\!=\!\mathbf{D}_{\dicidx}$ by drawing u.a.r. an element $\mathbf{D}_{\dicidx}$ from the set $\mathcal{D}_{0}$. 
This element $\mathbf{D}_{\dicidx}$ then generates the ``channel output'' $\mY\!=\!\big(\vy_{1},\ldots,\vy_{\samplesize}\big)$ via the model \eqref{equ_linear_model} for $\samplesize$ channel uses. 
The observation model \eqref{equ_linear_model} acts as a channel model, relating the input $\mD\!=\!\mD_{\dicidx}$ to the 
output $\mY$. A crucial step in the information-theoretic approach is the analysis of the MI defined by \cite{coverthomas} 
\begin{equation} 
 I(\mY;\dicidx) \defeq \expect_{{\mY,\dicidx}}  \big\{ \log \frac{ p(\mY,\dicidx)}{p(\mY)p(\dicidx)} \big\}, \nonumber
\end{equation}
where $p(\mathbf{Y},\dicidx)$, $p(\mathbf{Y})$ and $p(\dicidx)$ denote the joint and marginal distributions, respectively, of the channel output $\mathbf{Y}$ and the random index $\dicidx$. As it turns out, a key challenge for applying this method to DL is that 
the model \eqref{equ_linear_model} does not correspond to a simple AWGN channel, for which the MI between output and input can be characterized easily. 
Indeed, the model \eqref{equ_linear_model} corresponds to a fading channel with the vector $\coeffvec$ representing fading coefficients. 
As is known from the analysis of non-coherent channel capacity, characterizing the MI between output and input for 
fading channels is much more involved than for AWGN channels \cite{LapidothMoser2003}. 
In particular, we require a tight upper bound on the MI $I(\mathbf{Y}; \dicidx)$ between the output $\mathbf{Y}$ and a random index $\dicidx$ which 
selects the input $\mathbf{D} = \mathbf{D}_{\dicidx}$ u.a.r. from a finite set $\mathcal{D}_{0} \subseteq {\mathcal{X}}(\mathbf{D}_{0},r)$. 
Upper bounding $I(\mathbf{Y};\dicidx)$ typically involves the analysis of the Kullback-Leibler (KL) divergence between 
the distributions of $\mathbf{Y}$ induced by different dictionaries $\mathbf{D}=\mathbf{D}_{\dicidx}$, $\dicidx \in [L]$. 

{Unfortunately}, an exact characterization of the KL divergence between Gaussian mixture models is in general not possible and one has to 
resort to approximations or bounds \cite{HersheyOlsen2007}. A main conceptual contribution of this work is a strategy to avoid evaluating 
KL divergences between Gaussian mixture models. Instead, similar to the approach of \cite{Wain2009TIT}, we assume that, in addition to the observation $\mathbf{Y}$, we also have access to some side information $\mathbf{T}(\coeffmtx)$, which depends only on the coefficient vector {$\coeffvec_{\obsidx}$}, for $\obsidx \in [\samplesize]$, stored column-wise in the matrix $\coeffmtx\!=\!\big(\coeffvec_{1},\ldots,\coeffvec_{\samplesize}\big)$. Clearly, any lower bound on the minimax risk for the situation with the additional side information $\mathbf{T}(\mathbf{X})$ is trivially also a lower bound for the case of no side information, since the optimal learning scheme for the latter situation may simply ignore the side information $\mathbf{T}(\mathbf{X})$. 
{As we will show rigorously in Appendix \ref{sec_appdx_tech}, we have the upper bound MI $I(\mathbf{Y};\dicidx) \leq \condmiyandlgivenx $, where $\condmiyandlgivenx$ is the conditional mutual information, given the side information $\mathbf{T}(\mathbf{X})$, between the observed data matrix $\mathbf{Y}$ and the random index $\dicidx$. Thus, in order to control the MI $I(\mathbf{Y};\dicidx)$ it is sufficient 
to control the conditional MI $\condmiyandlgivenx$, which turns out to be a much easier task.}
We will use two specific choices for $\mathbf{T}(\mathbf{X})$: $\mathbf{T}(\mathbf{X}) \!=\! \mathbf{X}$ and $\mathbf{T}(\mathbf{X}) \!=\! \supp(\mathbf{X})$. 
{The choice $\mathbf{T}(\mathbf{X}) \!=\! \mathbf{X}$ will yield tighter bounds for the case of high SNR, while the choice $\mathbf{T}(\mathbf{X}) \!=\! \supp(\mathbf{X})$ yields 
more accurate bounds in the low SNR regime.} 
As detailed in Section \ref{proof_architecture_main_result}, the problem of upper bounding $\condmiyandlgivenx$ becomes tractable for both choices.


\section{Lower Bounds on the Minimax Risk For DL} 
\label{sec_main_result}

We now state our main results, i.e., lower bounds on the minimax risk of DL. 
The first bound applies to any distribution of the coefficient vector $\coeffvec$, requiring only the existence of the covariance matrix $\mathbf{\Sigma}_{x}$. 
{Two further, more specialized, lower bounds apply to sparse coefficient vectors and moreover require the underlying dictionary $\mD$ in \eqref{equ_single_linear_model} to satisfy a restricted isometry property (RIP) \cite{RauhutFoucartCS}}.

\subsection{General Coefficients}
\label{sub_suc_general_coeff}

In this section, we consider the DL problem based on the model \eqref{equ_linear_model} with a zero-mean random coefficient vector $\mathbf{x}$. We make no further assumptions on the statistics of $\mathbf{x}$ except that the covariance matrix $\mathbf{\Sigma}_{x}$ exists. For this setup, the side information $\mathbf{T}(\mathbf{X})$ for the derivation of lower bounds on the minimax risk will be chosen as the coefficients itself, i.e., $\mathbf{T}(\mathbf{X}) \!=\! \mathbf{X}$. 
Our {first} main result is the following lower bound on the minimax risk for the DL problem.
\begin{theorem}
\label{thm_main_result}
Consider a DL problem based on $\samplesize$ i.i.d. observations following the model \eqref{equ_linear_model} and with true dictionary satisfying \eqref{eq_dic_belongs_to_small_ball} 
for some {$r \leq 2\sqrt{\coefflen}$}. Then, if 
\vspace*{-2mm}
\begin{equation} 
\label{equ_cond_coefflen_measlen}
\coefflen(\measlen\!-\!1) \geq {50}, 
\vspace*{-2mm}
\end{equation}
the minimax risk $\minimaxrisk$ is lower bounded as
\begin{equation}
\label{equ_conditions_s_2_first_charac_minimax_risk}
\minimaxrisk \geq (1/{320}) \min \bigg\{ r^{2},   \frac{\sigma^2}{\samplesize \| \mathbf{\Sigma}_{x}\|_{2}}  (\coefflen(\measlen\!-\!1)/10 -1) \bigg\}.
\end{equation}
\end{theorem}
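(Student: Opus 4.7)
The proof I would attempt follows the information-theoretic reduction to multiple hypothesis testing outlined in Section~\ref{sec_inf_th_lower_bounds_minimax_risk}. Hypothesize an estimator $\widehat{\mathbf{D}}$ attaining the minimax risk in \eqref{equ_def_minimax_problem}, and construct a finite packing $\mathcal{D}_{0}\!=\!\{\mathbf{D}_{\dicidx}\}_{\dicidx \in [L]} \subseteq \mathcal{X}(\mathbf{D}_{0}, r)$ whose elements are (i) pairwise separated by at least $\sqrt{8\varepsilon}$ in Frobenius norm and (ii) contained within a small neighborhood, so that the \emph{maximum} pairwise distance is also of order $\sqrt{\varepsilon}$ rather than $r$. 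Draw the true dictionary $\mathbf{D}$ uniformly from $\mathcal{D}_{0}$ and pass $\widehat{\mathbf{D}}(\mathbf{Y})$ through the minimum-distance decoder. Markov's inequality then converts the MSE into a decoder error probability,
\begin{equation*}
\prob\big\{\widehat{\mathbf{D}}(\mathbf{Y}) \notin \mathcal{B}(\mathbf{D}, \sqrt{2\varepsilon})\big\} \leq \minimaxrisk/(2\varepsilon),
\end{equation*}
which by Fano's inequality can be lower bounded in terms of the mutual information $I(\mathbf{Y};\dicidx)$ and $\log L$.

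To build $\mathcal{D}_{0}$, I would use a Varshamov--Gilbert-style construction in the tangent space of the oblique manifold at $\mathbf{D}_{0}$: perturb each column of $\mathbf{D}_{0}$ by a small vector orthogonal to that column, and re-normalize to remain in $\mathcal{D}$. Candidate dictionaries are thus parameterized by vectors in a space of dimension $\coefflen(\measlen\!-\!1)$. Selecting binary codewords with a common amplitude and pairwise Hamming distance at least a constant fraction of $\coefflen(\measlen\!-\!1)$ produces $L$ with $\log L = \Theta(\coefflen(\measlen\!-\!1))$ together with tightly concentrated pairwise Frobenius distances. The locality constraint $r \leq 2\sqrt{\coefflen}$ and the hypothesis $\coefflen(\measlen\!-\!1) \geq 50$ ensure both $L \geq 2$ and that the packing fits inside $\mathcal{B}(\mathbf{D}_{0}, r)$ as long as $\varepsilon$ does not exceed a fixed fraction of $r^{2}$.

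The mutual information is then controlled via the side-information bound $I(\mathbf{Y};\dicidx) \leq \condmiyandlgivenx$ established in Appendix~\ref{sec_appdx_tech} with $\mathbf{T}(\mathbf{X})\!=\!\mathbf{X}$. Conditional on $\mathbf{X}$, the matrix $\mathbf{Y}$ is Gaussian with mean $\mathbf{D}_{\dicidx}\mathbf{X}$ and covariance $\sigma^{2}\mathbf{I}$, so the standard convexity bound on MI via averaged pairwise KL divergences yields
\begin{equation*}
\condmiyandlgivenx \leq \frac{\samplesize}{2\sigma^{2} L^{2}} \sum_{\dicidx,\dicidx'} \tracem{(\mathbf{D}_{\dicidx}\!-\!\mathbf{D}_{\dicidx'})^{T}(\mathbf{D}_{\dicidx}\!-\!\mathbf{D}_{\dicidx'}) \mathbf{\Sigma}_{x}} \leq \frac{\samplesize \|\mathbf{\Sigma}_{x}\|_{2}}{2\sigma^{2}} \max_{\dicidx,\dicidx'} \|\mathbf{D}_{\dicidx}\!-\!\mathbf{D}_{\dicidx'}\|_{\rm F}^{2},
\end{equation*}
which by property (ii) of the packing is of order $\samplesize \|\mathbf{\Sigma}_{x}\|_{2} \varepsilon / \sigma^{2}$. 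Feeding this into Fano's inequality and choosing $\varepsilon$ so that the resulting error probability remains at least a universal constant yields $\minimaxrisk \gtrsim \min\{r^{2}, \sigma^{2}\coefflen(\measlen\!-\!1)/(\samplesize \|\mathbf{\Sigma}_{x}\|_{2})\}$, matching \eqref{equ_conditions_s_2_first_charac_minimax_risk} up to absolute constants.

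The step I expect to be the main obstacle is the packing construction itself: one must simultaneously arrange $\log L = \Theta(\coefflen(\measlen\!-\!1))$ (so Fano delivers a non-trivial lower bound on the error probability), keep the \emph{diameter} of $\mathcal{D}_{0}$ proportional to $\sqrt{\varepsilon}$ (so that the MI bound scales with $\varepsilon$ rather than $r^{2}$), respect the column-normalization defining $\mathcal{D}$, and maintain containment in $\mathcal{B}(\mathbf{D}_{0}, r)$. Tracking the resulting combinatorial constants through Varshamov--Gilbert is exactly what produces the specific numerical factors $1/320$ and $\coefflen(\measlen\!-\!1)/10 - 1$ in the statement; the side condition $\coefflen(\measlen\!-\!1)\geq 50$ is precisely what the VG exponent requires to dominate the $\log 2$ penalty in Fano's inequality.
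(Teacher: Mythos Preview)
Your proposal is correct and follows essentially the same approach as the paper: the paper constructs the packing $\mathcal{D}_{0}$ by placing scaled binary codewords (obtained via a probabilistic Hoeffding argument, i.e., Varshamov--Gilbert in spirit) into the column-wise orthogonal complements of $\mathbf{D}_{0}$ and rescaling to stay on the oblique manifold, then bounds $I(\mathbf{Y};\dicidx)\le I(\mathbf{Y};\dicidx\mid\mathbf{X})$ exactly via the averaged-KL Gaussian formula you wrote, and closes with Fano plus Markov. Your identification of the packing construction as the crux---simultaneously controlling $\log L$, the Frobenius diameter, unit-norm columns, and containment in $\mathcal{B}(\mathbf{D}_{0},r)$---matches precisely where the paper spends its effort (Lemmas~\ref{lem_existence_set_B_C} and~\ref{lem_existence_dicationary_set_desiredata}).
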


The first bound in \eqref{equ_conditions_s_2_first_charac_minimax_risk}, i.e., $\minimaxrisk \geq r^{2}/{320}$,\footnote{{The constant $1/40$ is an artifact of our proof technique and might be improved by a more pedantic analysis.}} complies {(up to fixed constants)} with the worst case MSE of a dumb estimator $\widehat{\mathbf{D}}$ which ignores 
the observation $\mY$ and always delivers a fixed dictionary $\mathbf{D}_{1} \in {\mathcal{X}}(\mathbf{D}_{0},r)$. Since the true dictionary $\mD$ also belongs to the neighborhood ${\mathcal{X}}(\mD_{0},r)$, the {MSE} of this estimator is upper bounded by 
\begin{equation}
\| \widehat { \mD } \!-\! \mathbf{D} \|_{\rm{F}}^{2}  =  \| \mD_{1} \!-\! \mathbf{D} \|_{\rm{F}}^{2}  =\big( \| \mD_{1} \!-\! \mD_{0} \|_{\rm{F}}\!+\! \| \mD_{0} \!-\! \mD \|_{\rm{F}} \big)^{2} \stackrel{\eqref{eq_dic_belongs_to_small_ball}}{\leq} 4 r^2.\nonumber
\end{equation} 
The second bound in \eqref{equ_conditions_s_2_first_charac_minimax_risk} (ignoring constants) is essentially the minimax risk $\varepsilon'$ of a {simple} signal in noise problem 
\begin{equation} 
\label{equ_signal_in_noise_model}
\mathbf{z} = \mathbf{s} + \noisevec 
\end{equation}
with AWGN $\noisevec \sim \mathcal{N}(\mathbf{0}, \frac{\sigma^{2}}{\| \mathbf{\Sigma}_{x} \|_{2} } \mathbf{I}_{{\coefflen(\measlen\!-\!1)}})$ and the unknown non-random signal $\mathbf{s}$ of dimension $\coefflen(\measlen\!-\!1)$, 
which is also the dimension of the oblique manifold $\mathcal{D}$ \cite{Absil2006}. 
A standard result in classical estimation theory is that, given {the observation of} $\samplesize$ i.i.d.\ realizations $\mathbf{z}_{\obsidx}$ of the vector $\mathbf{z}$ in \eqref{equ_signal_in_noise_model}, the minimax risk $\varepsilon'$ of 
estimating $\mathbf{s} \in \mathbb{R}^{\coefflen(\measlen\!-\!1)}$ is \cite[Exercise 5.8 on pp. 403]{LC}
\begin{equation}
\varepsilon' =\frac{\sigma^{2}}{\samplesize \| \mathbf{\Sigma}_{x} \|_{2} }  \coefflen(\measlen\!-\!1).  
\end{equation} 

For fixed {ratio} $ \| \mathbf{\Sigma}_{x}\|_{2} /\sigma^{2}$, the bound \eqref{equ_conditions_s_2_first_charac_minimax_risk} predicts that $\samplesize \!=\! \Theta(\coefflen \measlen)$ samples are required for accurate DL. 
{Remarkably,} this scaling matches the scaling of the sample size found in \cite{VainManBruck11} to be sufficient for successful DL. 
Note, however, that the analysis \cite{VainManBruck11} is based on the sparse representation 
error of a dictionary, whereas we target the Frobenius norm of the deviation from the true underlying dictionary. 

\subsection{Sparse Coefficients} 
\label{sub_sec_sparse_coding}

In this section {we focus on a particular subclass of probability distributions for the zero mean coefficient vector} $\vx$ in \eqref{equ_linear_model}. More specifically, {the random support $\supp(\vx)$
of the coefficient vector $\vx$ is assumed to be distributed uniformly over the set $\Xi \defeq \{ \mathcal{S} \subseteq [\coefflen]: | \mathcal{S}| = \sparsity \}$, i.e., 
\begin{equation} 
\label{equ_sparse_coeff_support_prob}
\prob( \supp(\vx) = \mathcal{S}) = \frac{1}{|\Xi|} = \frac{1}{\binom{\coefflen}{\sparsity}} \mbox{, for any } \mathcal{S} \in \Xi.
\end{equation}
We also assume that, conditioned on the support $\mathcal{S}=\supp(\vx)$, the non-zero entries of $\vx$ are i.i.d. with variance $\sigma_{a}^{2}$, i.e., in particular 
\vspace*{-2mm}
\begin{equation} 
\label{equ_sparse_coeff_cond_cov_matrix}
\expect_{\vx} \{ \vx_{\mathcal{S}} \vx_{\mathcal{S}}^{T} | \mathcal{S} \} = \sigma_{a}^{2} \mathbf{I}_{\sparsity}. 
\vspace*{-6mm}
\end{equation} 
}

The sparse coefficient support model \eqref{equ_sparse_coeff_support_prob} is useful for performing sparse coding of the observed samples $\vy_{\obsidx}$. 
Indeed, once we have learned the dictionary $\mathbf{D}$, we can estimate for each observed sample $\vy_{\obsidx}$, using a standard CS recovery method, the sparse coefficient vector $\mathbf{x}_{\obsidx}$. 
Sparse source coding is then accomplished by using the sparse coefficient vector to represent the signal $\vy_{\obsidx}$. 
{For sparse source coding to be robust against noise, one has to require the underlying dictionary $\mD$ to be well conditioned for sparse signals. 
While there are various ways of quantifying the conditioning of a dictionary, e.g., based on the dictionary coherence \cite{GribonvalSchnass2010,aganjaneta13}, 
we will focus here on the restricted isometry property (RIP) \cite{RauhutFoucartCS,CandesRIPImplCS08,CandesDavenport2013}.
A dictionary $\mD$ is said to satisfy the RIP of order $\sparsity$ with constant $\delta_{\sparsity}$ if 
\begin{equation} 
\label{equ_def_RIP}
(1\!-\!\delta_{\sparsity})  \| \mathbf{z} \|^{2} \leq \| \mathbf{D} \mathbf{z} \|^{2} \leq (1\!+\!\delta_{\sparsity}) \| \mathbf{z} \|^{2} \mbox{, for any } \mathbf{z} \in \mathbb{R}^{\coefflen} \mbox{ such that } \| \mathbf{z} \|_{0} \leq  \sparsity.
\vspace*{-6mm}
\end{equation}  
}

{Let us formally define the signal-to-noise ratio (SNR) for the observation model \eqref{equ_linear_model} as  
\begin{equation}
\label{equ_def_SNR}
{\rm SNR} \defeq \expect_{\vx} \{ \| \mD \vx \|^{2}_{2} \} / \expect_{\noisevec} \{ \| \noisevec \|^{2}_{2} \}. 
\end{equation} 
Note that the $\rm{SNR}$ depends on the unknown underlying dictionary $\mD$. However, if $\mD$ satisfies the RIP \eqref{equ_def_RIP} with constant $\delta_{\sparsity}$, then we obtain the characterization  
\begin{equation}
\label{equ_doulbe_inequ_SNR}
 \frac{ (1-\delta_{\sparsity} )\sparsity \sigma_{a}^{2}}{ \measlen \sigma^{2}} \leq {\rm SNR} \leq  \frac{ (1+\delta_{\sparsity} )\sparsity \sigma_{a}^{2}}{ \measlen \sigma^{2}}
\end{equation}
which depends on $\mD$ only via the RIP constant $\delta_{\sparsity}$. For a small constant $\delta_{\sparsity}$, \eqref{equ_doulbe_inequ_SNR} justifies the approximation ${\rm SNR} \approx \frac{\sparsity \sigma_{a}^{2}}{\measlen \sigma^2}$. 
}

{As can be verified easily, any random coefficient vector $\vx$ conforming with \eqref{equ_sparse_coeff_support_prob} and \eqref{equ_sparse_coeff_cond_cov_matrix} 
possesses a finite covariance matrix, given explicitly by 
\begin{equation}
\label{equ_cov_matrix_sparse_coeffs}
\mathbf{\Sigma}_{x} = (\sparsity/\coefflen)  \sigma_{a}^{2} \mathbf{I}_{\coefflen}. 
\end{equation} 
Therefore we can invoke Theorem \ref{thm_main_result}, which, combined with \eqref{equ_cov_matrix_sparse_coeffs} and  \eqref{equ_doulbe_inequ_SNR}, yields the following corollary.}
\begin{corollary}
\label{cor_main_result_sparse_vectors}
Consider a DL problem based on $\samplesize$ i.i.d. observations according to the model \eqref{equ_linear_model} and with true dictionary satisfying \eqref{eq_dic_belongs_to_small_ball} for some {$r \leq 2\sqrt{\coefflen}$}. 
Furthermore, the random coefficient vector $\coeffvec$ in \eqref{equ_linear_model} {conforms with \eqref{equ_sparse_coeff_support_prob} and \eqref{equ_sparse_coeff_cond_cov_matrix}}. {If the dictionary $\mD$ satisfies the RIP \eqref{equ_def_RIP} with RIP-constant $\delta_{\sparsity}\!\leq\!1/2$ and moreover}
\vspace*{-2mm}
\begin{equation} 
\label{equ_cond_coefflen_measlen_sparse}
\coefflen(\measlen\!-\!1) \geq {50}, 
\vspace*{-2mm}
\end{equation}
{then} the minimax risk $\minimaxrisk$ is lower bounded as
\begin{equation}
\label{equ_conditions_s_2_first_charac_minimax_risk_cor_sparse}
\minimaxrisk \geq (1/{320}) \min \bigg\{ r^{2},   {\frac{2\coefflen}{{\rm SNR}\samplesize \measlen}  (\coefflen(\measlen\!-\!1)/10\!-\!1)} \bigg\}.
\end{equation}
\end{corollary}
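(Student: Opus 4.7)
The plan is to obtain the stated bound as an immediate specialization of Theorem~\ref{thm_main_result}, by first verifying its hypotheses for the sparse coefficient model, then computing $\|\mathbf{\Sigma}_{x}\|_{2}$ in closed form, and finally using the RIP bound \eqref{equ_doulbe_inequ_SNR} to eliminate the variance $\sigma_{a}^{2}$ in favor of the SNR.

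First, I would confirm that the random coefficient vector $\vx$ specified by \eqref{equ_sparse_coeff_support_prob} and \eqref{equ_sparse_coeff_cond_cov_matrix} meets the only distributional hypothesis of Theorem~\ref{thm_main_result}, namely the existence of the covariance matrix $\mathbf{\Sigma}_{x}$. A direct computation (conditioning on the support $\mathcal{S}$) gives $\expect\{\vx_{i}^{2}\}=\prob(i\in\mathcal{S})\sigma_{a}^{2}=(\sparsity/\coefflen)\sigma_{a}^{2}$ for each $i\in[\coefflen]$, while for $i\neq j$ the conditional independence of the non-zero entries together with the zero-mean assumption yields $\expect\{\vx_{i}\vx_{j}\}=0$. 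This reproduces the identity \eqref{equ_cov_matrix_sparse_coeffs}, $\mathbf{\Sigma}_{x}=(\sparsity/\coefflen)\sigma_{a}^{2}\mathbf{I}_{\coefflen}$, and hence $\|\mathbf{\Sigma}_{x}\|_{2}=(\sparsity/\coefflen)\sigma_{a}^{2}$.

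Next, since the hypothesis \eqref{equ_cond_coefflen_measlen_sparse} on $\coefflen(\measlen-1)$ matches \eqref{equ_cond_coefflen_measlen} verbatim and the locality assumption on $\mathbf{D}$ is the same, Theorem~\ref{thm_main_result} applies and yields
\begin{equation}
\minimaxrisk \geq (1/320)\min\Bigl\{r^{2},\,\tfrac{\coefflen\sigma^{2}}{\samplesize\sparsity\sigma_{a}^{2}}\bigl(\coefflen(\measlen-1)/10-1\bigr)\Bigr\}. \nonumber
\end{equation}
To bring this into the form claimed in \eqref{equ_conditions_s_2_first_charac_minimax_risk_cor_sparse}, I would invoke the RIP of order $\sparsity$ with $\delta_{\sparsity}\leq 1/2$, which is precisely what \eqref{equ_doulbe_inequ_SNR} requires; the upper bound there gives ${\rm SNR}\leq(1+\delta_{\sparsity})\sparsity\sigma_{a}^{2}/(\measlen\sigma^{2})\leq(3/2)\sparsity\sigma_{a}^{2}/(\measlen\sigma^{2})$, hence $\sigma^{2}/(\sparsity\sigma_{a}^{2})\geq 2/(3\measlen\cdot{\rm SNR})$, and substituting this into the second argument of the minimum absorbs the constant factor into the leading $1/320$ (possibly after relaxing the constant slightly).

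There is no real obstacle: the entire argument is algebraic bookkeeping, with the only modest care needed in (i) the covariance computation — which is routine given the explicit sparse model — and (ii) the direction of the RIP inequality — one needs the \emph{upper} half of \eqref{equ_doulbe_inequ_SNR} in order to \emph{lower} bound $\sigma^{2}/\|\mathbf{\Sigma}_{x}\|_{2}$ by a multiple of $1/({\rm SNR}\cdot\measlen)$. Once those are handled, the result follows by direct substitution into \eqref{equ_conditions_s_2_first_charac_minimax_risk}.
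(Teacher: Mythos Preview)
Your overall plan matches the paper's own derivation exactly: compute $\mathbf{\Sigma}_{x}=(\sparsity/\coefflen)\sigma_{a}^{2}\mathbf{I}_{\coefflen}$ via \eqref{equ_cov_matrix_sparse_coeffs}, plug $\|\mathbf{\Sigma}_{x}\|_{2}=(\sparsity/\coefflen)\sigma_{a}^{2}$ into Theorem~\ref{thm_main_result}, and then replace $\sigma^{2}/(\sparsity\sigma_{a}^{2})$ by a multiple of $1/(\measlen\,{\rm SNR})$ using \eqref{equ_doulbe_inequ_SNR}.

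There is, however, a concrete slip in the last step, and it is precisely the step you flagged as requiring care. You invoke the \emph{upper} bound ${\rm SNR}\leq(1+\delta_{\sparsity})\sparsity\sigma_{a}^{2}/(\measlen\sigma^{2})$ and conclude $\sigma^{2}/(\sparsity\sigma_{a}^{2})\geq 2/(3\measlen\,{\rm SNR})$. That implication is reversed: from ${\rm SNR}\leq(3/2)\sparsity\sigma_{a}^{2}/(\measlen\sigma^{2})$ one only obtains $\sigma^{2}/(\sparsity\sigma_{a}^{2})\leq 3/(2\measlen\,{\rm SNR})$, which is useless for lower-bounding the second argument of the minimum. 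What you actually need is the \emph{lower} half of \eqref{equ_doulbe_inequ_SNR}, namely $(1-\delta_{\sparsity})\sparsity\sigma_{a}^{2}/(\measlen\sigma^{2})\leq{\rm SNR}$; with $\delta_{\sparsity}\leq 1/2$ this gives $\sparsity\sigma_{a}^{2}/\sigma^{2}\leq 2\measlen\,{\rm SNR}$ and hence $\sigma^{2}/(\sparsity\sigma_{a}^{2})\geq 1/(2\measlen\,{\rm SNR})$, which is the inequality in the right direction. Once you swap in the correct half of \eqref{equ_doulbe_inequ_SNR}, the substitution into \eqref{equ_conditions_s_2_first_charac_minimax_risk} goes through as you intended.
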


{For sufficiently large sample size $\samplesize$ the second bound in \eqref{equ_conditions_s_2_first_charac_minimax_risk_cor_sparse} will be in force, and we obtain 
a scaling of the minimax risk as $\minimaxrisk = \Theta(\coefflen^{2}/(\samplesize {\rm SNR}))$. In particular, this bound suggests a decay of the worst case MSE via $1/\samplesize$.} This agrees with empirical results in \cite{Jenatton2012}, indicating that the MSE of popular DL methods typically decay with $1/\samplesize$. Moreover, the dependence on the sample size via $1/\samplesize$ is theoretically sound, since averaging the outcomes of a learning scheme over $\samplesize$ independent observations reduces the estimator variance by $1/\samplesize$. Note that, as long as the first bound in \eqref{equ_conditions_s_2_first_charac_minimax_risk_cor_sparse} is not in force, the overall lower bound \eqref{equ_conditions_s_2_first_charac_minimax_risk_cor_sparse} scales with {$1/{\rm SNR}$}, which agrees with the basic behavior of the upper bound derived in \cite{Jenatton2012} on the distance of the closest local minimum of \eqref{equ_minimization_ell_1_dic_learning} to the true dictionary $\mathbf{D}$.

If we consider a fixed {SNR (cf.\ \eqref{equ_def_SNR})}, our lower bound predicts that for a vanishing minimax risk $\minimaxrisk$ the sample size $\samplesize$ has to scale as $\samplesize \!=\!\Theta(\coefflen^{2})$. 
This scaling is considerably smaller than the sample size requirement $\samplesize\!=\!\mathcal{O}(\coefflen^3 \measlen)$, which \cite{Jenatton2012} proved to be sufficient in the noisy and over-complete setting, such that minimizing \eqref{equ_minimization_ell_1_dic_learning} yields an accurate estimate of the true dictionary $\mD$. However, for vanishing sparsity rate ($\sparsity/\coefflen \rightarrow 0$), the scaling $\samplesize \!=\!\Theta(\coefflen^{2})$ matches the required sample size of the algorithms put forward in \cite{aganjaneta13,argemo13}, certifying that, {for extremely sparse signals}, they perform close to the information-theoretic optimum for fixed SNR.  


We will now derive an alternative lower bound on the minimax risk for DL {based on the sparse coefficient model \eqref{equ_sparse_coeff_support_prob} and \eqref{equ_sparse_coeff_cond_cov_matrix} by additionally assuming the non-zero coefficients 
to be Gaussian.  
In particular,} let us denote by $\mathbf{P}$ a random matrix 
which is drawn u.a.r. from the set of all permutation matrices of size $\coefflen \times \coefflen$. Furthermore, we denote by $\mathbf{z} \in \mathbb{R}^{\sparsity}$ a multivariate 
normal random vector with zero mean and covariance matrix $\mathbf{\Sigma}_{z} = \sigma_{a}^{2} \mathbf{I}_{\sparsity}$. Based on the matrix $\mathbf{P}$ and vector $\mathbf{z}$, we generate 
the coefficient vector $\vx$ as
\begin{equation} 
\label{equ_model_sparse_coeffs}
\coeffvec = {\mathbf{P}  \big(\vz^{T},\mathbf{0}_{1\!\times\!(\coefflen\!-\!\sparsity)}\big)^{T}} \mbox{ with } \vz \sim \mathcal{N}(\mathbf{0}, \sigma_{a}^{2} \mathbf{I}_{\sparsity}).
\end{equation} 
{Theorem \ref{thm_main_result_sparse_coeff} below presents} a lower bound on the minimax risk for the low SNR 
regime where {${\rm SNR} \leq (1/(9 \sqrt{80}))  \measlen/(2\sparsity)$}. 

\begin{theorem}
\label{thm_main_result_sparse_coeff}
Consider a DL problem based on the model \eqref{equ_linear_model} such that \eqref{eq_dic_belongs_to_small_ball} holds with some {$r \leq 2\sqrt{\coefflen}$ and the underlying dictionary $\mD$ satisfies 
the RIP of order $\sparsity$ with constant $\delta_{\sparsity} \leq 1/2$ (cf. \eqref{equ_def_RIP}).}
We assume the coefficients $\mathbf{x}$ in \eqref{equ_linear_model} to be distributed according to \eqref{equ_model_sparse_coeffs} with {${\rm SNR} \leq (1/(9 \sqrt{80}))  \measlen/(2\sparsity)$}.
Then, 
if
\vspace*{-2mm}
\begin{equation} 
\label{equ_cond_coefflen_measlen}
\coefflen (\measlen\!-\!1) \geq {50}, 
\vspace*{-2mm}
\end{equation}
the minimax risk $\minimaxrisk$ is lower bounded as
\begin{equation}
\label{equ_conditions_s_2_first_charac_minimax_risk_sparse}
\minimaxrisk \geq (1/{12960}) \min \bigg\{ r^{2}{/\sparsity},  \frac{ \coefflen}{{\rm SNR}^{2}\samplesize \measlen^{2}}   (\coefflen(\measlen\!-\!1)/10 -1) \bigg\}.
\end{equation}
\end{theorem}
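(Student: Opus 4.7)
The plan is to follow the general information-theoretic template of Section \ref{sec_inf_th_lower_bounds_minimax_risk}, but with two crucial differences from the proofs of Theorem \ref{thm_main_result} and Corollary \ref{cor_main_result_sparse_vectors}: first, exploit the Gaussianity of the nonzero coefficients in \eqref{equ_model_sparse_coeffs} by using the \emph{support} as side information, i.e., take $\mathbf{T}(\mathbf{X}) = \supp(\mathbf{X})$; second, carry out a second-order (Taylor-style) expansion of the resulting Gaussian KL divergences to produce the ${\rm SNR}^{-2}$ behavior announced in \eqref{equ_conditions_s_2_first_charac_minimax_risk_sparse}.

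First I would construct a finite ensemble $\mathcal{D}_{0} = \{\mathbf{D}_{l}\}_{l \in [L]} \subseteq \mathcal{X}(\mathbf{D}_{0},r)$ with $\|\mathbf{D}_{l} - \mathbf{D}_{l'}\|_{\rm F} \geq \sqrt{8\varepsilon}$ for $l \neq l'$. I expect the same style of packing as used for Theorem \ref{thm_main_result}, built by perturbing $\mathbf{D}_{0}$ tangentially to the oblique manifold; the factor $1/\sparsity$ in the first term of \eqref{equ_conditions_s_2_first_charac_minimax_risk_sparse} arises because only an $\sparsity$-column sub-block of each perturbation is effectively seen through a single observation. Drawing the index $l$ uniformly over $[L]$ and applying Fano as in Fig.\ \ref{fig_comm_problem_inf_th_method} gives
\begin{equation}
\minimaxrisk \geq \varepsilon \cdot \Big(1 - \frac{I(\mathbf{Y};l) + \log 2}{\log L}\Big).
\end{equation}
Per the bound $I(\mathbf{Y};l) \leq \condmiyandlgivenx$ referenced from Appendix \ref{sec_appdx_tech} (now instantiated with $\mathbf{T}(\mathbf{X})=\supp(\mathbf{X})$), it suffices to upper bound $I(\mathbf{Y};l \mid \supp(\mathbf{X}))$.

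The payoff of conditioning on the support is that, given $\supp(\vx_{\obsidx}) = \mathcal{S}_{\obsidx}$, the coefficient model \eqref{equ_model_sparse_coeffs} makes $\vy_{\obsidx}$ a zero-mean Gaussian with covariance $\mathbf{A}_{l,\mathcal{S}_{\obsidx}} \defeq \sigma^{2}\mathbf{I}_{\measlen} + \sigma_{a}^{2}\mathbf{D}_{l,\mathcal{S}_{\obsidx}}\mathbf{D}_{l,\mathcal{S}_{\obsidx}}^{T}$; since the $\samplesize$ observations are independent, the conditional KL bound on MI \cite{coverthomas} yields
\begin{equation}
\condmiyandlgivenx \leq \frac{\samplesize}{L^{2}} \sum_{l,l' \in [L]} \expect_{\mathcal{S}} D\big(\mathcal{N}(\mathbf{0},\mathbf{A}_{l,\mathcal{S}}) \,\|\, \mathcal{N}(\mathbf{0},\mathbf{A}_{l',\mathcal{S}})\big),
\end{equation}
where $\mathcal{S}$ is uniform over $\Xi$. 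Writing $\mathbf{A}_{l,\mathcal{S}} = \sigma^{2}(\mathbf{I}_{\measlen} + \alpha \mathbf{M}_{l,\mathcal{S}})$ with $\alpha \defeq \sigma_{a}^{2}/\sigma^{2}$ and $\mathbf{M}_{l,\mathcal{S}} \defeq \mathbf{D}_{l,\mathcal{S}}\mathbf{D}_{l,\mathcal{S}}^{T}$, the RIP assumption $\delta_{\sparsity}\!\leq\! 1/2$ forces $\|\alpha \mathbf{M}_{l,\mathcal{S}}\|_{2} \leq \tfrac{3}{2}\alpha$, and the hypothesis ${\rm SNR} \leq (1/(9\sqrt{80}))\measlen/(2\sparsity)$ in combination with \eqref{equ_doulbe_inequ_SNR} keeps $\alpha$ small enough for a convergent Neumann/Taylor expansion of both $\mathbf{A}_{l',\mathcal{S}}^{-1}$ and $\log\det$. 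Carrying this expansion to second order makes the first-order terms cancel (trace of a commutator and of the log of a ratio of determinants), leaving
\begin{equation}
D\big(\mathcal{N}(\mathbf{0},\mathbf{A}_{l,\mathcal{S}}) \,\|\, \mathcal{N}(\mathbf{0},\mathbf{A}_{l',\mathcal{S}})\big) \lesssim \tfrac{\alpha^{2}}{4} \|\mathbf{M}_{l,\mathcal{S}} - \mathbf{M}_{l',\mathcal{S}}\|_{\rm F}^{2},
\end{equation}
with an explicit higher-order remainder that is controlled by the low-SNR hypothesis. This is precisely the mechanism responsible for the $1/{\rm SNR}^{2}$ scaling in \eqref{equ_conditions_s_2_first_charac_minimax_risk_sparse}, in contrast to the $1/{\rm SNR}$ factor obtained when $\mathbf{T}(\mathbf{X}) = \mathbf{X}$.

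The remaining work is to convert $\expect_{\mathcal{S}}\|\mathbf{M}_{l,\mathcal{S}} - \mathbf{M}_{l',\mathcal{S}}\|_{\rm F}^{2}$ into a bound proportional to $\|\mathbf{D}_{l} - \mathbf{D}_{l'}\|_{\rm F}^{2}$ (up to a $\sparsity/\coefflen$ factor from the uniform support), using the factorization $\mathbf{M}_{l,\mathcal{S}} - \mathbf{M}_{l',\mathcal{S}} = (\mathbf{D}_{l,\mathcal{S}} - \mathbf{D}_{l',\mathcal{S}})\mathbf{D}_{l,\mathcal{S}}^{T} + \mathbf{D}_{l',\mathcal{S}}(\mathbf{D}_{l,\mathcal{S}} - \mathbf{D}_{l',\mathcal{S}})^{T}$ together with the RIP-induced spectral bound $\|\mathbf{D}_{l,\mathcal{S}}\|_{2}^{2} \leq 1+\delta_{\sparsity}$. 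Plugging the resulting KL estimate and the packing cardinality $\log L = \Theta(\coefflen(\measlen-1))$ (the dimension of the oblique manifold, obtained by standard volume counting as in the proof of Theorem \ref{thm_main_result}) into Fano and optimizing over $\varepsilon$ subject to $\sqrt{8\varepsilon} < r/\sqrt{\sparsity}$ produces \eqref{equ_conditions_s_2_first_charac_minimax_risk_sparse}. The main obstacle I anticipate is the bookkeeping of the Taylor remainder in the KL expansion: the need to certify, uniformly over pairs $(l,l')$ and supports $\mathcal{S}$, that the quadratic term dominates the cubic and higher-order terms is what pins down the quantitative low-SNR threshold $(1/(9\sqrt{80}))\measlen/(2\sparsity)$ appearing in the hypothesis.
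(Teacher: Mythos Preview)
Your high-level plan matches the paper exactly: Fano with the same packing $\mathcal{D}_{0}\subseteq\mathcal{X}(\mathbf{D}_{0},r)$ as in Theorem \ref{thm_main_result}, side information $\mathbf{T}(\mathbf{X})=\supp(\mathbf{X})$, and a second-order control of the conditional Gaussian KL to extract the $1/{\rm SNR}^{2}$ scaling. The difference lies in how that second-order control is obtained. The paper does \emph{not} Taylor-expand the KL; instead it applies the symmetrized-KL bound $\condmiyandlgivenx\le \expect_{\mathbf{T}(\mathbf{X})}\sum_{\obsidx}\frac{1}{L^2}\sum_{l,l'}\trace\{(\mathbf{\Sigma}^{-1}_{\obsidx,l}-\mathbf{\Sigma}^{-1}_{\obsidx,l'})(\mathbf{\Sigma}_{\obsidx,l'}-\mathbf{\Sigma}_{\obsidx,l})\}$ (cf.\ \cite[Eq.~(18)]{WangWain2010}), bounds the trace by $\rank\times\|\cdot\|_{2}\times\|\cdot\|_{2}$ with $\rank\le 2\sparsity$, and then invokes the matrix-perturbation result \cite[Thm.~2.3.4]{golub96} to get $\|\mathbf{\Sigma}^{-1}_{\obsidx,l}-\mathbf{\Sigma}^{-1}_{\obsidx,l'}\|_{2}\le 2\sigma^{-4}\|\mathbf{\Sigma}_{\obsidx,l}-\mathbf{\Sigma}_{\obsidx,l'}\|_{2}$, so that the final bound is quadratic in $\|\mathbf{\Sigma}_{\obsidx,l}-\mathbf{\Sigma}_{\obsidx,l'}\|_{2}$ and hence in $(\sigma_{a}/\sigma)^{2}\approx{\rm SNR}\cdot \measlen/\sparsity$. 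Your Taylor route would also work, but the paper's perturbation-theorem approach is cleaner precisely because it packages the remainder control into the single hypothesis $\|\mathbf{\Sigma}^{-1}_{\obsidx,l}(\mathbf{\Sigma}_{\obsidx,l}-\mathbf{\Sigma}_{\obsidx,l'})\|_{2}\le 1/2$, and this is exactly where the constant $(1/(9\sqrt{80}))\measlen/(2\sparsity)$ in the low-SNR hypothesis comes from.

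There are two places where your sketch would need correction. First, you invoke RIP for the packing elements $\mathbf{D}_{l}$ themselves (``$\|\alpha\mathbf{M}_{l,\mathcal{S}}\|_{2}\le\tfrac{3}{2}\alpha$'' and ``$\|\mathbf{D}_{l,\mathcal{S}}\|_{2}^{2}\le 1+\delta_{\sparsity}$''), but the hypothesis only gives RIP for $\mathbf{D}_{0}$; the paper never assumes RIP for the $\mathbf{D}_{l}$ and instead exploits the explicit construction $\mathbf{D}_{l}=\sqrt{1-\varepsilon'/(4\coefflen)}\,\mathbf{D}_{0}+\sqrt{\varepsilon'}\,\mathbf{D}_{2,l}$, bounding $\|\mathbf{D}_{0,\mathcal{S}}\|_{2}\le\sqrt{3/2}$ (from RIP of $\mathbf{D}_{0}$) and $\|\mathbf{D}_{2,l,\mathcal{S}}\|_{2}\le\sqrt{\sparsity/(4\coefflen)}$ separately. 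Second, your explanation of the $r^{2}/\sparsity$ term is not the actual mechanism: in the paper it arises from the strengthened feasibility constraint $\varepsilon\le r^{2}/(320\sparsity)$ in Lemma \ref{lem_existence_dicationary_set_desiredata_support_sideinfo}, which is what forces $\sqrt{\varepsilon'\sparsity/\coefflen}$ to stay small enough for the matrix-perturbation theorem to apply---not from ``only $\sparsity$ columns being seen per observation.''
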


The main difference between the bounds \eqref{equ_conditions_s_2_first_charac_minimax_risk_cor_sparse} and \eqref{equ_conditions_s_2_first_charac_minimax_risk_sparse} is their dependence on the {SNR \eqref{equ_def_SNR}}. While the bound \eqref{equ_conditions_s_2_first_charac_minimax_risk_cor_sparse}, which applies to arbitrary coefficient statistics and does not exploit the sparse structure of the model \eqref{equ_model_sparse_coeffs}, depends on the SNR via $1/{\rm SNR}$, the 
bound \eqref{equ_conditions_s_2_first_charac_minimax_risk_sparse} shows a dependence via $1/{\rm SNR}^{2}$. Thus, in the low SNR regime where ${\rm SNR} \ll 1$, the bound \eqref{equ_conditions_s_2_first_charac_minimax_risk_sparse} tends to be tighter, i.e. higher, than the bound \eqref{equ_conditions_s_2_first_charac_minimax_risk_cor_sparse}. 

We now show that the dependence of the bound \eqref{equ_conditions_s_2_first_charac_minimax_risk_sparse} on the SNR via $1/{\rm SNR}^2$ agrees with the basic behavior of the constrained \CRBfull (CCRB) \cite{StoicaNgCCRB}. Indeed, if we assume for simplicity that $\coefflen=\sparsity=1$ and the true dictionary (which is now a vector) is $\mathbf{d} = \mathbf{e}_{1}$, we obtain for the CCRB \cite[Thm. 1]{StoicaNgCCRB}
\begin{equation}
\label{equ_eq_CCRB_DicLearn}
\expect_{{\mY}}  \{ (\widehat{\vd}{(\mY)} - \vd) (\widehat{\vd}{(\mY)} - \vd)^{T} \}  \succeq  {\frac{1}{{\rm SNR}^{2}\measlen^2\samplesize}} ( \mathbf{I} - \mathbf{e}_{1} \mathbf{e}_{1}^{T})
\end{equation} 
for any unbiased learning scheme $\widehat{\vd}{(\mY)}$, i.e., which satisfies $\expect_{{\mY}}  \{ \widehat{\vd}{(\mY)} \} = \vd$.\footnote{Using the notation of \cite{StoicaNgCCRB}, we obtained \eqref{equ_eq_CCRB_DicLearn} from 
\cite[Thm. 1]{StoicaNgCCRB} by using the matrix $\mathbf{U}=(\mathbf{e}_{2},\ldots,\mathbf{e}_{\measlen})$ which forms an orthonormal basis for {the} null space of the gradient mapping $\mathbf{F}(\vd) = \frac{ \partial f(\vd)) }{\partial \vd}$ with the constraint function $f(\vd) = \| \vd\|_{2}^{2}\!-\!1$. Moreover, for evaluating \cite[Thm. 1]{StoicaNgCCRB} we used the formula $J_{k,l} = (1/2) \trace\big\{\mathbf{C}^{-1}(\vd)\frac{\partial \mathbf{C}(\vd)}{\partial d_{k}}\mathbf{C}^{-1}(\vd)\frac{\partial \mathbf{C}(\vd)}{\partial d_{l}} \big\}$ \cite{Mardia1984} for the elements of the Fisher information matrix, which applies for a Gaussian observation with zero mean and whose covariance matrix $\mathbf{C}(\vd)$ depends on the parameter vector $\vd$.} Thus, in this simplified setting, the dependence of the minimax bound \eqref{equ_conditions_s_2_first_charac_minimax_risk} on the SNR via $1/{\rm SNR}^{2}$ is also reflected by the CCRB. 

{Let us finally highlight that the bound in Theorem \ref{thm_main_result_sparse_coeff} is derived by exploiting the (conditional) Gaussianity of the non-zero entries in the coefficient vector. By contrast, the bounds in Theorem \ref{thm_main_result} and Corollary \ref{cor_main_result_sparse_vectors} do not require the non-zero entries to be Gaussian.}

{
\subsection{A partial converse}} 
{
Given the lower bounds on the minimax risk presented in Sections \ref{sub_suc_general_coeff} and \ref{sub_sec_sparse_coding} it is natural to ask wether these are sharp, i.e., 
there exist DL schemes whose worst case MSE comes close to the lower bounds. To this end, we consider a simple instance of the DL problem and analyze the MSE of a very basic 
DL scheme. As it turns out, in certain regimes, the worst case MSE of this simple DL approach essentially matches the lower bound \eqref{equ_conditions_s_2_first_charac_minimax_risk_cor_sparse}.
}

{
\begin{theorem}
\label{thm_partial_converse}
Consider a DL problem based on $\samplesize$ i.i.d. observations according to the model \eqref{equ_linear_model} and with true dictionary satisfying \eqref{eq_dic_belongs_to_small_ball} with $\mD_{0}=\mathbf{I}$ and some {$r \leq 2\sqrt{\coefflen}$}. Furthermore, the random coefficient vector $\coeffvec$ in \eqref{equ_linear_model} {conforms with \eqref{equ_sparse_coeff_support_prob} and \eqref{equ_sparse_coeff_cond_cov_matrix}}. Moreover, the non-zero entries of $\coeffvec$ have magnitude equal to one, i.e., $\coeffvec \in \{-1,0,1\}^{\coefflen}$. If $r\sqrt{\sparsity}\leq 1/10$ and $\sigma \leq 0.4$, there exists a DL scheme whose MSE satisfies 
\begin{equation}
\label{equ_upper_bound_MSE}
\expect_{{\mY}} \{ \| \widehat{\mD}{(\mY)} - \mD \|^{2}_{\rm F} \}  \leq  4(\coefflen^2/\samplesize)  \big[ (1-r)^{2}/{\rm SNR}+1\big]+ 2 \coefflen \exp(- \coefflen \samplesize 0.4^2/(2\sigma^2)),\end{equation}
for any $\mD \in \mathcal{X}(\mD_{0},r)$.
\end{theorem}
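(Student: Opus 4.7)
The plan is to construct an explicit two-stage DL scheme adapted to the near-identity setting ($\mD_{0}=\mathbf{I}$, $\vx\in\{-1,0,1\}^{\coefflen}$) and to split the MSE analysis into a high-probability ``good event'' on which the scheme coincides with an oracle least-squares estimator and a rare ``bad event'' handled by a deterministic bound. Writing $\mD=\mathbf{I}+\mE$ with $\|\mE\|_{\rm F}<r$, the hypothesis $r\sqrt{\sparsity}\leq 1/10$ combined with $X_{j,\obsidx}\in\{-1,0,1\}$ yields the pointwise bounds $|Y_{i,\obsidx}|\leq 1/10+|N_{i,\obsidx}|$ when $i\notin\supp(\vx_{\obsidx})$, and $|Y_{i,\obsidx}-(1+E_{ii})X_{i,\obsidx}|\leq 1/10+|N_{i,\obsidx}|$ otherwise, with $|1+E_{ii}|\geq 1-r\geq 9/10$. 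Hence hard thresholding at level $1/2$ recovers $\vx_{\obsidx}$ exactly whenever $\max_{i,\obsidx}|N_{i,\obsidx}|$ stays below a fixed constant strictly less than $0.4$. The proposed scheme first sets $\widetilde{X}_{i,\obsidx}\defeq\mathrm{sign}(Y_{i,\obsidx})\mathbf{1}\{|Y_{i,\obsidx}|>1/2\}$ and then outputs the least-squares estimate $\widehat{\mD}(\mY)\defeq\mY\widetilde{\mX}^{T}(\widetilde{\mX}\widetilde{\mX}^{T})^{-1}$ when this inverse exists, falling back to $\widehat{\mD}(\mY)\defeq\mathbf{I}$ otherwise.

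Introduce the good event $\mathcal{E}\defeq\{\widetilde{\mX}=\mX\}\cap\{\mX\mX^{T}\succeq\tfrac{1}{2}\samplesize(\sparsity/\coefflen)\mathbf{I}_{\coefflen}\}$ and split $\EX_{\mY}\{\|\widehat{\mD}-\mD\|_{\rm F}^{2}\}$ according to $\mathcal{E}$ and $\mathcal{E}^{c}$. On $\mathcal{E}$ the estimator coincides with the oracle least-squares estimator, so $\widehat{\mD}-\mD=\mN\mX^{T}(\mX\mX^{T})^{-1}$; conditioning on $\mX$ and using the Gaussianity of $\mN$ gives $\EX\{\|\widehat{\mD}-\mD\|_{\rm F}^{2}\mid\mX\}=\coefflen\sigma^{2}\trace((\mX\mX^{T})^{-1})\leq 2\coefflen^{3}\sigma^{2}/(\samplesize\sparsity)$ by the second clause of $\mathcal{E}$. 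Converting via the SNR characterisation \eqref{equ_doulbe_inequ_SNR} (which uses $\delta_{\sparsity}\leq 1/2$) and accounting separately for the $\mathcal{O}(\coefflen^{2}/\samplesize)$ fluctuation of $\mX\mX^{T}$ around its mean (by computing $\sum_{i,j}\mathrm{Var}((\mX\mX^{T})_{i,j})$ directly from the $\pm 1$ structure under \eqref{equ_sparse_coeff_support_prob}--\eqref{equ_sparse_coeff_cond_cov_matrix}) produces the first summand $4(\coefflen^{2}/\samplesize)[(1-r)^{2}/{\rm SNR}+1]$ of \eqref{equ_upper_bound_MSE}; the $(1-r)^{2}$ prefactor comes from keeping the refined bound $|1+E_{ii}|^{2}\geq(1-r)^{2}$ on the signal strength rather than replacing it by $1$.

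For $\mathcal{E}^{c}$ use the deterministic bound $\|\widehat{\mD}-\mD\|_{\rm F}^{2}\leq 2\coefflen$, valid since both $\mD$ and the fallback $\mathbf{I}$ lie in $\mathcal{D}$ and therefore each have squared Frobenius norm $\coefflen$. The remaining and main technical obstacle is establishing the sharp tail bound $\prob(\mathcal{E}^{c})\leq \exp(-\coefflen\samplesize\cdot 0.4^{2}/(2\sigma^{2}))$: a naive union bound over the $\coefflen\samplesize$ entries of $\mN$ delivers only the polynomially weaker rate $2\coefflen\samplesize\exp(-0.4^{2}/(2\sigma^{2}))$, so the claimed exponent requires exploiting the independence of the $\coefflen\samplesize$ Gaussian entries of $\mN$ at the moment-generating-function level in combination with a matrix-Bernstein argument for the Gram-matrix subevent $\{\mX\mX^{T}\not\succeq\tfrac{1}{2}\samplesize(\sparsity/\coefflen)\mathbf{I}\}$. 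The hypothesis $\sigma\leq 0.4$ precisely ensures $0.4^{2}/(2\sigma^{2})\geq 1/2$, which is what makes the resulting aggregate exponent small already for moderate $\coefflen\samplesize$ and yields the second summand of \eqref{equ_upper_bound_MSE}.
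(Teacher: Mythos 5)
Your least-squares route is genuinely different from the paper's (which uses the simple correlation estimator $\widetilde{\vd}_{j}=\frac{\coefflen}{\samplesize\sparsity}\sum_{\obsidx}\hat{x}_{\obsidx,j}\vy_{\obsidx}$ followed by projection onto the unit ball, with no Gram-matrix inversion), but as written it has a decisive gap: the probability estimate for your bad event. You acknowledge that a union bound only gives $\prob(\mathcal{E}^{c})\lesssim \coefflen\samplesize\exp(-0.4^{2}/(2\sigma^{2}))$ and promise to recover the exponent $\coefflen\samplesize\,0.4^{2}/(2\sigma^{2})$ by an MGF/matrix-Bernstein argument, but no such argument can exist for the thresholding sub-event: the probability that at least one noise entry exceeds $0.4$ is at least the single-entry tail $2\mathcal{Q}(0.4/\sigma)$, which does not decrease with $\coefflen\samplesize$, and the probability of a union can never fall below its largest member. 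Worse, the second clause of your good event, $\mX\mX^{T}\succeq\tfrac{1}{2}\samplesize(\sparsity/\coefflen)\mathbf{I}$, fails with a probability that depends only on the coefficient statistics (matrix Bernstein gives at best something like $\coefflen\exp(-c\samplesize\sparsity/\coefflen)$) and not on $\sigma$ at all, so it cannot be absorbed into the $\sigma$-dependent remainder term of \eqref{equ_upper_bound_MSE}; your scheme introduces a failure mode for which the claimed bound simply has no slot. The paper sidesteps both issues by never inverting anything and by taking the bad event to be the noise-only event $\mathcal{C}^{c}$ with $\mathcal{C}=\{\max_{\obsidx,l}|n_{\obsidx,l}|<0.4\}$.

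The conditional analysis on the good event is also not justified as stated. Conditioning on $\mathcal{E}$ constrains the noise (each entry must be small enough that thresholding succeeds), so given $\mathcal{E}$ the matrix $\mN$ is no longer Gaussian, and the identity $\EX\{\|\mN\mX^{T}(\mX\mX^{T})^{-1}\|_{\rm F}^{2}\mid\mX\}=\measlen\sigma^{2}\trace\{(\mX\mX^{T})^{-1}\}$ cannot be invoked. The paper handles exactly this point by conditioning on $\mathcal{C}$ (which under $r\sqrt{\sparsity}\le 1/10$ forces $\widehat{\mX}=\mX$), observing that the conditional noise is truncated Gaussian with variance at most $\sigma^{2}/\nu$ where $\nu=\mathcal{Q}(-0.4/\sigma)-\mathcal{Q}(0.4/\sigma)\ge 1/2$ for $\sigma\le 0.4$ --- this is precisely where the hypothesis $\sigma\le 0.4$ and the overall factor $4$ in \eqref{equ_upper_bound_MSE} originate --- and then computing the signal (bias) term by an explicit fourth-moment calculation yielding $(1/\samplesize)(\coefflen+\coefflen/\sparsity-2)\le 2\coefflen/\samplesize$, which is the $+1$ inside the bracket; the $(1-r)^{2}$ enters via the SNR characterization ${\rm SNR}\ge(1-r)^{2}\sparsity/(\measlen\sigma^{2})$, not via the diagonal entries of $\mD$ as you suggest. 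So while the least-squares design is a legitimate alternative in principle, the proposal does not establish \eqref{equ_upper_bound_MSE}: its central probabilistic step is missing (and unobtainable in the form you need), and its conditional MSE computation rests on a Gaussianity claim that the conditioning destroys.
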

}

{The proof of Theorem \ref{thm_partial_converse}, to be found at the end of Section \ref{proof_architecture_main_result}, will be based on a straightforward analysis of a simple DL method which is given by the following algorithm. 
{
\begin{algorithm}
\label{alg_simple}
\vspace*{2mm}
Input: data matrix $\mY=(\vy_{1},\ldots,\vy_{\samplesize})$ \\
Output: learned dictionary $\widehat{\mD}(\mY)$ \\
\begin{enumerate} 
\item Compute an estimate $\widehat{\mX}$ of the coefficient matrix $\mX=(\vx_{1},\ldots,\vx_{\samplesize})$ by simple element-wise thresholding, i.e., 
\begin{equation} 
\label{equ_def_coeff_decoder}
\widehat{\mX} = \big(\widehat{\vx}_{1},\ldots,\widehat{\vx}_{\samplesize}\big) \mbox{, with } \hat{x}_{\obsidx,l} = \begin{cases} 1 & \mbox{ , if } y_{\obsidx,l} > 0.5 \\ 0 & \mbox { , if } |y_{\obsidx,l}| \leq 0.5 \\  -1 & \mbox{ , if } y_{\obsidx,l} < 0.5 \end{cases}
\end{equation} 
\item For each column-index $j \in [\coefflen]$, define 
\begin{equation}
\widetilde{\vd}_{j} \defeq \frac{\coefflen}{\samplesize\sparsity} \sum_{\obsidx \in [\samplesize]} \hat{x}_{\obsidx,j} \vy_{\obsidx}.
\end{equation} 
\item Output 
\begin{equation} 
\widehat{\mD}(\mY) \defeq \big( \widehat{\vd}_{1},\ldots,\widehat{\vd}_{\coefflen} \big) \mbox{, with } \widehat{\vd}_{l} = \mathbf{P}_{\overline{\mathcal{B}}(\mathbf{e}_{l},\rho)} \widetilde{\vd}_{l}.
\end{equation} 
Here, $\mathbf{P}_{\overline{\mathcal{B}}(1)} \vd \defeq \argmin_{ \vd' \in \overline{\mathcal{B}}(1)} \| \vd' - \vd\|_{2}$ denotes the projection of 
the vector $\vd\!\in\!\mathbb{R}^{\measlen}$ on the closed unit ball $\overline{\mathcal{B}}(1)\defeq \{ \vd' \in \mathbb{R}^{\measlen}: \| \vd' \|_{2} \leq 1 \}$.   
\end{enumerate}
\end{algorithm}
}
{Note that the learned dictionary $\widehat{\mD}(\mY)$ obtained by Algorithm \ref{alg_simple} might not have unit-norm columns so that it 
might not belong to the oblique manifold $\mathcal{D}$. While this is somewhat counter-intuitive, as the true dictionary $\mD$ belongs to $\mathcal{D}$, this fact is 
not relevant for the derivation of upper bounds on the MSE incurred by $\widehat{\mD}(\mY)$.}
}

{According to Theorem \ref{thm_partial_converse}, in the low-SNR regime, i.e., where ${\rm SNR} = o(1)$, and for sufficiently small noise variance, such that $\sigma\leq0.4$ and 
\begin{equation}
 \coefflen \exp(- \coefflen \samplesize 0.4^2/(2\sigma^2)) = o( (\coefflen^2/\samplesize) (1-r)^{2}/{\rm SNR}),
\end{equation} 
the MSE of the DL scheme given by Algorithm \ref{alg_simple} scales as  
\begin{equation}
\label{equ_upper_bound_MSE_simple_DL}
\expect_{{\mY}} \{ \| \widehat{\mD}(\mY) \!-\! \mD \|^{2}_{\rm F} \}  = \mathcal{O}\bigg(\frac{\coefflen^2 (1-r)^{2}}{\samplesize {\rm SNR}}\bigg).
\end{equation}
We highlight that the scaling of the upper bound \eqref{equ_upper_bound_MSE_simple_DL} essentially matches the scaling of the lower bound \eqref{equ_conditions_s_2_first_charac_minimax_risk_cor_sparse}, certifying that the bound of Corollary \ref{cor_main_result_sparse_vectors} is tight in certain regimes.}

\section{Proof of the main results}
\label{proof_architecture_main_result}

Before stating the detailed proofs of Theorem \ref{thm_main_result} and Theorem \ref{thm_main_result_sparse_coeff}, we present the key idea behind and the main ingredients 
used for their proofs. At their core, the proofs of Theorem \ref{thm_main_result} and Theorem \ref{thm_main_result_sparse_coeff} are based on the construction of a finite set $\mathcal{D}_{0} \triangleq \{\mathbf{D}_{1}, \ldots,\mathbf{D}_{L} \}  \subseteq \mathcal{D}$ (cf. \eqref{equ_def_dictionary_set_normalized_cols}) of $L$ distinct dictionaries having the following desiderata: 
\begin{itemize} 
\item For any two dictionaries $\mathbf{D}_{\dicidx}, \mathbf{D}_{\dicidx'} \in \mathcal{D}_{0}$,
\begin{equation} 
\label{equ_cond_frob_norm_diff_delta}
\| \mathbf{D}_{\dicidx} - \mathbf{D}_{\dicidx'} \|^{2}_{\text{F}} \!\geq\! \bar{\delta}_{\dicidx,\dicidx'} 8 \varepsilon.
\end{equation} 
\item If the true dictionary in \eqref{equ_linear_model} is chosen as $\mathbf{D} = \mathbf{D}_{\dicidx} \in \mathcal{D}_{0}$, where $\dicidx$ is selected u.a.r. from $[L]$, then the conditional MI 
between $\mathbf{Y}$ and $\dicidx$, given the side information $\mathbf{T}(\mathbf{X})$,\footnote{{Particular choices for $\mathbf{T}(\mX)$ are discussed at the end of Section \ref{sec_inf_th_lower_bounds_minimax_risk}.}} is bounded as 
\begin{align}
\label{equ_desirata_av_mutual_info_y_index}
\condmiyandlgivenx \leq \eta 
\end{align}
with some small $\eta$.
\end{itemize}

For the verification of the existence of such a set $\mathcal{D}_{0}$, we rely on the following result:
\begin{lemma}
\label{lem_existence_set_B_C}
For $P \in \mathbb{N}$ such that 
\begin{equation} 
\label{equ_cond_set_size_hoeffding_bound}
\log(P)/d < (1\!-\!2{/10})^2/4, 
\end{equation} 
there exists a set $\mathcal{P} \defeq \{\mathbf{b}_{\dicidx} \}_{\dicidx \in [P]}$ of $P$ distinct binary vectors $\mathbf{b}_{\dicidx}  \in \{-1,1\}^{d}$ satisfying 
\begin{equation}
\label{equ_conditiona_diff_at_least_n_half_somewhat_b}
\| \mathbf{b}_{\dicidx} - \mathbf{b}_{\dicidx'} \|_{0} \geq  {d/10}, \mbox{ for any two different indices } \dicidx,\dicidx' \in [P].
\end{equation}
\end{lemma}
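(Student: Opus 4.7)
The plan is to establish existence via the probabilistic method. Consider the random ensemble obtained by drawing $P$ binary vectors $\mathbf{b}_1, \ldots, \mathbf{b}_P$ independently and uniformly from $\{-1,1\}^d$, and show that with strictly positive probability the resulting set $\mathcal{P}$ simultaneously satisfies \eqref{equ_conditiona_diff_at_least_n_half_somewhat_b} for every pair of indices. Since the minimum pairwise Hamming distance $d/10$ is strictly positive, this automatically implies that the $P$ vectors are distinct.

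The first step is to analyze the distribution of the Hamming distance between any two random vectors $\mathbf{b}_\dicidx, \mathbf{b}_{\dicidx'}$. Since the two vectors have i.i.d.\ Rademacher entries, the positions in which they disagree are independent Bernoulli$(1/2)$ trials, so $\| \mathbf{b}_\dicidx - \mathbf{b}_{\dicidx'} \|_0$ follows a Binomial$(d,1/2)$ distribution with mean $d/2$. A one-sided Hoeffding bound with deviation $t = (1/2 - 1/10)d = (2/5)d$ then yields
\begin{equation}
\prob \bigl\{ \| \mathbf{b}_\dicidx - \mathbf{b}_{\dicidx'} \|_0 < d/10 \bigr\} \leq \exp\bigl(-2 d (2/5)^2 \bigr) = \exp(-8d/25).
\end{equation}

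Next, apply a union bound over the $\binom{P}{2} \leq P^{2}/2$ unordered pairs of distinct indices to obtain
\begin{equation}
\prob \bigl\{ \exists\, \dicidx \neq \dicidx' : \| \mathbf{b}_\dicidx - \mathbf{b}_{\dicidx'} \|_0 < d/10 \bigr\} \leq (P^2/2) \exp(-8d/25).
\end{equation}
The hypothesis \eqref{equ_cond_set_size_hoeffding_bound} reads $\log P < d (1 - 2/10)^2/4 = 4d/25$, which immediately gives $2 \log P < 8d/25$, so the right-hand side above is bounded by $(1/2) < 1$. Consequently, a realization satisfying \eqref{equ_conditiona_diff_at_least_n_half_somewhat_b} for every pair exists, which completes the proof.

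I do not anticipate a substantial obstacle; the argument is a standard Gilbert--Varshamov-style probabilistic construction. The only minor bookkeeping point is the exact matching of constants: the threshold $(1 - 2/10)^2/4 = 4/25$ appearing in the hypothesis is tuned precisely so that $2\log P < 8d/25$, which is exactly what the Hoeffding tail probability $\exp(-8d/25)$ combined with the union-bound factor $P^{2}$ requires. A more refined analysis would also allow the strict inequality in \eqref{equ_cond_set_size_hoeffding_bound} to be slightly relaxed using the factor $1/2$ from $\binom{P}{2} \leq P^{2}/2$, but this is inessential for the sequel.
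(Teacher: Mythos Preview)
Your proposal is correct and follows essentially the same approach as the paper: a probabilistic construction with i.i.d.\ uniform $\{-1,1\}^d$ vectors, a Hoeffding tail bound on the pairwise Hamming distance (the paper reparametrizes via the entrywise product $\tilde{\mathbf{b}}=\mathbf{b}_\dicidx\odot\mathbf{b}_{\dicidx'}$ to get a Rademacher sum, but this yields the identical exponent $8d/25$), and a union bound over the $\binom{P}{2}$ pairs. The matching of constants you note is exactly how the paper tunes the hypothesis as well.
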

 
\begin{proof}
We construct the set $\mathcal{P}$ sequentially by drawing i.i.d. realizations $\mathbf{b}_{\dicidx}$ from a standard Bernoulli vector $\mathbf{b} \in \{-1,1\}^{d}$. Consider 
two different indices $\dicidx, \dicidx' \in [P]$. Define the vector $\tilde{\mathbf{b}} \defeq \mathbf{b}_{\dicidx} \odot \mathbf{b}_{\dicidx'}$ by element-wise multiplication and observe that  
\begin{equation}
\label{equ_relation_ell_0_b_j_b_j_prime_sum_tilde_b}
\| \mathbf{b}_{\dicidx} - \mathbf{b}_{\dicidx'} \|_{0} = (1/2) \bigg(\coefflen - \sum_{r \in [d]} \tilde{b}_r \bigg). 
\end{equation}  
Each one of the three vectors $\mathbf{b}_{\dicidx}, \mathbf{b}_{\dicidx'}, \widetilde{\mathbf{b}} \in \{-1,1\}^{d}$ contains zero-mean i.i.d. Bernoulli variables. 
We have 
\begin{align}
\label{equ_bound_ell_0_norm_sum_b_tilde}
\prob \{ \| \mathbf{b}_{\dicidx} - \mathbf{b}_{\dicidx'} \|_{0} \leq  d/10 \} &\stackrel{\eqref{equ_relation_ell_0_b_j_b_j_prime_sum_tilde_b}}{=} \prob \{ (d - \sum_{r \in [d]} \tilde{b}_{r})/2\leq d/10 \} \nonumber \\ 
& = \prob \big\{ \sum_{r \in [d]} \tilde{b}_{r}  \geq  d (1\!-\!2/10) \big\}.
\end{align} 
According to Lemma \ref{lem_hoeffing}, 
\begin{equation} 
\label{equ_hoeffding_bound_tilde_b}
\prob \{  \sum_{r \in [d]} \tilde{b}_{r} \geq  (1\!-\!2/10) d \} \leq  \exp( -  d  (1\!-\!2/10)^2 /2).  
\end{equation} 
Taking a union bound over all $\binom{P}{2}$ pairs $\dicidx,\dicidx' \in [P]$, we have from \eqref{equ_bound_ell_0_norm_sum_b_tilde} and \eqref{equ_hoeffding_bound_tilde_b} that the probability of $P$ i.i.d. draws $\{ \mathbf{b}_{\dicidx} \}_{\dicidx \in [P]}$ 
violating \eqref{equ_conditiona_diff_at_least_n_half_somewhat_b} is upper bounded by 
\begin{equation}
P_{1} \leq  \exp( -   d  (1\!-\!2/10)^2 /2 + 2 \log P),
\end{equation} 
which is strictly lower than $1$ if \eqref{equ_cond_set_size_hoeffding_bound} is valid. Thus, there must exist at least one set $\mathcal{P}=\{\mathbf{b}_{\dicidx}\}_{\dicidx \in [P]}$ of cardinality $P$ whose elements satisfy \eqref{equ_conditiona_diff_at_least_n_half_somewhat_b}. 
\end{proof} 

The following result gives a sufficient condition on the cardinality $L$ and threshold $\eta$ such that there exists {at} least one subset $\mathcal{D}_{0} \subseteq \mathcal{D}$ of $L$ distinct dictionaries satisfying \eqref{equ_cond_frob_norm_diff_delta} and \eqref{equ_desirata_av_mutual_info_y_index}.

\begin{lemma}
\label{lem_existence_dicationary_set_desiredata}
Consider a DL problem based on the {generative} model \eqref{equ_linear_model} such that \eqref{eq_dic_belongs_to_small_ball} holds with some {$r \leq 2\sqrt{\coefflen}$}. 
If $(\measlen \!-\! 1)\coefflen \geq {50}$, 
there exists a set $\mathcal{D}_{0} \subseteq \mathcal{D}$ of cardinality $L \!=\! 2^{(\measlen\!-\!1)\coefflen/5}$ such that 
 \eqref{equ_cond_frob_norm_diff_delta} and \eqref{equ_desirata_av_mutual_info_y_index} (for the side information $\mathbf{T}(\mX)\!=\!\mX$) are satisfied with
\begin{equation}
\label{equ_bound_Q_D_0}
\eta =  {320} \samplesize \| \mathbf{\Sigma}_{x} \|_{2}  \varepsilon/\sigma^{2}
\end{equation}
and
\begin{equation} 
\label{equ_cond_epsilon_exist_dic_set}
\varepsilon \leq r^{2}/{320}.
\end{equation} 
\end{lemma}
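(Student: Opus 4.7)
The plan is to construct $\mathcal{D}_{0}$ as tangential perturbations of $\mD_{0}$ indexed by a well-separated family of binary codewords, and to exploit the Gaussian form of $\mY$ conditional on $\mX$ to bound the conditional mutual information. First, I would apply Lemma \ref{lem_existence_set_B_C} with $d = (\measlen\!-\!1)\coefflen$ and $P = L = 2^{(\measlen\!-\!1)\coefflen/5}$; the hypothesis $\log(P)/d < (1\!-\!2/10)^{2}/4$ reduces to $(\log 2)/5 \approx 0.139 < 0.16$, which holds, so the lemma supplies $L$ binary codewords $\{\vb_{l}\}_{l \in [L]} \subseteq \{-1,1\}^{d}$ with pairwise Hamming distance at least $d/10$.

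Next, I would map each codeword $\vb_{l}$ to a dictionary $\mD_{l} \in \mathcal{D}$ by perturbing $\mD_{0}$ tangentially column by column. Split $\vb_{l}$ into $\coefflen$ blocks $\vb_{l,k} \in \{-1,1\}^{\measlen-1}$, let $\mU_{k} \in \mathbb{R}^{\measlen \times (\measlen-1)}$ have orthonormal columns spanning the tangent space to the unit sphere at $\vd_{0,k}$ (the orthogonal complement of $\vd_{0,k}$ in $\mathbb{R}^{\measlen}$), and set
\begin{equation*}
\vd_{l,k} \defeq \sqrt{1 - \beta^{2}(\measlen\!-\!1)}\, \vd_{0,k} + \beta\, \mU_{k}\vb_{l,k}
\end{equation*}
for a scaling parameter $\beta > 0$ to be tuned. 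Since $\vd_{0,k} \perp \mU_{k}\vb_{l,k}$ and $\|\vb_{l,k}\|_{2}^{2} = \measlen - 1$, each column has unit norm, so $\mD_{l} \in \mathcal{D}$. Choosing $\beta^{2}$ of order $\varepsilon/((\measlen\!-\!1)\coefflen)$, the identity $\|\mD_{l} - \mD_{l'}\|_{\text{F}}^{2} = \beta^{2}\sum_{k}\|\vb_{l,k}-\vb_{l',k}\|_{2}^{2} = 4\beta^{2}\|\vb_{l} - \vb_{l'}\|_{0}$ combined with the Hamming bound gives the separation \eqref{equ_cond_frob_norm_diff_delta}, while the elementary inequality $2 - 2\sqrt{1-a} \leq 2a$ (valid for $a \in [0,1]$) yields $\|\mD_{l} - \mD_{0}\|_{\text{F}}^{2} \leq 2\beta^{2}(\measlen\!-\!1)\coefflen$, which together with $\varepsilon \leq r^{2}/320$ places each $\mD_{l}$ inside $\mathcal{X}(\mD_{0}, r)$.

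For the mutual information bound, I would use that conditional on $\mX$ and $l$, the observation $\mY = \mD_{l}\mX + \mN$ is Gaussian with mean $\mD_{l}\mX$ and covariance $\sigma^{2}\mI$, so the KL divergence between the conditional laws of $\mY$ under $l$ and $l'$ equals $\frac{1}{2\sigma^{2}}\|(\mD_{l}-\mD_{l'})\mX\|_{\text{F}}^{2}$. Taking expectation over $\mX$ via $\expect_{\mX}\{\|\mA\mX\|_{\text{F}}^{2}\} = \samplesize\,\trace(\mA^{T}\mA\mSi_{x}) \leq \samplesize \|\mSi_{x}\|_{2}\|\mA\|_{\text{F}}^{2}$, and then applying the convexity bound $\condmiyandlgivenx \leq L^{-2}\sum_{l,l'}\expect_{\mX}D(p(\mY|l,\mX)\|p(\mY|l',\mX))$ along with $\max_{l,l'}\|\mD_{l} - \mD_{l'}\|_{\text{F}}^{2} \leq 4\max_{l}\|\mD_{l}-\mD_{0}\|_{\text{F}}^{2}$, produces a bound of the form $\condmiyandlgivenx \leq C\, \samplesize \|\mSi_{x}\|_{2}\beta^{2}(\measlen\!-\!1)\coefflen/\sigma^{2}$. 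Calibrating $\beta^{2}$ to match the separation threshold $8\varepsilon$ then yields exactly $\eta = 320\samplesize\|\mSi_{x}\|_{2}\varepsilon/\sigma^{2}$.

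The main obstacle is the delicate balance between the separation requirement (which pushes $\beta$ upward so distinct dictionaries are far apart in Frobenius norm) and the conditional-mutual-information requirement (which pushes $\beta$ downward so different dictionaries induce nearly indistinguishable conditional observation laws). This tension is resolved only at the critical scaling $\beta = \Theta(\sqrt{\varepsilon/((\measlen\!-\!1)\coefflen)})$, and the combinatorial density of the packing in Lemma \ref{lem_existence_set_B_C} ($L = 2^{(\measlen\!-\!1)\coefflen/5}$ codewords with pairwise Hamming distance at least $(\measlen\!-\!1)\coefflen/10$) is precisely sharp enough to make this scaling viable, provided $(\measlen\!-\!1)\coefflen \geq 50$. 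A secondary technical point is ensuring the construction stays on the oblique manifold $\mathcal{D}$, which is handled cleanly by restricting each column's perturbation to the tangent space at $\vd_{0,k}$.
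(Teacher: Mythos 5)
Your proposal is correct and follows essentially the same route as the paper's proof: the same application of Lemma \ref{lem_existence_set_B_C} with $d=(\measlen\!-\!1)\coefflen$, the same column-wise tangential (unitary-rotated) perturbation of $\mD_{0}$ with renormalization to stay on $\mathcal{D}$, and the same convexity-of-KL plus Gaussian-KL plus $\trace\{\mA^{T}\mA\mSi_{x}\}\leq\|\mSi_{x}\|_{2}\|\mA\|_{\rm F}^{2}$ argument for bounding $\condmiyandlgivenx$. Your calibration of $\beta$ even gives a slightly smaller constant than $320$, which is harmless since \eqref{equ_desirata_av_mutual_info_y_index} only requires an upper bound with the stated $\eta$.
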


\begin{proof}
According to Lemma \ref{lem_existence_set_B_C}, for $(\measlen\!-\!1) \coefflen \!\geq\! {50}$, 
there is a set of $L$ matrices $\mathbf{D}_{1,l} \!\in\! (1/\sqrt{{4}(\measlen\!-\!1)\coefflen}) \{-1,1\}^{(\measlen\!-\!1) \times \coefflen}$, $l \in [L]$ with $L\geq 2^{(\measlen\!-\!1)\coefflen/5}$, such that 
\begin{equation} 
\label{equ_min_distance_D_1}
\| \mD_{1,\dicidx}\!-\! \mD_{1,\dicidx'} \|_{\rm{F}}^{2} \geq {1/40} \mbox{ for } \dicidx \neq \dicidx'.
\end{equation} 
Since the matrices $\mD_{1,\dicidx} \!\in\! \mathbb{R}^{{(\measlen\!-\!1)} \times \coefflen}${, for $\dicidx \in [L]$,} have entries with values in $(1/\sqrt{{4}(\measlen\!-\!1)\coefflen}) \{-1,1\}$ their columns all have norm equal to $1/\sqrt{{4}\coefflen}$. 

Based on the matrices $\mathbf{D}_{1,\dicidx}{\in \mathbb{R}^{(\measlen\!-\!1) \times \coefflen}}$, we now construct a modified set of matrices {$\mathbf{D}_{2,\dicidx} \!\in\! \mathbb{R}^{\measlen\times\coefflen}$}, $\dicidx \in [L]$. Let $\mathbf{U}_{j}$ denote {an arbitrary $\measlen \times \measlen$} unitary matrix satisfying 
\begin{equation} 
\label{equ_orthog_d_0_e_1}
\mathbf{d}_{0,j} = \mathbf{U}_{j}  \mathbf{e}_{1}.
\end{equation} 
Here, $\mathbf{d}_{0,j}$ denotes the $j$th column of $\mathbf{D}_{0}{\in \mathbb{R}^{\measlen \!\times\! \coefflen}}$. 
Then, we define the matrix $\mathbf{D}_{2,\dicidx}$ column-wise, by constructing its $j$th column $\mathbf{d}_{2,\dicidx,j}$ as 
\begin{equation}
\label{equ_def_d_2_d_1_colwise}
\mathbf{d}_{2,\dicidx,j} = \mathbf{U}_{j} \begin{pmatrix} 0 \\ \mathbf{d}_{1,\dicidx,j} \end{pmatrix},
\end{equation}
where $\mathbf{d}_{1,\dicidx,j}$ is the $j$th column of the matrix $\mathbf{D}_{1,\dicidx}$.
Note that, for any $\dicidx \in [L]$, the $j$th column $\mathbf{d}_{2,\dicidx,j}$ of $\mathbf{D}_{2,\dicidx}$ is orthogonal to the column $\mathbf{d}_{0,j}$ and has norm equal to $1/\sqrt{{4}\coefflen}$, i.e., 
\begin{equation} 
\label{equ_col_D_2_orthog_norm1_p}
{\rm{diag}} \{ \mathbf{D}_{0}^{T} \mathbf{D}_{2,\dicidx} \} = \mathbf{0} \mbox{, and } {\rm{diag}} \{  \mathbf{D}_{2,\dicidx}^{T}  \mathbf{D}_{2,\dicidx} \} = \frac{1}{{4\coefflen}} \mathbf{1} \mbox{ for any } \dicidx \in [L]. 
\end{equation} 
Moreover, for two distinct indices $\dicidx, \dicidx' \in [L]$, we have 
\begin{equation}
\label{equ_lower_bound_squared_frob_distance} 
\| \mathbf{D}_{2,\dicidx} - \mathbf{D}_{2,\dicidx'} \|_{\rm{F}}^{2}  \stackrel{\eqref{equ_def_d_2_d_1_colwise}}{=} \| \mathbf{D}_{1,\dicidx} - \mathbf{D}_{1,\dicidx'} \|_{\rm{F}}^{2} 
\stackrel{\eqref{equ_min_distance_D_1}}{\geq} {1/40}.
\end{equation} 

Consider the matrices $\mathbf{D}_{\dicidx}$,  
\vspace*{-1mm}
\begin{equation}
\label{equ_def_matrix_D_prob_construction}
\mathbf{D}_{\dicidx} = \sqrt{1\!-\!\varepsilon'/({4\coefflen})} \mathbf{D}_{0} + \sqrt{\varepsilon'} \mathbf{D}_{2,\dicidx},
\vspace*{-1mm}
\end{equation} 
where $\dicidx \in [L]$ and 
\begin{equation} 
\label{equ_def_vareps_prime}
\varepsilon' \triangleq {320} \varepsilon.
\end{equation} 
The construction \eqref{equ_def_matrix_D_prob_construction} is feasible, since \eqref{equ_cond_epsilon_exist_dic_set} guarantees {$\varepsilon' \leq r^{2} \leq 4\coefflen$}.
We will now verify that the matrices $\mathbf{D}_{\dicidx}$, for $\dicidx \in [L]$, belong to ${\mathcal{X}}(\mathbf{D}_{0},r)$ {and moreover are such that \eqref{equ_cond_frob_norm_diff_delta} and \eqref{equ_desirata_av_mutual_info_y_index}, with $\eta$ given in \eqref{equ_bound_Q_D_0}, is satisfied.}

\emph{$\mathbf{D}_{\dicidx}$ belongs to ${\mathcal{X}}(\mathbf{D}_{0},r)$:}
Consider the $j$th column $\mathbf{d}_{\dicidx,j}$, $\mathbf{d}_{0,j}$ and $\mathbf{d}_{2,\dicidx,j}$ of $\mathbf{D}_{\dicidx}$, $\mathbf{D}_{0}$ and $\mathbf{D}_{2,\dicidx}$, respectively. {Then} 
\begin{align}
\vspace*{-8mm}
\| \mathbf{d}_{\dicidx,j} \|^{2}_{2}  & \stackrel{\eqref{equ_col_D_2_orthog_norm1_p},{\eqref{equ_def_matrix_D_prob_construction}}}{=}   (1-\varepsilon'/{(4\coefflen)}) \|\mathbf{d}_{0,j}\|_{2}^{2} +  \varepsilon' \| \mathbf{d}_{2,\dicidx,j} \|^{2}_{2}  \stackrel{\eqref{equ_col_D_2_orthog_norm1_p}}{=}  (1-\varepsilon'/{(4\coefflen)}) +   (\varepsilon'/{(4\coefflen)}) = 1. 
\end{align}
Thus, the columns of any $\mathbf{D}_{l}$, for $l \in [L]$, have unit norm. 
Moreover, 
\begin{align}
\| \mathbf{D}_{\dicidx} - \mathbf{D}_{0} \|^{2}_{\rm{F}}  & \stackrel{\eqref{equ_def_matrix_D_prob_construction}}{=} \| (1\!-\!\sqrt{1\!-\! \varepsilon'/{(4\coefflen)}})\mathbf{D}_{0} - \sqrt{\varepsilon'}\mathbf{D}_{2,\dicidx}  \|^{2}_{\rm{F}}   \nonumber \\[3mm]
& \stackrel{\eqref{equ_col_D_2_orthog_norm1_p}}{=}(1\!-\!\sqrt{1\!-\! \varepsilon'/{(4\coefflen)}})^{2} \|  \mathbf{D}_{0}  \|^{2}_{\rm{F}} \!+\!  \varepsilon' \| \mathbf{D}_{2,\dicidx}  \|^{2}_{\rm{F}} \nonumber \\[3mm] 
& \stackrel{\eqref{equ_col_D_2_orthog_norm1_p}}{=}(1\!-\!\sqrt{1\!-\! \varepsilon'/{(4\coefflen)}})^{2} \|  \mathbf{D}_{0}  \|^{2}_{\rm{F}} \!+\!  \varepsilon'{/4} \nonumber \\[3mm] 
& {\stackrel{\varepsilon'/(4\coefflen)\leq1,\mD_{0} \in \mathcal{D}}{\leq}} (\varepsilon'/{(4\coefflen)})^{2} \coefflen + \varepsilon'{/4}  \nonumber \\[3mm] 
& \stackrel{\varepsilon'/{(4\coefflen)}\leq 1}{\leq}  {(1/2)\varepsilon'} \nonumber \\[3mm] 
& \stackrel{\eqref{equ_cond_epsilon_exist_dic_set}}{\leq} r^{2}. \nonumber
\end{align} 

\emph{Lower bounding $\| \mathbf{D}_{\dicidx} - \mathbf{D}_{\dicidx'} \|_{\text{F}}^{2}$:}
The squared distance between two different matrices $\mathbf{D}_{\dicidx}$ and $\mathbf{D}_{\dicidx'}$ is obtained as 
\begin{align}
\label{equ_squared_frob_norm_diff_dic_general_1}
\| \mathbf{D}_{\dicidx} - \mathbf{D}_{\dicidx'} \|_{\text{F}}^{2} 
& \stackrel{\eqref{equ_def_matrix_D_prob_construction}}{=} \varepsilon'  \| \mathbf{D}_{2,\dicidx} - \mathbf{D}_{2,\dicidx'}  \|_{\rm{F}}^{2} \nonumber \\[3mm]
& \stackrel{\eqref{equ_lower_bound_squared_frob_distance}}{\geq}    {\varepsilon' /40}.
\end{align} 
Thus, we have verified 
\begin{equation}
\| \mathbf{D}_{\dicidx} \!-\! \mathbf{D}_{\dicidx'} \|_{\text{F}}^{2}  \geq {\varepsilon' /40}  \stackrel{\eqref{equ_def_vareps_prime}}{=}  8 \varepsilon, 
\end{equation} 
for any two different $\dicidx, \dicidx' \in [L]$. 

\emph{Upper bounding $\condmiyandlgivenx$:}
We will now upper bound the conditional MI $\condmiyandlgivenx$, conditioned on the side information $\mathbf{T}(\mX)\!=\!\mX$, between the observation $\mathbf{Y}$ and the index $\dicidx$ of the true dictionary $\mathbf{D} = \mathbf{D}_{\dicidx} \in \mathcal{D}_{0}$ in \eqref{equ_linear_model}. Here, the random index $\dicidx$ is taken u.a.r. from the set $[L]$. 
First, note that the dictionaries $\mathbf{D}_{\dicidx}$ given by \eqref{equ_def_matrix_D_prob_construction}, satisfy 
\begin{align}
\| \mathbf{D}_{\dicidx} - \mathbf{D}_{\dicidx'} \|_{\rm{F}}^{2} &\stackrel{\eqref{equ_def_matrix_D_prob_construction}}{=} \varepsilon'  \| \mathbf{D}_{2,\dicidx} - \mathbf{D}_{2,\dicidx'}  \|_{\rm{F}}^{2} \nonumber \\[3mm]
& \leq \varepsilon' \big(  \| \mathbf{D}_{2,\dicidx}  \|_{\rm{F}} +  \|  \mathbf{D}_{2,\dicidx'}  \|_{\rm{F}} \big)^{2} \nonumber \\[3mm]
& =  4  \varepsilon' \|   \mathbf{D}_{2,\dicidx} \|^{2}_{\rm{F}} \nonumber \\[3mm]
& \stackrel{\eqref{equ_col_D_2_orthog_norm1_p}{, \eqref{equ_def_vareps_prime}}}{=} {320 \varepsilon}.  \label{equ_upper_bound_squared_norm_diff_dics}
\end{align}

According to our observation model \eqref{equ_linear_model}, conditioned on the coefficients $\mathbf{x}_{\obsidx}$, the observations $\mathbf{y}_{\obsidx}$ follow a 
multivariate Gaussian distribution with covariance matrix $\sigma^{2} \mathbf{I}$ and mean vector $\mathbf{D} \mathbf{x}_{\obsidx}$. Therefore, we can employ 
a standard argument {based on the convexity of the Kullback-Leibler (KL) divergence (see, e.g., \cite{Wain2009TIT})} to upper bound $\condmiyandlgivenx$ as 
\begin{equation} 
\label{equ_upper_bound_condmi_given_x_expect_kl}
\condmiyandlgivenx \leq \frac{1}{L^{2}} \sum_{\dicidx,\dicidx' \in [L]} \expect_{{\mX}}  \{ D(f_{\mathbf{D}_{\dicidx}}(\mathbf{Y}|\mX) || f_{\mathbf{D}_{\dicidx'}}(\mathbf{Y}|\mX)) \},
\end{equation}
where $D(f_{\mathbf{D}_{\dicidx}}(\mathbf{Y}|\mathbf{X})||f_{\mathbf{D}_{\dicidx'}}(\mathbf{Y}|\mathbf{X}))$ denotes the KL divergence between the conditional probability density functions (given the coefficients $\coeffmtx\!=\!\big(\coeffvec_{1},\ldots,\coeffvec_{\samplesize}\big)$) 
of the observations $\mathbf{Y}$ for the true dictionary being either $\mathbf{D}_{\dicidx}$ or $\mathbf{D}_{\dicidx'}$.
Since, given the coefficients $\mathbf{X}$, the observations $\mathbf{y}_{\obsidx}$ are independent multivariate Gaussian random vectors {with mean $\mD \vx_{\obsidx}$ 
and the same covariance matrix $\sigma^{2} \mathbf{I}_{\measlen}$}, we can apply the formula \cite[Eq. (3)]{DurrieuKelly2012} {for the KL-divergence} to obtain 
\begin{align}
D(f_{\mathbf{D}_{\dicidx}}(\mathbf{Y}|\mathbf{X})||f_{\mathbf{D}_{\dicidx'}}(\mathbf{Y}|\mathbf{X})) & =  \sum_{\obsidx \in [\samplesize]} \frac{1}{2\sigma^{2}} \| (\mathbf{D}_{\dicidx} \!-\! \mathbf{D}_{\dicidx'}) \mathbf{x}_{\obsidx} \|^{2}  \nonumber \\
& =  \sum_{\obsidx \in [\samplesize]} \frac{1}{2\sigma^{2}} \mbox{Tr} \big\{  (\mathbf{D}_{\dicidx} \!-\! \mathbf{D}_{\dicidx'})^{T}  (\mathbf{D}_{\dicidx} \!-\! \mathbf{D}_{\dicidx'}) \mathbf{x}_{\obsidx} \mathbf{x}_{\obsidx}^{T} \big\}. 
\label{eq_expr_kl_div_pdf_y_condX_D_l}
\end{align}
Inserting \eqref{eq_expr_kl_div_pdf_y_condX_D_l} into \eqref{equ_upper_bound_condmi_given_x_expect_kl} and using \eqref{equ_upper_bound_squared_norm_diff_dics} as well as 
\begin{equation}
\mbox{Tr} \{ \mathbf{A}^{T} \mathbf{A} \mathbf{\Sigma}_{x} \} \leq  \| \mathbf{\Sigma}_{x} \|_{2} \| \mathbf{A} \|_{\rm{F}}^{2}, \nonumber 
\end{equation} 
yields 
\begin{equation}
\condmiyandlgivenx \leq \frac{{320} \samplesize \| \mathbf{\Sigma}_{x} \|_{2}\varepsilon}{\sigma^{2}} 
\end{equation}  
{completing the proof.}
\end{proof}

For the proof of Theorem \ref{thm_main_result_sparse_coeff} we will need a variation of Lemma \ref{lem_existence_dicationary_set_desiredata}, which is based on using the side information $\mathbf{T}(\mathbf{X}) \!=\!\supp(\mathbf{X})$ instead of $\mathbf{X}$ itself. 

\begin{lemma}
\label{lem_existence_dicationary_set_desiredata_support_sideinfo}
Consider a DL problem based on the {generative} model \eqref{equ_linear_model} such that \eqref{eq_dic_belongs_to_small_ball} holds with some {$r \leq 2\sqrt{\coefflen}$}. The random sparse coefficients $\vx$ are distributed 
according to \eqref{equ_model_sparse_coeffs} with {${\rm SNR} \leq (1/(9 \sqrt{80})) \measlen/(2\sparsity)$}. We assume that the reference dictionary $\mD_{0}$ satisfies {the} RIP of order $\sparsity$ with constant $\delta_{\sparsity} \leq 1/2$.

If $(\measlen \!-\! 1)\coefflen \geq {50}$ then 
there exists a set $\mathcal{D}_{0} \subseteq \mathcal{D}$ of cardinality $L = 2^{(\measlen\!-\!1)\coefflen/5}$ such that 
 \eqref{equ_cond_frob_norm_diff_delta} and \eqref{equ_desirata_av_mutual_info_y_index}, for the side information $\mathbf{T}(\mathbf{X})\!=\!\supp(\mathbf{X})$, are satisfied with
\begin{equation}
\label{equ_bound_Q_D_0support_side_info}
\eta = {12960 \samplesize {\rm SNR}^{2} \measlen^{2}  \varepsilon/\coefflen},
\end{equation}
and 
\begin{equation} 
\label{equ_cond_epsilon_exist_dic_set_support_side_info}
\varepsilon \leq  {r^{2}/(320 \sparsity)}.
\end{equation} 
\end{lemma}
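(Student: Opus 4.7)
The strategy is to mimic the proof of Lemma \ref{lem_existence_dicationary_set_desiredata} almost verbatim for the packing construction, and then to redo the mutual-information step with a Gaussian KL calculation adapted to conditioning on the supports rather than on the full coefficient matrix. Concretely, I would reuse the matrices $\mD_{1,\dicidx}\in(1/\sqrt{4(\measlen-1)\coefflen})\{-1,1\}^{(\measlen-1)\times\coefflen}$ from Lemma \ref{lem_existence_set_B_C}, lift them to $\mD_{2,\dicidx}$ via \eqref{equ_def_d_2_d_1_colwise}, and define $\mD_\dicidx = \sqrt{1-\varepsilon'/(4\coefflen)}\mD_0+\sqrt{\varepsilon'}\mD_{2,\dicidx}$ as in \eqref{equ_def_matrix_D_prob_construction}, but this time with $\varepsilon' \defeq 320\sparsity\varepsilon$. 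The hypothesis \eqref{equ_cond_epsilon_exist_dic_set_support_side_info} forces $\varepsilon' \leq r^2\leq 4\coefflen$, so the construction is feasible; membership $\mD_\dicidx\in\mathcal{X}(\mD_0,r)$ and the packing lower bound $\|\mD_\dicidx-\mD_{\dicidx'}\|_{\rm F}^2\geq \varepsilon'/40 = 8\sparsity\varepsilon\geq 8\varepsilon$ then follow by the same chains of inequalities as before, producing \eqref{equ_cond_frob_norm_diff_delta}.

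For the MI bound, conditional on the i.i.d.\ support realizations $\mathcal{S}_\obsidx=\supp(\vx_\obsidx)$, the Gaussian model \eqref{equ_model_sparse_coeffs} makes the $\vy_\obsidx$ independent zero-mean Gaussians with covariance $\Sigma_\dicidx(\mathcal{S}_\obsidx)\defeq \sigma_a^2\mD_{\dicidx,\mathcal{S}_\obsidx}\mD_{\dicidx,\mathcal{S}_\obsidx}^T+\sigma^2 \mathbf{I}_\measlen$, where $\mD_{\dicidx,\mathcal{S}}$ is the column-submatrix. The same convexity argument that yielded \eqref{equ_upper_bound_condmi_given_x_expect_kl} gives
\begin{equation}
\condmiyandlgiveni \leq \frac{\samplesize}{L^2}\sum_{\dicidx,\dicidx'\in[L]}\expect_{\mathcal{S}}\bigl\{D\bigl(\mathcal{N}(\mathbf{0},\Sigma_\dicidx(\mathcal{S}))\,\|\,\mathcal{N}(\mathbf{0},\Sigma_{\dicidx'}(\mathcal{S}))\bigr)\bigr\},\nonumber
\end{equation}
so the task reduces to controlling a Gaussian-to-Gaussian KL divergence in which only the covariance changes. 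The RIP hypothesis $\delta_\sparsity\leq 1/2$ guarantees $\lambda_{\min}(\Sigma_{\dicidx'}(\mathcal{S}))\geq\sigma^2$ and $\lambda_{\max}(\mD_{\dicidx,\mathcal{S}}\mD_{\dicidx,\mathcal{S}}^T)\leq 3/2$ uniformly, and a short perturbation argument shows the RIP is inherited (up to constants) by every $\mD_\dicidx$ in the packing, since the perturbation $\sqrt{\varepsilon'}\mD_{2,\dicidx}$ has Frobenius norm $\sqrt{\varepsilon'/4}$ which is small under \eqref{equ_cond_epsilon_exist_dic_set_support_side_info}.

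The technical heart of the argument, and what I expect to be the main obstacle, is to extract a \emph{quadratic} rather than linear dependence on $\sigma_a^2/\sigma^2$ from the KL divergence. Writing $A(\mathcal{S}) = \Sigma_{\dicidx'}(\mathcal{S})^{-1/2}\bigl(\Sigma_\dicidx(\mathcal{S})-\Sigma_{\dicidx'}(\mathcal{S})\bigr)\Sigma_{\dicidx'}(\mathcal{S})^{-1/2}$ and expanding $\log\det(\mathbf{I}+A)=\trace(A)-(1/2)\trace(A^2)+\cdots$, the first-order term in the KL formula cancels against $\trace(A)$, leaving $D=\tfrac{1}{4}\|A(\mathcal{S})\|_F^2+O(\|A(\mathcal{S})\|_2\cdot\|A(\mathcal{S})\|_F^2)$. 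The low-SNR hypothesis ${\rm SNR}\leq (1/(9\sqrt{80}))\measlen/(2\sparsity)$ together with \eqref{equ_doulbe_inequ_SNR} and the RIP are exactly what is needed to guarantee $\|A(\mathcal{S})\|_2\leq 1/2$, validating the quadratic truncation with a universal constant. Bounding $\|A(\mathcal{S})\|_F^2\leq (1/\sigma^4)\sigma_a^4\|\mD_{\dicidx,\mathcal{S}}\mD_{\dicidx,\mathcal{S}}^T-\mD_{\dicidx',\mathcal{S}}\mD_{\dicidx',\mathcal{S}}^T\|_F^2$ and then, via the telescoping identity and the RIP bound $\|\mD_{\dicidx,\mathcal{S}}\|_2\leq\sqrt{3/2}$, by $(\sigma_a^4/\sigma^4)\cdot 6\|\mD_{\dicidx,\mathcal{S}}-\mD_{\dicidx',\mathcal{S}}\|_F^2$, I would then average over the uniform support distribution \eqref{equ_sparse_coeff_support_prob} using $\expect_{\mathcal{S}}\|\mD_{\dicidx,\mathcal{S}}-\mD_{\dicidx',\mathcal{S}}\|_F^2=(\sparsity/\coefflen)\|\mD_\dicidx-\mD_{\dicidx'}\|_F^2$ and the packing upper bound $\|\mD_\dicidx-\mD_{\dicidx'}\|_F^2\leq 4\varepsilon'=1280\sparsity\varepsilon$ obtained exactly as in \eqref{equ_upper_bound_squared_norm_diff_dics}. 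Combining these pieces yields $\condmiyandlgiveni \leq C\samplesize(\sparsity^2\sigma_a^4/\sigma^4)\varepsilon/\coefflen$, and using \eqref{equ_doulbe_inequ_SNR} with $\delta_\sparsity\leq 1/2$ to substitute $\sparsity^2\sigma_a^4/\sigma^4\leq 4\measlen^2{\rm SNR}^2$ produces precisely the claimed bound \eqref{equ_bound_Q_D_0support_side_info} once the universal constants are tracked through.
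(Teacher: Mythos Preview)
The overall architecture you propose is close to the paper's, but there is a genuine gap in your choice of $\varepsilon'$ and the smallness verifications that hinge on it.

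\emph{The $\varepsilon'$ scaling is wrong.} The paper does \emph{not} rescale $\varepsilon'$: it keeps $\varepsilon'=320\varepsilon$ exactly as in Lemma~\ref{lem_existence_dicationary_set_desiredata}, simply observing that \eqref{equ_cond_epsilon_exist_dic_set_support_side_info} implies \eqref{equ_cond_epsilon_exist_dic_set} (since $\sparsity\geq1$), so the same packing can be reused. With your choice $\varepsilon'=320\sparsity\varepsilon$, the key quantity $\varepsilon'\sparsity/(4\coefflen)$ is only bounded by $\sparsity$ (via $\varepsilon'\leq r^{2}\leq4\coefflen$), not by $1$. Two things then break. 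First, your RIP-inheritance claim fails: one has $\|\mD_{\dicidx,\mathcal{S}}\|_{2}\leq\sqrt{3/2}+\sqrt{\varepsilon'\sparsity/(4\coefflen)}$, which can be as large as $\sqrt{3/2}+\sqrt{\sparsity}$, so the bound $\|\mD_{\dicidx,\mathcal{S}}\|_{2}\leq\sqrt{3/2}$ you use in the telescoping step is unjustified (and a small total Frobenius norm of the perturbation does not by itself control spectral norms of $\sparsity$-column submatrices). Second, the validation $\|A(\mathcal{S})\|_{2}\leq1/2$ for your quadratic truncation of the KL divergence fails: one gets $\|A(\mathcal{S})\|_{2}\lesssim(\sigma_{a}/\sigma)^{2}\sqrt{\varepsilon'\sparsity/(4\coefflen)}\lesssim(\sigma_{a}/\sigma)^{2}\sqrt{\sparsity}$, which exceeds $1/2$ for moderate $\sparsity$ even under the low-SNR hypothesis. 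With the correct $\varepsilon'=320\varepsilon$, condition \eqref{equ_cond_epsilon_exist_dic_set_support_side_info} gives $\varepsilon'\sparsity/(4\coefflen)\leq r^{2}/(4\coefflen)\leq1$, and both checks go through.

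\emph{A methodological difference.} The paper never claims the RIP for the $\mD_{\dicidx}$. It expands $\mathbf{\Sigma}_{\obsidx,\dicidx}-\mathbf{\Sigma}_{\obsidx,\dicidx'}$ directly via the construction \eqref{equ_def_matrix_D_prob_construction} (so only the RIP of $\mD_{0}$ is used), bounds it in \emph{spectral} norm by $4.5\,\sigma_{a}^{2}\sqrt{\varepsilon'\sparsity/(4\coefflen)}$, uses $\rank\{\mathbf{\Sigma}_{\obsidx,\dicidx}-\mathbf{\Sigma}_{\obsidx,\dicidx'}\}\leq2\sparsity$ to pass from the trace to $2\sparsity$ times a product of spectral norms, and controls $\|\mathbf{\Sigma}_{\obsidx,\dicidx}^{-1}-\mathbf{\Sigma}_{\obsidx,\dicidx'}^{-1}\|_{2}$ via a matrix-inverse perturbation bound. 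Your Frobenius/Taylor route combined with the support-averaging identity $\expect_{\mathcal{S}}\|\mD_{\dicidx,\mathcal{S}}-\mD_{\dicidx',\mathcal{S}}\|_{\rm F}^{2}=(\sparsity/\coefflen)\|\mD_{\dicidx}-\mD_{\dicidx'}\|_{\rm F}^{2}$ is a legitimate alternative and, carried out with the correct $\varepsilon'=320\varepsilon$, would in fact produce an upper bound on the conditional MI that is tighter by a factor $\sparsity$ than \eqref{equ_bound_Q_D_0support_side_info}; that still proves the lemma. The extra factor of $\sparsity$ you tried to manufacture by inflating $\varepsilon'$ is thus unnecessary---and inserting it is precisely what invalidates the smallness arguments.
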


\begin{proof}
We will use the same ensemble $\mathcal{D}_{0}$ (cf.\ \eqref{equ_def_matrix_D_prob_construction}) as in the proof of Lemma \ref{lem_existence_dicationary_set_desiredata} (note that condition \eqref{equ_cond_epsilon_exist_dic_set_support_side_info} implies \eqref{equ_cond_epsilon_exist_dic_set} since $\sparsity \geq 1$).
Thus, we already verified in the proof of Lemma \ref{lem_existence_dicationary_set_desiredata} that $\mathcal{D}_{0} \subseteq {\mathcal{X}}(\mathbf{D}_{0},r)$ and \eqref{equ_cond_frob_norm_diff_delta} is satisfied. 

\emph{Upper bounding $\condmiyandlgivenx$:}
We will now upper bound the conditional MI $\condmiyandlgivenx$, conditioned on the side information $\mathbf{T}(\mathbf{X})=\supp(\mathbf{X})$, between the observation $\mathbf{Y}\!=\!\big(\mathbf{y}_{1},\ldots,\mathbf{y}_{\samplesize}\big)$ and the index $\dicidx$ of the true dictionary $\mathbf{D} = \mathbf{D}_{\dicidx} \in \mathcal{D}_{0}$ in \eqref{equ_linear_model}. Here, the random index $\dicidx$ is taken u.a.r. from the set $[L]$ and the 
conditioning is w.r.t. the random supports $\supp(\mathbf{X}) \!=\! \big( \supp(\mathbf{x}_{1}),\ldots,\supp(\mathbf{x}_{\samplesize}) \big)$ of the coefficient vectors $\mathbf{x}_{\obsidx}$, being i.i.d. realizations 
of the sparse vector $\mathbf{x}$ given by \eqref{equ_model_sparse_coeffs}. Let us introduce for the following the shorthand $\mathcal{S}_{\obsidx} \defeq \supp(\mathbf{x}_{\obsidx})$. 

Note that, conditioned on {$\mathcal{S}_{\obsidx}$}, the columns of the matrix $\mathbf{Y}$, i.e., the observed samples $\mathbf{y}_{\obsidx}$ are independent multivariate Gaussian random vectors with zero mean and 
covariance matrix 
\begin{equation} 
\mathbf{\Sigma}_{\obsidx}  = \sigma_{a}^{2} \mathbf{D}_{\mathcal{S}_{\obsidx}}  \mathbf{D}^{T}_{\mathcal{S}_{\obsidx}}+ \sigma^{2} \mathbf{I}. 
\end{equation}

Thus, according to \cite[Eq. (18)]{WangWain2010}, we can use the following bound on the conditional MI
\begin{equation} 
\label{equ_KL_based_bound_MI}
\condmiyandlgivenx \leq  \expect_{\mathbf{T}(\mathbf{X})} \bigg \{ \sum_{\obsidx \in [\samplesize]} (1/L^2) \sum_{\dicidx,\dicidx' \in [L]}  \mbox{Tr} \big\{ \big[ \mathbf{\Sigma}^{-1}_{\obsidx,\dicidx} - \mathbf{\Sigma}^{-1}_{\obsidx,\dicidx'}  \big] 
\big[  \mathbf{\Sigma}_{\obsidx,\dicidx'} - \mathbf{\Sigma}_{\obsidx,\dicidx}  \big] \big\}  \bigg \} 
\end{equation} 
with
\begin{equation}
\label{equ_def_sigma_dicidx_mathcals}  
\mathbf{\Sigma}_{\obsidx,\dicidx} \defeq \sigma_{a}^{2} \mathbf{D}_{\dicidx,\mathcal{S}_{\obsidx}}\mathbf{D}^{T}_{\dicidx,\mathcal{S}_{\obsidx}} + \sigma^{2} \mathbf{I}.
\end{equation} 
Here, $\expect_{\mathbf{T}(\mathbf{X})} \big\{  \cdot \big \}$ denotes expectation with respect to the side information $\mathbf{T}(\coeffmtx)\!=\!\big({\mathcal{S}_{1}},\ldots,{\mathcal{S}_{\samplesize}}\big)$ which is distributed uniformly over the $\samplesize$-fold product $\Xi \times \ldots \times \Xi$ (cf.\ \eqref{equ_sparse_coeff_support_prob}). 
Since any of the matrices $\mathbf{\Sigma}_{\obsidx,\dicidx}$ is made up of the common component $\sigma^{2} \mathbf{I}$ and the individual component $\sigma_{a}^{2} \mathbf{D}_{\dicidx,\mathcal{S}_{\obsidx}}\mathbf{D}^{T}_{\dicidx,\mathcal{S}_{\obsidx}}$, which has 
rank not larger than $\sparsity$, for any two $\dicidx, \dicidx' \in [L]$, the difference $\mathbf{\Sigma}_{\obsidx,\dicidx}\!-\!\mathbf{\Sigma}_{\obsidx,\dicidx'}$ satisfies 
\begin{equation} 
\label{equ_rank_leq_2s}
\rank \big\{ \mathbf{\Sigma}_{\obsidx,\dicidx}\!-\!\mathbf{\Sigma}_{\obsidx,\dicidx'}  \big\} \leq 2 \sparsity. 
\end{equation} 
Therefore, using $\mbox{Tr}\{ \mathbf{A} \} \leq \rank \{ \mathbf{A} \} \| \mathbf{A} \|_{2}$ and \eqref{equ_rank_leq_2s}, we can rewrite \eqref{equ_KL_based_bound_MI} as 
\begin{equation} 
\label{equ_upper_bound_cmi_kl_div}
\condmiyandlgivenx \leq  2 \sparsity \expect_{\mathbf{T}(\mathbf{X})} \bigg \{ \sum_{\obsidx \in [\samplesize]} \frac{1}{L^{2}} \sum_{\dicidx, \dicidx' \in [L]} \big\| \mathbf{\Sigma}^{-1}_{\obsidx,\dicidx} \!-\! \mathbf{\Sigma}^{-1}_{\obsidx,\dicidx'} \big\|_{2} \big\| \mathbf{\Sigma}_{\obsidx,\dicidx'} \!-\! \mathbf{\Sigma}_{\obsidx,\dicidx} \big\|_{2}\bigg \}.  
\end{equation}

In what follows, we will first upper bound the spectral norm $\big\| \mathbf{\Sigma}_{\obsidx,\dicidx'} \!-\! \mathbf{\Sigma}_{\obsidx,\dicidx} \big\|_{2}$ and subsequently, using a perturbation result \cite{golub96} for matrix inversion, upper bound the spectral norm  $\big\| \mathbf{\Sigma}^{-1}_{\obsidx,\dicidx} \!-\! \mathbf{\Sigma}^{-1} _{\obsidx,\dicidx'} \big\|_{2}$. Inserting these two bounds into \eqref{equ_upper_bound_cmi_kl_div} will then yield the final upper bound on $\condmiyandlgivenx$.

{Due to the construction \eqref{equ_def_matrix_D_prob_construction},}
\begin{align}
\label{equ_block_structure_Sigma_l_i}
\mathbf{\Sigma}_{\obsidx,\dicidx} \!-\! \mathbf{\Sigma}_{\obsidx,\dicidx'}  \! & \stackrel{\eqref{equ_def_sigma_dicidx_mathcals}}{=} \! \sigma_a^{2} \big(  \mathbf{D}_{\dicidx,\mathcal{S}_{\obsidx}}\mathbf{D}^{T}_{\dicidx,\mathcal{S}_{\obsidx}} \!-\!  \mathbf{D}_{\dicidx',\mathcal{S}_{\obsidx}}\mathbf{D}^{T}_{\dicidx',\mathcal{S}_{\obsidx}} \big) \nonumber \\[4mm]
&  \stackrel{\eqref{equ_def_matrix_D_prob_construction}}{=}  \sigma_{a}^{2} \sqrt{1\!-\!\varepsilon'/{4\coefflen}}\sqrt{\varepsilon'}  \big( \overline{\mathbf{D}_{0,\mathcal{S}_{\obsidx}} \mathbf{D}^{{T}}_{2,\dicidx,\mathcal{S}_{\obsidx}}} \!-\! \overline{ \mathbf{D}_{0,\mathcal{S}_{\obsidx}} \mathbf{D}^{{T}}_{2,\dicidx',\mathcal{S}_{\obsidx}}} \big) + \sigma_{a}^{2} \varepsilon' (\mathbf{D}_{2,\dicidx,\mathcal{S}_{\obsidx}} \mathbf{D}^{T}_{2,\dicidx,\mathcal{S}_{\obsidx}} \!-\! 
\mathbf{D}_{2,\dicidx',\mathcal{S}_{\obsidx}}\mathbf{D}^{T}_{2,\dicidx',\mathcal{S}_{\obsidx}})
\end{align} 
with the shorthand $\overline{\mathbf{X}} \defeq \mathbf{X} + \mathbf{X}^{T}$. 
In what follows, we need 
\begin{equation}
\label{equ_spec_norm_bounds_side_info_supp}
\| \mathbf{D}_{0,\mathcal{S}} \|_{2} \leq \sqrt{3/2} \mbox{, }  \| \mathbf{D}_{2,\dicidx,\mathcal{S}}\|_{2} \leq \sqrt{\sparsity/{(4\coefflen)}}  \mbox{, and } \| \mathbf{\Sigma}^{-1}_{\obsidx,\dicidx} \|_{2} \leq 1/\sigma^{2}, 
\end{equation} 
for any $\dicidx \in [L]$ and any subset $\mathcal{S} \subset [\coefflen]$ with $| \mathcal{S} | \leq \sparsity$. 
The first bound in \eqref{equ_spec_norm_bounds_side_info_supp} follows from the assumed RIP {(with constant $\delta_{\sparsity}\leq1/2$)} of the reference dictionary $\mathbf{D}_{0}$. 
The second bound in \eqref{equ_spec_norm_bounds_side_info_supp} is valid because the matrices $\mathbf{D}_{2,\dicidx}$ have columns with norm equal to $1/\sqrt{{4}\coefflen}$ (cf.\ \eqref{equ_col_D_2_orthog_norm1_p}). 
For the verification of the last bound in \eqref{equ_spec_norm_bounds_side_info_supp} we note that, according to \eqref{equ_def_sigma_dicidx_mathcals}, $\lambda_{\text{min}}(\mathbf{\Sigma}_{\obsidx,\dicidx}) \geq \sigma^{2}$. 
{Therefore, }
\begin{align} 
\label{equ_norm_bound_diff_sigmas_l_l_prime}
\big\| \mathbf{\Sigma}_{\obsidx,\dicidx} \!-\! \mathbf{\Sigma}_{\obsidx,\dicidx'} \|_{2} 
&\stackrel{\eqref{equ_block_structure_Sigma_l_i},\eqref{equ_spec_norm_bounds_side_info_supp}}{{\leq}} {2\sqrt{3/2}} \sigma_{a}^{2}  \sqrt{1\!-\!\varepsilon'/({4\coefflen})}\sqrt{\varepsilon'}  \sqrt{\sparsity/({4\coefflen})} \!+\! {2} \sigma_{a}^{2} \varepsilon' \sparsity/({4\coefflen}) \nonumber \\[3mm]
& \stackrel{{\eqref{equ_cond_epsilon_exist_dic_set_support_side_info}}}{\leq} 4.5 \sigma_{a}^{2} \sqrt{\varepsilon' \sparsity/({4\coefflen})}.
\end{align} 

{Since the true dictionary $\mD$ is assumed to satisfy the RIP with constant $\delta_{\sparsity}\leq1/2$, the low SNR condition ${\rm SNR} \leq \measlen/(2\sparsity)$ implies 
via \eqref{equ_doulbe_inequ_SNR}, 
\begin{equation}
\label{equ_low_SNR_variances}
(\sigma_{a}/\sigma)^2 \leq  \frac{1}{9 \sqrt{80}}. 
\end{equation}}
Since 
\begin{align} 
\big\| \mathbf{\Sigma}^{-1}_{\obsidx,\dicidx} \big(\mathbf{\Sigma}_{\obsidx,\dicidx} \!-\! \mathbf{\Sigma}_{\obsidx,\dicidx'} \big) \big\|_{2} 
& \leq  \big\| \mathbf{\Sigma}^{-1}_{\obsidx,\dicidx} \big\|_{2}  \big\|  \mathbf{\Sigma}_{\obsidx,\dicidx} \!-\! \mathbf{\Sigma}_{\obsidx,\dicidx'}   \big\|_{2}  \nonumber \\[3mm] 
& \stackrel{\eqref{equ_spec_norm_bounds_side_info_supp},\eqref{equ_norm_bound_diff_sigmas_l_l_prime}}{\leq}  4.5 (\sigma_{a}/\sigma)^2 \sqrt{\varepsilon' \sparsity/\coefflen} \nonumber \\[3mm] 
& \stackrel{{\eqref{equ_low_SNR_variances},\eqref{equ_cond_epsilon_exist_dic_set_support_side_info}}}{{\leq}} 1/2,  
\end{align} 
we can invoke \cite[Theorem 2.3.4.]{golub96} yielding 
\begin{equation} 
\label{equ_bound_diff_prec_matrices_roof_ensemble_dic}
\big\| \mathbf{\Sigma}^{-1}_{\obsidx,\dicidx} \!-\! \mathbf{\Sigma}^{-1}_{\obsidx,\dicidx'} \big\|_{2} \leq 2  \big\| \mathbf{\Sigma}_{\obsidx,\dicidx}^{-1} \big\|_{2}^{2} \big\| \mathbf{\Sigma}_{\obsidx,\dicidx} \!-\! \mathbf{\Sigma}_{\obsidx,\dicidx'} \big\|_{2} 
\stackrel{{\eqref{equ_spec_norm_bounds_side_info_supp}}}{\leq} 2 \sigma^{-4} \big\|  \mathbf{\Sigma}_{\obsidx,\dicidx'} \!-\! \mathbf{\Sigma}_{\obsidx,\dicidx}   \big\|_{2}.
\end{equation} 
Inserting \eqref{equ_norm_bound_diff_sigmas_l_l_prime} and \eqref{equ_bound_diff_prec_matrices_roof_ensemble_dic} into \eqref{equ_upper_bound_cmi_kl_div} yields the bound 
\begin{align} 
\condmiyandlgivenx & \leq 4 \samplesize \sparsity \sigma^{-4} (1/L^2) \sum_{\dicidx,\dicidx \in [L]} \big\| \mathbf{\Sigma}_{\obsidx,\dicidx'} \!-\! \mathbf{\Sigma}_{\obsidx,\dicidx}  \big\|^{2}_{2}  \nonumber \\[1mm]
& \stackrel{\eqref{equ_norm_bound_diff_sigmas_l_l_prime}}{\leq}  4 \cdot 4.5^2 \samplesize \sparsity^2 (\sigma_{a}/\sigma)^{4} \varepsilon'/{(4\coefflen)} \nonumber \\[3mm]
&  \stackrel{\varepsilon' = {320} \varepsilon}{\leq}  {6480} \samplesize \sparsity^2 (\sigma_{a}/\sigma)^{4} \varepsilon/\coefflen \nonumber  \\[3mm]
& {\stackrel{\delta_{\sparsity}\leq1/2, \eqref{equ_doulbe_inequ_SNR}}{\leq} 12960 \samplesize {\rm SNR}^{2} \measlen^{2}  \varepsilon/\coefflen},
\vspace*{-3mm}
\end{align} 
{completing the proof.}
\vspace*{-3mm}
\end{proof}

The next result relates the cardinality $L$ of a subset $\mathcal{D}_{0} \!=\! \{\mathbf{D}_{1},\ldots,\mathbf{D}_{L} \} \subseteq \mathcal{D}$ to the conditional MI $\condmiyandlgivenx$ between the observation $\mathbf{Y}\!=\!\big(\vy_{1},\ldots,\vy_{\samplesize}\big)$, with $\vy_{\obsidx}$ i.i.d.\ according to \eqref{equ_linear_model}, and a random index $\dicidx$ selecting the true dictionary $\mathbf{D}$ in \eqref{equ_linear_model} u.a.r. from $\mathcal{D}_{0}$.
\begin{lemma}
\label{lem_inequality_Q_log_L}
Consider the DL problem \eqref{equ_linear_model} with minimax risk $\minimaxrisk$  (cf.\ \eqref{equ_def_minimax_problem}){, which is assumed to be upper bounded by a positive number $\varepsilon$, i.e., $\minimaxrisk \leq \varepsilon$.
Assume there exisits a finite set} $\mathcal{D}_{0} = \{ \mathbf{D}_{1}, \ldots, \mathbf{D}_{L} \} \subseteq \mathcal{D}$ consisting of $L$ distinct dictionaries $\mathbf{D}_{\dicidx} \in \mathbb{R}^{\measlen \times \coefflen}$ such that 
\begin{equation}
\label{equ_cond_Frob_norm_squared_greater_8}
\| \mathbf{D}_{\dicidx} \!-\! \mathbf{D}_{\dicidx'} \|^{2}_{\text{F}} \geq 8 \bar{\delta}_{\dicidx,\dicidx'} \varepsilon.
\end{equation}

Then, {for any function $\mathbf{T}(\mX)$ of the true coefficients $\mX=\big(\vx_{1},\ldots,\vx_{\samplesize}\big)$}, 
\begin{equation}
\condmiyandlgivenx \geq  (1/2) \log_{2}(L) -1. 
\end{equation} 
\end{lemma}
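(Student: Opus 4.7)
The plan is to apply the classical reduction from estimation to multiple hypothesis testing, followed by Fano's inequality. First, I would use the hypothesis $\minimaxrisk \leq \varepsilon$ to produce an estimator $\widehat{\mathbf{D}}(\mathbf{Y})$ whose MSE under any $\mathbf{D} \in \mathcal{X}(\mathbf{D}_{0},r)$ is at most $\varepsilon$. Placing the uniform prior on $\dicidx\in[L]$ and setting $\mathbf{D}=\mathbf{D}_{\dicidx}$, Markov's inequality applied to the non-negative random variable $\|\widehat{\mathbf{D}}(\mathbf{Y})-\mathbf{D}_{\dicidx}\|_{\rm F}^{2}$ (whose conditional mean given $\dicidx$ is at most $\varepsilon$) yields
\begin{equation}
\prob\big\{\|\widehat{\mathbf{D}}(\mathbf{Y})-\mathbf{D}_{\dicidx}\|_{\rm F}^{2} \geq 2\varepsilon\big\} \leq 1/2. \nonumber
\end{equation}

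Second, I would exploit the separation property \eqref{equ_cond_Frob_norm_squared_greater_8}: since any two distinct dictionaries in $\mathcal{D}_{0}$ are at squared Frobenius distance at least $8\varepsilon$, the open balls $\mathcal{B}(\mathbf{D}_{\dicidx},\sqrt{2\varepsilon})$ are pairwise disjoint. Consequently, the minimum-distance decoder $\hat{\dicidx}(\mathbf{Y}) \defeq \argmin_{\dicidx'\in[L]}\|\widehat{\mathbf{D}}(\mathbf{Y})-\mathbf{D}_{\dicidx'}\|_{\rm F}$ recovers $\dicidx$ whenever $\widehat{\mathbf{D}}(\mathbf{Y}) \in \mathcal{B}(\mathbf{D}_{\dicidx},\sqrt{2\varepsilon})$, so the error probability satisfies $P_{e} \defeq \prob\{\hat{\dicidx}(\mathbf{Y})\neq\dicidx\}\leq 1/2$.

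Third, I would translate this detection guarantee into an information-theoretic bound via Fano's inequality \cite[Ch.~2]{coverthomas}. Since $\dicidx$ indexes only the dictionary $\mathbf{D}$, and the coefficient matrix $\mathbf{X}$ (hence $\mathbf{T}(\mathbf{X})$) is independent of $\mathbf{D}$, we have $H(\dicidx\mid\mathbf{T}(\mathbf{X}))=H(\dicidx)=\log_{2}(L)$. Applying Fano's inequality to the Markov chain $\dicidx\to\mathbf{Y}\to\hat{\dicidx}(\mathbf{Y})$ and then using the fact that additional conditioning does not increase entropy yields
\begin{equation}
H(\dicidx\mid\mathbf{Y},\mathbf{T}(\mathbf{X})) \leq H(\dicidx\mid\mathbf{Y}) \leq 1 + P_{e}\log_{2}(L) \leq 1 + (1/2)\log_{2}(L). \nonumber
\end{equation}
Subtracting this from $H(\dicidx\mid\mathbf{T}(\mathbf{X}))=\log_{2}(L)$ and invoking $\condmiyandlgivenx = H(\dicidx\mid\mathbf{T}(\mathbf{X}))-H(\dicidx\mid\mathbf{Y},\mathbf{T}(\mathbf{X}))$ delivers the claimed lower bound.

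No single step is genuinely hard; the only point requiring care is the bookkeeping of the conditioning on $\mathbf{T}(\mathbf{X})$. Specifically, one must verify that $\dicidx$ remains independent of $\mathbf{T}(\mathbf{X})$ under the joint model (immediate, since $\mathbf{X}$ and $\mathbf{D}$ are independent in \eqref{equ_linear_model}) so that the unconditional entropy $\log_{2}(L)$ is preserved after conditioning. Note that the minimum-distance decoder need not use $\mathbf{T}(\mathbf{X})$ at all, which is precisely why the bound on $P_{e}$ derived from $\widehat{\mathbf{D}}(\mathbf{Y})$ controls the conditional mutual information.
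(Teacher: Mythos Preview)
Your proposal is correct and follows essentially the same approach as the paper: build the minimum-distance decoder from an (approximately) minimax estimator, use Markov's inequality plus the $8\varepsilon$-separation to get $P_e\le 1/2$, and then convert this via Fano's inequality and the independence of $\dicidx$ and $\mathbf{T}(\mathbf{X})$ into the conditional mutual information bound. The only cosmetic difference is that the paper derives $I(\mathbf{Y};\dicidx)\le\condmiyandlgivenx$ through the chain rule (packaged as a separate appendix lemma), whereas you reach the same conclusion via the equivalent ``conditioning does not increase entropy'' step $H(\dicidx\mid\mathbf{Y},\mathbf{T}(\mathbf{X}))\le H(\dicidx\mid\mathbf{Y})$.
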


\begin{proof} 
Our proof idea closely follows those of \cite[Thm.\ 1]{CandesDavenport2013}.
Consider a minimax estimator $\widehat{\mathbf{D}}(\mathbf{Y})$, whose worst case MSE is equal to $\minimaxrisk$, i.e., 
\begin{equation}
 \sup_{\mathbf{D} \in \mathcal{D}}  \expect_{{\mY}}  \big\{ \| \widehat{\mathbf{D}}(\mathbf{Y}) \!-\! \mathbf{D} \|^{2}_{\text{F}} \big\} = \minimaxrisk, 
\end{equation}
and, in turn since $\mathcal{D}_{0} \subseteq \mathcal{D}$, 
\begin{equation}
\label{equ_worst_case_risk_minimax_est_proof_lemma}
\sup_{\mathbf{D} \in \mathcal{D}_{0}}  \expect_{{\mY}}  \big\{ \| \widehat{\mathbf{D}}(\mathbf{Y}) \!-\! \mathbf{D} \|^{2}_{\text{F}} \big\} \leq \minimaxrisk.
\end{equation} 

Based on the estimator $\widehat{\mathbf{D}}(\mathbf{Y})$, we define a detector $\hat{l}(\mathbf{Y})$ for the index of 
true underlying dictionary $\mathbf{D}_{\dicidx} \in \mathcal{D}_{0}$ via 
\begin{equation}
\label{equ_def_detector_min_forb_norm_squared}
\hat{l}(\mY) \triangleq \argmin_{\dicidx' \in [L]}  \| \mD_{\dicidx'} \!-\! \widehat{\mD}(\mY) \|^{2}_{\text{F}}. 
\end{equation}
{In case of ties, i.e., when there are multiple indices $\dicidx'$ such that $\mD_{\dicidx'}$ achieves the minimum in \eqref{equ_def_detector_min_forb_norm_squared}, we randomly select one of the minimizing indices as the estimate $\hat{l}(\mY)$.}
Let us now assume that the index $l$ is selected u.a.r. from $[L]$ and bound the probability $P_{e}$ of a detection error, i.e., $P_{e} \triangleq \prob \{ \hat{l}(\mY) \neq l \}$. 
Note that if 
\begin{equation}
\label{equ_assume_hat_D_D_l_lower_2_minmax}
 \| \widehat{\mathbf{D}}(\mathbf{Y}) \!-\! \mathbf{D}_{\dicidx} \|^{2}_{\text{F}} <2 \varepsilon 
\end{equation}
then for any wrong index $\dicidx' \in [L] \setminus \{l\}$, 
\begin{align}
\| \widehat{\mathbf{D}}(\mathbf{Y}) \!-\! \mathbf{D}_{\dicidx'} \|_{\text{F}} & = \| \widehat{\mD}(\mY) \!-\! \mD_{\dicidx'} \!+\! \mathbf{D}_{\dicidx} \!-\! \mathbf{D}_{\dicidx} \|_{\text{F}} \nonumber \\[3mm] 
& \geq  \| \mathbf{D}_{\dicidx} \!-\! \mathbf{D}_{\dicidx'} \|_{\text{F}} -  \| \widehat{\mathbf{D}}(\mY) - \mathbf{D}_{\dicidx} \|_{F}  \nonumber \\[3mm]
& \stackrel{\eqref{equ_cond_Frob_norm_squared_greater_8},\eqref{equ_assume_hat_D_D_l_lower_2_minmax}}{\geq}  (\sqrt{8} - \sqrt{2}) {\sqrt{\varepsilon}} \nonumber \\[3mm]
& {=} {\sqrt{2 \varepsilon}} \nonumber \\[3mm]
& \stackrel{\eqref{equ_assume_hat_D_D_l_lower_2_minmax}}{>}  \| \widehat{\mathbf{D}}(\mY) - \mathbf{D}_{\dicidx} \|_{\text{F}}. 
\end{align} 
Thus, the condition \eqref{equ_assume_hat_D_D_l_lower_2_minmax} guarantees that the detector $\hat{l}(\mathbf{Y})$ in \eqref{equ_def_detector_min_forb_norm_squared} delivers the correct index $l$. 
Therefore, in turn, a detection error can only occur if $ \| \widehat{\mathbf{D}} - \mathbf{D}_{\dicidx} \|^{2}_{\text{F}} \geq  2 \varepsilon$ implying that 
\begin{align}
\label{equ_P_e_lower_eq_1_2_relation_L_symm_KL_div}
P_{e} & \leq \prob \big\{ \| \widehat{\mathbf{D}}(\mY) - \mathbf{D}_{\dicidx} \|^{2}_{\text{F}} \geq  2  \varepsilon \big\} \nonumber \\[3mm]
& \stackrel{(a)}{\leq}  \frac{1}{ 2  \varepsilon }   \expect_{{\mY}}  \big\{\| \widehat{\mathbf{D}}(\mY) - \mathbf{D}_{\dicidx} \|^{2}_{\text{F}}  \big\}  \nonumber \\[3mm]
&  \stackrel{\eqref{equ_worst_case_risk_minimax_est_proof_lemma}}{\leq} \frac{\minimaxrisk}{ 2 \varepsilon} \nonumber \\[3mm]
& {\stackrel{\minimaxrisk \leq \varepsilon}{\leq}} 1/2,
\end{align}
where $(a)$ is due to the Markov inequality{\cite{BillingsleyProbMeasure}}.
However, according to Lemma \ref{lem_fano_inequ_first}, {we also have}
\begin{equation}
\condmiyandlgivenx \geq \log_{2}(L) - P_{e}  \log_{2}(L)-1,  
\end{equation} 
and, in turn, since $P_{e} \leq 1/2$ by \eqref{equ_P_e_lower_eq_1_2_relation_L_symm_KL_div}, 
\begin{equation}
\label{equ_MI_cond_i_geq_1_1_2_log_L}
\condmiyandlgivenx \geq (1/2) \log_{2}(L)-1,  \nonumber
\end{equation}
{completing the proof.}
\end{proof}

Finally, we simply have to put the pieces together to obtain Theorem \ref{thm_main_result} and Theorem \ref{thm_main_result_sparse_coeff}. 

\emph{Proof of Theorem \ref{thm_main_result}:} 
According to Lemma \ref{lem_existence_dicationary_set_desiredata}, if $(\measlen\!-\!1)\coefflen\!\geq\!{50}$ and for any {$\varepsilon\!<\!r^2/320$ (this condition is implied by the first bound in \eqref{equ_conditions_s_2_first_charac_minimax_risk})}, there exists a set $\mathcal{D}_{0} \subseteq {\mathcal{X}}(\mathbf{D}_{0},r)$ of cardinality $L \!=\! 2^{(\measlen\!-\!1)\coefflen/5}$ 
satisfying \eqref{equ_cond_frob_norm_diff_delta} and \eqref{equ_desirata_av_mutual_info_y_index} with 
$\eta \!=\! {320}  \samplesize \| \mathbf{\Sigma}_{x} \|_{2}\varepsilon /\sigma^2 $. 
Applying Lemma \ref{lem_inequality_Q_log_L} to the set $\mathcal{D}_{0}$ yields, in turn, 
\begin{equation} 
{320} \samplesize  \| \mathbf{\Sigma}_{x} \|_{2}\varepsilon /\sigma^2   \geq \condmiyandlgivenx \geq (1/2) \log_{2}(L) -1 
\end{equation}
implying 
\begin{equation}
\varepsilon  \geq  \frac{\sigma^{2}}{{320} \samplesize \| \mathbf{\Sigma}_{x} \|_{2}}  ((1/2) \log_{2}(L) -1) \geq \frac{\sigma^2}{{320} \samplesize\| \mathbf{\Sigma}_{x} \|_{2}  }  ((\measlen\!-\!1)\coefflen/10 -1).
\end{equation}

\emph{Proof of Theorem \ref{thm_main_result_sparse_coeff}:} 
According to Lemma \ref{lem_existence_dicationary_set_desiredata_support_sideinfo}, 
if $(\measlen\!-\!1)\coefflen\!\geq\!{50}$ and for any {$\varepsilon\!<\!r^2/(320\sparsity)$ (this condition is implied by the first bound in \eqref{equ_conditions_s_2_first_charac_minimax_risk_sparse})}, there exists a set $\mathcal{D}_{0} \subseteq {\mathcal{X}}(\mathbf{D}_{0},r)$ of cardinality $L \!=\! 2^{(\measlen\!-\!1)\coefflen/5}$ 
satisfying \eqref{equ_cond_frob_norm_diff_delta} and \eqref{equ_desirata_av_mutual_info_y_index} with 
$\eta \!=\! {12960} \samplesize \measlen^2 {\rm SNR}^{2} \varepsilon/\coefflen$. 
Applying Lemma \ref{lem_inequality_Q_log_L} to the set $\mathcal{D}_{0}$ yields, in turn, 
\begin{equation} 
 {12960 \samplesize \measlen^2 {\rm SNR}^{2}} \varepsilon/\coefflen \geq \condmiyandlgivenx \geq (1/2) \log_{2}(L) -1 
\end{equation}
implying 
\begin{equation}
\varepsilon  \geq  {\frac{{\rm SNR}^{-2} \coefflen}{12960 \samplesize \measlen^{2} }  ((1/2) \log_{2}(L) -1) \geq \frac{{\rm SNR}^{-2} \coefflen}{12960 \samplesize \measlen^{2}}  ((\measlen\!-\!1)\coefflen/10 -1)}.
\end{equation}

{
\emph{Proof of Theorem \ref{thm_partial_converse}:} 
First note that any dictionary $\mD \in \mathcal{X}(\mD_{0}=\mathbf{I},r)$ can be written as  
\begin{equation} 
\label{equ_proof_partial_conv_decomp}
\mD = \mathbf{I} + {\bf \Delta} \mbox{ , with } \| {\bf \Delta}\|_{\rm F} \leq r. 
\end{equation}
Any matrix $\mD$ of the form \eqref{equ_proof_partial_conv_decomp} satisfies the RIP with constant $\delta_{\sparsity}$ such that 
\begin{equation}
\label{equ_double_bound_upper_bound}
(1-r)^2 \leq  1-\delta_{\sparsity} \leq 1 + \delta_{\sparsity} \leq (1+r)^2. 
\end{equation}
Moreover, since we assume the coefficient vectors $\vx_{k}$ in \eqref{equ_single_linear_model} to be discrete-valued $\vx_{k} \in \{-1,0,1\}^{\coefflen}$ 
and complying with \eqref{equ_sparse_coeff_support_prob}, 
\begin{equation}
\label{equ_variance_coeff_suff_cond_upper_bound}
\expect_{\vx_{k}} \{  \vx_{k,t}^{2} \}  = \sparsity/\coefflen.
\end{equation}
and 
\begin{equation}
\label{equ_norm_sqare_equal_s_upper_bound}
\| \vx_{k} \|^{2}_{2} = \sparsity.  
\end{equation} 
For \eqref{equ_norm_sqare_equal_s_upper_bound}, we used the fact that the non-zero entries of $\vx_{i}$ all have the same magnitude equal to one. 
Combining \eqref{equ_norm_sqare_equal_s_upper_bound} with \eqref{equ_double_bound_upper_bound}, we obtain the following bound on the SNR: 
\begin{equation}
\label{equ_SNR_bound_upper_bound}
{\rm SNR} = \expect_{{\vx}} \{ \| \mD \vx \|^{2}_{2} \} / \expect_{{\noisevec}} \{ \| \noisevec \|^{2}_{2} \} \stackrel{\eqref{equ_def_RIP}}{\geq} (1-\delta_{\sparsity})  \sparsity /(\measlen \sigma^{2}) \stackrel{\eqref{equ_double_bound_upper_bound}}{\geq}  (1-r)^2  \sparsity /(\measlen \sigma^{2}).  
\end{equation}  
}

{
In order to derive an upper bound on the MSE of the DL scheme given by Algorithm \ref{alg_simple}, we first split the MSE of $\widehat{\mD}(\mY)=\big(\widehat{\vd}_{1}(\mY),\ldots,\widehat{\vd}_{\coefflen}(\mY)\big)$ into a sum of the MSE for the individual columns of the dictionary, i.e., 
\begin{equation}
\label{equ_proof_upper_bound_MSE_splits_over_columns}
\expect_{{\mY}} \{ \| \widehat{\mD}{(\mY)} - \mD \|^{2}_{\rm F} \}  = \sum_{l \in [\coefflen]} \expect_{{\mY}} \{ \| \widehat{\vd}_{l}(\mY) - \vd_{l} \|^{2}_{2} \}. 
\end{equation} 
Thus, we may analyze the column-wise MSE $\expect_{{\mY}} \{ \| \widehat{\vd}_{l}{(\mY)} - \vd_{l} \|^{2}_{2} \}$ separately for each column index $l \in [\coefflen]$. 
Note that, by construction
\begin{equation}
\label{diff_col_max_2}
\|\widehat{\vd}_{l}{(\mY)} - \vd_{l} \|^{2}_{2} \leq 2,   
\end{equation} 
since the columns of $\widehat{\mD}(\mY)$ and $\mD$ have norm at most one. 
}

{
We will analyze the MSE of the DL scheme in Algorithm \ref{alg_simple} by conditioning on 
a specific event $\mathcal{C}$, defined as 
\begin{equation}
 \mathcal{C} \defeq \bigcap_{\substack{\obsidx \in [\samplesize] \\ l \in [\coefflen]}} \{ |n_{\obsidx,l}| < 0.4 \}. 
\end{equation}  
Assuming $r \sqrt{\sparsity} \leq 1/10$, the occurrence of $\mathcal{C}$ implies the estimated coefficient matrix $\widehat{\mX}$ to coincide with the true coefficients $\mX$, i.e., 
\begin{equation}
\label{equ_coeffs_perfect_recov_C}
\prob \{ \mX = \widehat{\mX} | \mathcal{C} \}=1.
\end{equation} 
Indeed, if $r \sqrt{\sparsity} \leq 1/10$ and $|n_{\obsidx,l}| < 0.4$ for every $\obsidx \in [\samplesize]$ and $l \in [\coefflen]$, then $y_{\obsidx,j} > 0.5$ if $x_{\obsidx,j} =1$ (implying $\hat{x}_{\obsidx,j}=1$), and $y_{\obsidx,j} < - 0.5$ if $x_{\obsidx,j} =-1$ (implying $\hat{x}_{\obsidx,j}=-1$) as 
well as  $|y_{\obsidx,j}| \leq 0.5$ if $x_{\obsidx,j} =0$ (implying $\hat{x}_{\obsidx,j}=0$). 
The characterization the probability of $\mathcal{C}$ is straightforward, since the noise entries $n_{\obsidx,l}$ are assumed i.i.d. Gaussian variables with 
zero mean and variance $\sigma^{2}$. In particular, the tail bound \cite[Proposition 7.5]{RauhutFoucartCS}) together with 
a union bound over all entries of the coefficient matrix $\mX\!=\!(\vx_{1},\ldots,\vx_{\samplesize})\in\mathbb{R}^{\coefflen\times\samplesize}$, yields
\begin{equation}
\prob \{ \mathcal{C}^{c} \} \leq \exp(-  \coefflen \samplesize 0.4^2/(2\sigma^2)).  
\vspace*{-3mm}
\end{equation}}

{
As a next step we upper bound the MSE using the law of total expectation: 
\begin{align}
\label{equ_proof_upper_bound_law_total_expectation}
 \expect_{{\mY}} \{ \| \widehat{\vd}_{l}{(\mY)} - \vd_{l} \|^{2}_{2} \} & = \expect_{{\mY,\mN}} \{ \| \widehat{\vd}_{l}{(\mY)} - \vd_{l} \|^{2}_{2}\big| \mathcal{C} \} \prob(\mathcal{C} ) + \expect_{{\mY,\mN}} \{\| \widehat{\vd}_{l}{(\mY)} - \vd_{l} \|^{2}_{2} \big| \mathcal{C}^{c} \} \prob(\mathcal{C}^{c})  \nonumber \\[3mm] 
 & \stackrel{\eqref{diff_col_max_2}}{\leq} \expect_{{\mY,\mN}} \{ \| \widehat{\vd}_{l}{(\mY)} - \vd_{l} \|^{2}_{2}\big| \mathcal{C} \} \prob(\mathcal{C} ) + 2 \prob(\mathcal{C}^{c}) \nonumber \\[3mm] 
 & \leq  \expect_{{\mY,\mN}} \{ \| \widehat{\vd}_{l}{(\mY)} - \vd_{l} \|^{2}_{2}\big| \mathcal{C} \} + 2  \exp(- \coefflen \samplesize 0.4^2/(2\sigma^2)).
\end{align} 
}

{The conditional MSE $\expect \{  \| \widehat{\vd}_{l}(\mY)  - \vd_{l} \|^{2}_{2} \big| \mathcal{C} \}$ can be bounded by  
\begin{align} 
\label{part_converse_cond_expect_1}
\expect_{{\mY,\mN}} \{  \| \widehat{\vd}_{l}(\mY) - \vd_{l} \|^{2}_{2} \big| \mathcal{C} \} & = \expect_{{\mY,\mN}} \big\{  \big\| \mathbf{P}_{\overline{\mathcal{B}}(\mathbf{e}_{l},\rho)} \widetilde{\vd}_{l}(\mY) - \vd_{l} \|^{2}_{2} \big| \mathcal{C}\big\} \nonumber \\[3mm]
 													  & \leq  \expect_{{\mY,\mN}} \big\{  \big\| \widetilde{\vd}_{l}(\mY) - \vd_{l} \big\|^{2}_{2} \big| \mathcal{C} \big\}  \nonumber \\[3mm]
													  & =  \expect_{{\mY,\mN}} \big\{  \big\| (\coefflen/(\samplesize \sparsity)) \sum_{\obsidx \in  [\samplesize]} \hat{x}_{\obsidx,l} \vy_{\obsidx} - \vd_{l} \|^{2}_{2} \big|  \mathcal{C}  \big\} \nonumber \\[3mm]
													  & \stackrel{\eqref{equ_single_linear_model}}{=} \expect_{{\mY,\mX,\mN}} \big\{  \big\| (\coefflen/(\samplesize \sparsity))\sum_{\obsidx \in \mathcal{C}_{l}} \hat{x}_{\obsidx,l} (\mD\vx_{\obsidx} + \noisevec_{\obsidx})- \vd_{l} \|^{2}_{2} \big| \mathcal{C} \big\} \nonumber \\ 
 & \stackrel{(a)}{=} \expect_{\mX,\mN} \big\{  \big\| (\coefflen/(\samplesize \sparsity))\sum_{\obsidx \in \mathcal{C}_{l}}x_{\obsidx,l} (\mD\vx_{\obsidx} + \noisevec_{\obsidx})- \vd_{l} \|^{2}_{2} \big| \mathcal{C} \big\} 
  \end{align} 
where step $(a)$ is valid because $\prob( x_{\obsidx,l} = \hat{x}_{\obsidx,l} | \mathcal{C}) = 1$ (cf. \eqref{equ_coeffs_perfect_recov_C}). 
Applying the inequality $\|\vy+\vz\|^{2}_{2} \leq 2 (\|\vy\|^{2}_{2} + \|\vz\|_{2}^{2})$ to \eqref{part_converse_cond_expect_1} yields further 
\begin{align} 
\label{part_converse_cond_expect_2}
\expect_{{\mY,\mN}} \{  \| \widehat{\vd}_{l}(\mY) - \vd_{l} \|^{2}_{2} \big| \mathcal{C} \} &\leq 2 \expect_{\mX,\mN} \big\{  \big\| (\coefflen/(\samplesize \sparsity)) \sum_{\obsidx \in  [\samplesize]} x_{\obsidx,l} \noisevec_{\obsidx} \big\|^{2}_{2} \big| \mathcal{C} \big\}+ 2 \expect_{\mX,\mN} \big\{  \big\| \vd_{l} - (\coefflen/(\samplesize \sparsity)) \sum_{\obsidx \in  [\samplesize]} x_{\obsidx,l}  \sum_{t \in [\coefflen]} \vd_{t}x_{\obsidx,t}  \|^{2}_{2} \big| \mathcal{C} \big\}. 
\end{align}
Our strategy will be to separately bound the two expectations in \eqref{part_converse_cond_expect_2} from above.}

{
In order to upper bound $\expect_{\mX,\mN} \big\{  \big\| (\coefflen/(\samplesize \sparsity)) \sum_{\obsidx \in  [\samplesize]} x_{\obsidx,l} \noisevec_{\obsidx} \big\|^{2}_{2} \big| \mathcal{C} \big\}$, we note that the conditional distribution $f(n_{\obsidx,t}|\mathcal{C})$ of $n_{\obsidx,t}$, given the event $\mathcal{C}$, is given by  
\begin{equation}
f(n_{\obsidx,t}|\mathcal{C}) = \frac{1}{\sqrt{2 \pi \sigma^{2}}(\mathcal{Q}(-0.4/\sigma)-\mathcal{Q}(0.4/\sigma))} \mathcal{I}_{[-0.4,0.4]}(n_{\obsidx,t}) \cdot e^{- \frac{n_{\obsidx,t}^{2}}{2\sigma^{2}} }, 
\end{equation}
where $\mathcal{I}_{[-0.4,0.4]}(\cdot)$ is the indicator function for the interval $[-0.4,0.4]$ and $\mathcal{Q}(x) \defeq \int_{z=x}^{\infty} (1/\sqrt{2\pi}) \exp( - (1/2) z^2) d z$ denotes the tail probability of the standard normal distribution. 
In particular, the conditional variance $\sigma^{2}_{n_{\obsidx,t}}$ can be bounded as 
\begin{equation}
\label{equ_con_noise_var_cond_C}
 \sigma^{2}_{n_{\obsidx,t}} \leq \sigma^{2} / \underbrace{(\mathcal{Q}(-0.4/\sigma)-\mathcal{Q}(0.4/\sigma))}_{\defeq \nu}. 
\end{equation}
Since, conditioned on $\mathcal{C}$, the variables $\hat{x}_{\obsidx,l}$ and $n_{\obsidx,t}$ are independent, we obtain 
\begin{align} 
\label{equ_bound_upper_bound_0810}
\expect_{\mX,\mN} \big\{  \big\| (\coefflen/(\samplesize \sparsity)) \sum_{\obsidx \in  [\samplesize]} x_{\obsidx,l} \noisevec_{\obsidx} \big\|^{2}_{2} \big| \mathcal{C} \big\} & = 
(\coefflen/(\samplesize \sparsity))^2 \sum_{\obsidx \in  [\samplesize]} \sum_{t \in [\measlen]} \expect_{\mX,\mN} \big\{ x^{2}_{\obsidx,l}\big| \mathcal{C}  \} \expect_{\mX,\mN} \big\{ n^2_{\obsidx,l} \big| \mathcal{C} \big\} \nonumber \\[3mm]
& \stackrel{\eqref{equ_con_noise_var_cond_C}}{\leq} (\coefflen/(\samplesize \sparsity))^2 \samplesize \expect_{\mX,\mN} \big\{ x^{2}_{\obsidx,l} \big| \mathcal{C} \big\} \measlen \sigma^{2} / \nu \nonumber \\[3mm] 
& \stackrel{(a)}{=}  (\coefflen/(\samplesize \sparsity))^2 \samplesize \expect_{\mX} \big\{ x^{2}_{\obsidx,l}\big\} \measlen \sigma^{2} / \nu \nonumber \\[3mm] 
& \stackrel{\eqref{equ_variance_coeff_suff_cond_upper_bound}}{=}(\coefflen/(\samplesize \sparsity))^2 \samplesize  (\sparsity/\coefflen) \measlen \sigma^{2} / \nu \nonumber \\[3mm] 
& \stackrel{\eqref{equ_SNR_bound_upper_bound}}{\geq} (\coefflen/\samplesize) (1-r)^{2}/(\nu {\rm SNR}),
\end{align}
where step $(a)$ is due to the fact that $x^{2}_{\obsidx,l}$ is independent of the event $\mathcal{C}$. 
}

{ 
As to the second expectation in \eqref{part_converse_cond_expect_2}, we first observe that 
\begin{align}
\label{qu_bound_upper_bound_0814}
\expect_{\mX,\mN} \big\{  \big\| \vd_{l} - (\coefflen/(\samplesize \sparsity)) \sum_{\obsidx \in  [\samplesize]} x_{\obsidx,l}  \sum_{t \in [\coefflen]} \vd_{t}x_{\obsidx,t}  \|^{2}_{2} \big| \mathcal{C} \big\} = 
\expect_{\mX} \big\{  \big\| \vd_{l} - (\coefflen/(\samplesize \sparsity)) \sum_{\obsidx \in  [\samplesize]} x_{\obsidx,l}  \sum_{t \in [\coefflen]} \vd_{t}x_{\obsidx,t}  \|^{2}_{2} \big\}
\end{align}
since the coefficients $x_{\obsidx,t}$ are independent of the event $\mathcal{C}$. 
Next, we expand the squared norm und apply the relations 
\begin{align} 
\expect_{\mX} \{ x_{\obsidx,l} x_{\obsidx,t} x_{\obsidx',l} x_{\obsidx,t'} \} = \begin{cases} (\sparsity/\coefflen)^{2} & \mbox{, for } \obsidx'=\obsidx \mbox{, and } t=t'\neq l \\  (\sparsity/\coefflen)^{2} 
& \mbox{, for } \obsidx'\neq \obsidx \mbox{, and } t=t'= l \\ (\sparsity/\coefflen) & \mbox{, for } \obsidx'=\obsidx \mbox{, and } t=t'= l \\ 0 & \mbox{ else.} \end{cases}
\end{align}
A somewhat lengthy calculation reveals that 
\begin{align} 
\label{equ_bound_upper_bound_0815}
\expect_{\mX} \big\{  \big\| \vd_{l} - (\coefflen/(\samplesize \sparsity)) \sum_{\obsidx \in  [\samplesize]} x_{\obsidx,l}  \sum_{t \in [\coefflen]} \vd_{t}x_{\obsidx,t}  \|^{2}_{2} \big\} &
= (1/\samplesize)(\coefflen+\coefflen/\sparsity-2)  \nonumber \\ 
& \leq 2 \coefflen /\samplesize. 
\end{align} 
Inserting \eqref{equ_bound_upper_bound_0815} into \eqref{qu_bound_upper_bound_0814} yields 
\begin{equation}
\label{equ_bound_upper_bound_0816}
\expect_{\mX,\mN} \big\{  \big\| \vd_{l} - (\coefflen/(\samplesize \sparsity)) \sum_{\obsidx \in  [\samplesize]} x_{\obsidx,l}  \sum_{t \in [\coefflen]} \vd_{t}x_{\obsidx,t}  \|^{2}_{2} \big| \mathcal{C} \big\}  \leq 2 \coefflen /\samplesize. 
\vspace*{-2mm}
\end{equation} 
}

{Combining \eqref{equ_bound_upper_bound_0816} and \eqref{equ_bound_upper_bound_0810} with \eqref{part_converse_cond_expect_2} 
and inserting into \eqref{equ_proof_upper_bound_law_total_expectation}, we finally obtain 
\begin{equation}
 \expect_{{\mY}} \{ \| \widehat{\vd}_{l}{(\mY)} - \vd_{l} \|^{2}_{2} \} \leq  2\big[ (\coefflen/\samplesize) (1-r)^{2}/(\nu {\rm SNR})+ 2 \coefflen /\samplesize\big]+ 2  \exp(- \coefflen \samplesize 0.4^2/(2\sigma^2)),
\end{equation} 
and in turn, by summing over all column indices $l \in [\coefflen]$ (cf.\ \eqref{equ_proof_upper_bound_MSE_splits_over_columns}), 
\begin{equation}
\expect_{{\mY}} \{ \| \widehat{\mD}{(\mY)} - \mD \|^{2}_{\rm F} \}  \leq  2\big[ (\coefflen^2/\samplesize) (1-r)^{2}/(\nu {\rm SNR})+ 2 \coefflen^2 /\samplesize\big]+ 2 \coefflen \exp(- \coefflen \samplesize 0.4^2/(2\sigma^2)). 
\end{equation} 
The upper bound \eqref{equ_upper_bound_MSE} follows then by noting that $\nu=\mathcal{Q}(-0.4/\sigma)-\mathcal{Q}(0.4/\sigma) \geq 1/2$ for $\sigma \leq 0.4$.
}

\section{Conclusion}

By adapting an established information-theoretic approach to minimax estimation, we derived lower bounds on the minimax risk of DL using certain random coefficient models for representing the observations as linear combinations of the columns of an underlying dictionary matrix. These 
lower bounds on the optimum achievable performance, quantified in terms of worst case MSE, seem to be the first results of their kind for DL. 
Our first bound applies to a wide range of coefficient distributions, and only requires the existence of the covariance matrix of the coefficient vector. We then specialized this bound  
to a sparse coefficient model with normally distributed non-zero coefficients. 
Exploiting the specific structure induced by the sparse coefficient model, we derived a second lower bound which tends to be tighter 
in the low SNR regime. Our bounds apply to the practically relevant case of overcomplete dictionaries and noisy measurements. 
An analysis of a simple DL scheme for the low SNR regime, reveals that our lower bounds are tight, as they are attained by the worst case MSE of 
a particular DL scheme. 
Moreover, for fixed {SNR} and vanishing sparsity rate, the necessary scaling $\samplesize=\Theta(\coefflen^{2})$ of the sample size $\samplesize$ implied by our lower bound matches the sufficient condition (upper bound) on the sample size such that the learning schemes proposed in \cite{aganjaneta13,argemo13} are successful. Hence, in certain regimes, the DL methods put forward by \cite{aganjaneta13,argemo13} are essentially optimal in terms of sample size requirements. 

\vspace{-2mm}
\section{Acknowledgment}
\vspace{-2mm}

The authors would like to thank Karin Schnass for sharing here expertise on practical DL schemes.

\appendices

\section{Technicalities}
\label{sec_appdx_tech}

\begin{lemma}
\label{lem_fano_inequ_first}
Consider the DL problem based on observing the data matrix $\mathbf{Y}=\big(\vy_{1},\ldots,\vy_{\samplesize}\big)$ with columns being i.i.d. realizations of the vector $\mathbf{y}$ in \eqref{equ_linear_model}. 
We stack the corresponding realizations $\mathbf{x}_{\obsidx}$ of the coefficient vector $\mathbf{x}$ into the matrix $\mathbf{X}$. The true dictionary in \eqref{equ_linear_model} is 
obtained by selecting u.a.r., {and statistically independent of the random coefficients $\mathbf{x}_{\obsidx}$}, an element of the set $\mathcal{D}_{0}= \{\mathbf{D}_{1},\ldots,\mathbf{D}_{L}\}$, i.e., $\mathbf{D} = \mathbf{D}_{l}$ where the index $\dicidx \in [L]$ is drawn u.a.r. from $[L]$. 
Let $\mathbf{T}(\mathbf{X})$ denote an arbitrary function of the coefficients. Then, 
the error probability $\prob \{ \hat{\dicidx}(\mathbf{Y}) \neq \dicidx \}$ of any detector $\hat{\dicidx}(\mathbf{Y})$ which is based on observing $\mathbf{Y}$ is lower bounded as  
\begin{equation}
\prob \{ \hat{\dicidx}(\mathbf{Y}) \neq \dicidx  \} \geq 1 \!-\! \frac{ \condmiyandlgivenx \!+\!1}{\log_{2}(L)}.
\end{equation} 
where $\condmiyandlgivenx$ denotes the conditional MI between $\mathbf{Y}$ and $\dicidx$ given the side information $\mathbf{T}(\mathbf{X})$.  
\end{lemma}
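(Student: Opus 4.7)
The plan is to first apply the classical Fano inequality to the standard (unconditional) mutual information $I(\mY;\dicidx)$, and then to upgrade the bound by showing that conditioning on the side information $\mathbf{T}(\mX)$ can only increase the mutual information, thanks to the assumed independence between $\dicidx$ and $\mX$.

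First I would invoke Fano's inequality in its standard form: since $\dicidx$ is uniform over $[L]$ with $H(\dicidx) = \log_{2}(L)$, any detector $\hat{\dicidx}(\mY)$ satisfies
\begin{equation}
H(\dicidx \mid \mY) \;\leq\; 1 + \prob\{\hat{\dicidx}(\mY) \neq \dicidx\}\,\log_{2}(L), \nonumber
\end{equation}
which, after rearranging and using $I(\mY;\dicidx) = H(\dicidx) - H(\dicidx \mid \mY)$, yields the bound
\begin{equation}
\prob\{\hat{\dicidx}(\mY) \neq \dicidx\} \;\geq\; 1 - \frac{I(\mY;\dicidx)+1}{\log_{2}(L)}. \nonumber
\end{equation}

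The key remaining step is to verify the MI inequality $I(\mY;\dicidx) \leq I(\mY;\dicidx \mid \mathbf{T}(\mX))$. For this I would exploit the assumption that the random index $\dicidx$ is drawn independently of the coefficient matrix $\mX$, which implies $I(\mathbf{T}(\mX);\dicidx) = 0$. Applying the chain rule for mutual information in two different orders to the triple $(\mY,\mathbf{T}(\mX),\dicidx)$ gives
\begin{equation}
I(\mY,\mathbf{T}(\mX);\dicidx) \;=\; I(\mathbf{T}(\mX);\dicidx) + I(\mY;\dicidx \mid \mathbf{T}(\mX)) \;=\; I(\mY;\dicidx) + I(\mathbf{T}(\mX);\dicidx \mid \mY). \nonumber
\end{equation}
Using $I(\mathbf{T}(\mX);\dicidx)=0$ and the non-negativity of $I(\mathbf{T}(\mX);\dicidx \mid \mY)$ immediately yields $I(\mY;\dicidx) \leq I(\mY;\dicidx \mid \mathbf{T}(\mX))$. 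Combining this with the Fano bound (and noting that replacing $I(\mY;\dicidx)$ by a larger quantity only weakens the lower bound on $\prob\{\hat\dicidx\neq\dicidx\}$ in the right direction) yields the claimed inequality.

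There is no real obstacle in this proof beyond being careful about the direction of the inequalities; the only subtle point is the independence argument, which is precisely where the hypothesis ``$\dicidx$ selected u.a.r.\ and statistically independent of the coefficients $\vx_{\obsidx}$'' is used. No probabilistic inequalities beyond Fano and the chain rule are needed, so the argument is essentially two lines once the MI domination is established.
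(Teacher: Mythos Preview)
Your proposal is correct and essentially identical to the paper's own proof: both apply Fano's inequality to get $\prob\{\hat{\dicidx}(\mY)\neq\dicidx\}\geq 1-\frac{I(\mY;\dicidx)+1}{\log_2(L)}$, and then use the chain rule together with $I(\mathbf{T}(\mX);\dicidx)=0$ and $I(\mathbf{T}(\mX);\dicidx\mid\mY)\geq 0$ to obtain $I(\mY;\dicidx)\leq I(\mY;\dicidx\mid\mathbf{T}(\mX))$.
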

\begin{proof}
According to Fano's inequality \cite[p. 38]{coverthomas}, 
\begin{equation} 
\prob \{ \hat{\dicidx}(\mathbf{Y}) \neq \dicidx  \} \geq \frac{H(\dicidx|\mathbf{Y}) \!-\!1}{\log_{2}(L)}. 
\end{equation} 
Combining this with the identity \cite[p. 21]{coverthomas}
\begin{equation}
I(\dicidx;\mathbf{Y}) = H(\dicidx) \!-\!  H(\dicidx|\mathbf{Y}), 
\end{equation} 
and the fact that $H(\dicidx) = \log_{2}(L)$, since $\dicidx$ is distributed uniformly over $[L]$, yields
\begin{equation} 
\label{equ_proof_fano_bound_112}
\prob \{ \hat{\dicidx}(\mathbf{Y}) \neq \dicidx  \} \geq {1- \frac{I(\dicidx;\mathbf{Y}) \!+\!1}{\log_{2}(L)}}. 
\end{equation} 

By the chain rule of MI \cite[Ch. 2]{coverthomas}
\begin{align}
\label{equ_decomp_mutual_inf_chain_rule}
I(\mathbf{Y}; \dicidx) & = I(\mathbf{Y},\mathbf{T}(\mathbf{X});\dicidx) \!-\! I( \dicidx ; \mathbf{T}(\mathbf{X}) | \mathbf{Y}) \nonumber \\[1mm]
    & = \condmiyandlgivenx + \underbrace{I(\dicidx;\mathbf{T}(\mathbf{X}))}_{=0}   \!-\! I( \dicidx ; \mathbf{T}(\mathbf{X}) | \mathbf{Y})  \nonumber \\ 
    & = \condmiyandlgivenx \!-\!  I( \dicidx ; \mathbf{T}(\mathbf{X}) | \mathbf{Y}). \\[-8mm] 
    \nonumber 
\vspace*{-1mm}
\end{align}
Here, we used $I(\dicidx;\mathbf{T}(\mathbf{X}))\!=\!0$, since the coefficients $\mathbf{X}$ and the index $\dicidx$ are independent. 
Since $I( \dicidx ; \mathbf{T}(\mathbf{X}) | \mathbf{Y}) \geq 0$ \cite[Ch. 2]{coverthomas}, we have from \eqref{equ_decomp_mutual_inf_chain_rule} that $I(\mathbf{Y}; \dicidx)  \leq \condmiyandlgivenx$. Thus, 
\begin{equation} 
\prob \{ \hat{\dicidx}(\mathbf{Y}) \neq \dicidx  \}  \stackrel{\eqref{equ_proof_fano_bound_112},\eqref{equ_decomp_mutual_inf_chain_rule}}{\geq} {1- \frac{\condmiyandlgivenx\!+\!1}{\log_{2}(L)}}.
\end{equation} 
\end{proof}

We also make use of Hoeffding's inequality \cite{Hoeffding1963}, which characterizes the large deviations of the sum 
of i.i.d. and bounded random variables. 
\begin{lemma}[Theorem 7.20 in \cite{RauhutFoucartCS}]
\label{lem_hoeffing}
Let $x_{r}$, $r \!\in\! [k]$, be a sequence of i.i.d. zero mean, bounded random variables, i.e., $|x_{r}| \leq a$ for some constant $a$. 
Then, 
\begin{equation}
\label{equ_bound_hoeffding}
\prob \bigg \{ \sum_{r \in [k]} x_{r} \geq t \bigg\} \leq  \exp \bigg( - \frac{t^{2}}{2ka^2} \bigg).  
\end{equation} 
\end{lemma}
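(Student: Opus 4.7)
The plan is to apply the classical Chernoff--Hoeffding argument, in three standard moves. Let $S \defeq \sum_{r \in [k]} x_r$. First, for any $\lambda > 0$, I would apply Markov's inequality to the nonnegative random variable $e^{\lambda S}$, yielding
\begin{equation}
\prob\{ S \geq t \} = \prob\{ e^{\lambda S} \geq e^{\lambda t} \} \leq e^{-\lambda t}\, \EX[e^{\lambda S}]. \nonumber
\end{equation}
Since the $x_r$ are i.i.d., the moment generating function factors as $\EX[e^{\lambda S}] = \prod_{r \in [k]} \EX[e^{\lambda x_r}] = (\EX[e^{\lambda x_1}])^k$.

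The key analytic step is then a Hoeffding-type bound on the single-variable MGF: for a zero-mean random variable $x_1$ with $|x_1| \leq a$, one has
\begin{equation}
\EX[e^{\lambda x_1}] \leq \exp\!\Big( \tfrac{\lambda^2 a^2}{2} \Big). \nonumber
\end{equation}
I would establish this by convexity of $u \mapsto e^{\lambda u}$: for $u \in [-a,a]$, write $u = \tfrac{a-u}{2a}(-a) + \tfrac{a+u}{2a}(a)$, so $e^{\lambda u} \leq \tfrac{a-u}{2a} e^{-\lambda a} + \tfrac{a+u}{2a} e^{\lambda a}$. Taking expectations and using $\EX[x_1] = 0$ gives $\EX[e^{\lambda x_1}] \leq \cosh(\lambda a)$. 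A straightforward Taylor comparison (comparing coefficients term by term in the series for $\cosh$ and $\exp(\lambda^2 a^2/2)$) yields $\cosh(\lambda a) \leq \exp(\lambda^2 a^2/2)$.

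Combining the two ingredients gives, for every $\lambda > 0$,
\begin{equation}
\prob\{ S \geq t \} \leq \exp\!\Big( -\lambda t + \tfrac{k \lambda^2 a^2}{2} \Big). \nonumber
\end{equation}
The final step is to optimize the right-hand side over $\lambda > 0$. Differentiating the exponent in $\lambda$ and setting the derivative to zero yields the minimizer $\lambda^{*} = t/(k a^2)$, and substituting back gives the claimed bound $\exp(-t^2/(2 k a^2))$.

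I do not anticipate a serious obstacle here: all three steps (Markov, the single-variable MGF estimate, and the quadratic optimization in $\lambda$) are standard. The only point requiring mild care is the convexity-based derivation of the MGF bound, where one must verify $\cosh(u) \leq \exp(u^2/2)$; this follows by comparing the power series $\cosh(u) = \sum_{n \geq 0} u^{2n}/(2n)!$ and $\exp(u^2/2) = \sum_{n \geq 0} u^{2n}/(2^n n!)$ term by term, using $(2n)! \geq 2^n n!$.
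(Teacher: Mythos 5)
Your proof is correct and is the canonical Chernoff--Hoeffding argument (Markov on $e^{\lambda S}$, the convexity/$\cosh$ bound on the single-variable MGF, and optimization over $\lambda>0$); the paper itself does not prove this lemma but simply cites it as Theorem 7.20 of \cite{RauhutFoucartCS}, where essentially this same argument appears. The only implicit assumption worth noting is $t>0$ (needed for the minimizer $\lambda^{*}=t/(ka^{2})$ to be admissible), which is how the lemma is used in the paper.
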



\renewcommand{\baselinestretch}{0.9}\normalsize\footnotesize

\bibliographystyle{IEEEtran}
\bibliography{/Users/ajung/work/LitAJ_ITC.bib,/Users/ajung/work/tf-zentral}

\end{document}